\pdfoutput=1
\newif\ifdraft \draftfalse
\draftfalse
\def\focs{0} 

\ifnum\focs=1
\documentclass[10pt, conference, compsocconf]{IEEEtran}
\IEEEoverridecommandlockouts
\else
\documentclass[11pt]{article}
\fi

\usepackage[utf8]{inputenc}
\usepackage[english]{babel}
\usepackage{xspace}
\usepackage{amsmath, amssymb, amsthm,amsfonts}
\usepackage{bbm}
\ifnum\focs=0
\usepackage{url}
\fi
\usepackage{verbatim}
\ifnum\focs=0
\usepackage{fullpage}
\fi
\usepackage[numbers]{natbib}
\usepackage{paralist}
\usepackage[linesnumbered]{algorithm2e}
\usepackage{color}
\ifnum\focs=0
\usepackage[]{hyperref}
\usepackage{bookmark}
\fi
\usepackage[capitalize]{cleveref}
\usepackage{ltxcmds}

\newtheorem{theorem}{Theorem}[section]
\newtheorem{defn}[theorem]{Definition}
\newtheorem{corollary}[theorem]{Corollary}
\newtheorem{lemma}[theorem]{Lemma}

\newtheorem{claim}[theorem]{Claim}

\title{Max-Information, Differential Privacy, and Post-Selection
  Hypothesis Testing\thanks{A preliminary version of this paper
    appeared in the proceedings of \emph{FOCS 2016} \cite{RogersRST16}.}}
\ifnum\focs=0
\author{Ryan Rogers\thanks{Department of Applied Mathematics and
    Computational Science, University of Pennsylvania. Email:
    \texttt{ryrogers@sas.upenn.edu}. Supported in part by a grant from
    the Sloan foundation and NSF grant CNS-1253345} \and Aaron
  Roth\thanks{Department of Computer and Information Sciences,
    University of Pennsylvania. Email:
    \texttt{aaroth@cis.upenn.edu}. Supported in part by a grant from
    the Sloan foundation, a Google Faculty Research Award, and NSF
    grants CNS-1513694 and CNS-1253345.} \and Adam
  Smith\thanks{Computer Science and Engineering Department,
   The  Pennsylvania State University. Email:
    \texttt{\{asmith,omthkkr\}@cse.psu.edu}. Supported in part by a grant from
    the Sloan foundation, a Google Faculty Research Award, and NSF
    grant IIS-1447700.} \and Om Thakkar\footnotemark[3] }
\else
\author{\IEEEauthorblockN{Ryan Rogers\IEEEauthorrefmark{1},
Aaron Roth\IEEEauthorrefmark{2},
Adam Smith\IEEEauthorrefmark{3} and
Om Thakkar\IEEEauthorrefmark{3}}
\IEEEauthorblockA{\IEEEauthorrefmark{1}Applied Math and Computational Sciences\\
University of Pennsylvania, Philadelphia, PA USA\\
E-mail: ryrogers@sas.upenn.edul}
\IEEEauthorblockA{\IEEEauthorrefmark{2}Computer and Information Science\\
University of Pennsylvania, Philadelphia, PA USA\\
E-mail: aaroth@cis.upenn.edu}
\IEEEauthorblockA{\IEEEauthorrefmark{3}Computer Science and Engineering Department\\
The Pennsylvania State University, University Park, PA USA\\
E-mail: \{asmith,omthkkr\}@cse.psu.edu}}


\fi

\newcommand{\eps}{\epsilon}
\newcommand\numberthis{\addtocounter{equation}{1}\tag{\theequation}}
\newcommand{\rynote}[1]{\ifdraft\textcolor{red}{[Ryan: #1]}\else\ignorespaces\fi}
\newcommand{\omnote}[1]{\ifdraft\textcolor{blue}{[Om: #1]}\else\ignorespaces\fi}
\newcommand{\arnote}[1]{\ifdraft\textcolor{cyan}{[Aaron: #1]}\else\ignorespaces\fi}

\newcommand\R{\mathbb{R}}
\newcommand{\cA}{\mathcal{A}}
\newcommand{\cB}{\mathcal{B}}
\newcommand{\cC}{\mathcal{C}}
\newcommand{\cD}{\mathcal{D}}
\newcommand{\cE}{\mathcal{E}}
\newcommand{\cF}{\mathcal{F}}
\newcommand{\cG}{\mathcal{G}}

\newcommand{\cM}{\mathcal{M}}

\newcommand{\cO}{\mathcal{O}}
\newcommand{\cP}{\mathcal{P}}

\newcommand{\cR}{\mathcal{R}}
\newcommand{\cS}{\mathcal{S}}
\newcommand{\cT}{\mathcal{T}}

\newcommand{\cX}{\mathcal{X}}
\newcommand{\cXax}{\cX(a,\bbx_1^{i-1})}
\newcommand{\cY}{\mathcal{Y}}
\newcommand{\cZ}{\mathcal{Z}}

\newcommand{\bC}{\mathbf{C}}
\newcommand{\bX}{\mathbf{X}}
\newcommand{\bY}{\mathbf{Y}}

\newcommand{\bba}{\mathbf{a}}
\newcommand{\bbb}{\mathbf{b}}
\newcommand{\bbc}{\mathbf{c}}

\newcommand{\bbx}{\mathbf{x}}
\newcommand{\bby}{\mathbf{y}}

\newcommand{\edi}{\approx_{\eps,\delta}}

\DeclareMathOperator*{\Expectation}{\mathbb{E}}
\newcommand{\Ex}[2]{\Expectation_{#1}\left[#2\right]}
\DeclareMathOperator*{\Probability}{\mathrm{Pr}}
\newcommand{\prob}[1]{\mathrm{Pr}\left[#1\right]}
\newcommand{\Prob}[2]{\Probability_{#1}\left[#2\right]}

\makeatletter

\renewcommand{\footnote}[2][\empty]{%
  \nolinebreak%
  \addtocounter{footnote}{+1}%
  \xdef\sfootnote@number{\arabic{footnote}}%
  \ltx@ifpackageloaded{hyperref}{
    \ifHy@hyperfootnotes
      \addtocounter{Hfootnote}{+1}%
        \global\let\Hy@saved@currentHref\@currentHref%
        \hyper@makecurrent{Hfootnote}%
        \global\let\Hy@footnote@currentHref\@currentHref%
        \global\let\@currentHref\Hy@saved@currentHref%
    \fi%
   }{
   }%
  \xdef\sfootnote@opt{#1}
  \xdef\sfootnote@arabic{\arabic{footnote}}
  \edef\sfootnote@formated{\thefootnote}
  \ifx\sfootnote@opt\empty
    \footnotetext{\label{fnr:\sfootnote@arabic}#2}%
  \else%
    \ltx@ifpackageloaded{hyperref}{
      \footnotetext[#1]{\phantomsection\label{fnr:\sfootnote@arabic}#2}%
     }{
      \footnotetext[#1]{\label{fnr:\sfootnote@arabic}#2}%
     }%
  \fi%
  \ltx@ifpackageloaded{hyperref}{
    \ifHy@hyperfootnotes
      \hbox {\@textsuperscript {\normalfont \ref{fnr:\sfootnote@arabic}}}%
    \else
      \hbox {\@textsuperscript {\normalfont \ref*{fnr:\sfootnote@arabic}}}%
    \fi%
  }{
    \hbox {\@textsuperscript {\normalfont \ref{fnr:\sfootnote@arabic}}}%
   }%
}
\begin{document}

\ifnum\focs=0
\begin{titlepage}
\thispagestyle{empty}
\fi

\maketitle

\begin{abstract}
In this paper, we initiate a principled study of how the
generalization properties of approximate differential privacy can be
used to perform adaptive hypothesis testing, while giving
statistically valid $p$-value corrections. We do this by observing
that the guarantees of algorithms with bounded approximate
\emph{max-information} are sufficient to correct the $p$-values of
adaptively chosen hypotheses, and then by proving that algorithms that
satisfy $(\epsilon,\delta)$-differential privacy have bounded
approximate max-information when their inputs are drawn from a product
distribution. 

This substantially extends the existing connection between differential privacy and max-information, which previously was only known to hold for (pure) $(\epsilon,0)$-differential privacy. It also extends our understanding of max-information as a partially unifying measure controlling the generalization properties of adaptive data analyses. We also show a lower bound, proving that (despite the strong composition properties of max-information), when data is drawn from a product distribution, $(\epsilon,\delta)$-differentially private algorithms can come \emph{first} in a composition with other algorithms satisfying max-information bounds, but not necessarily second if the composition is required  to  itself satisfy a nontrivial max-information bound. This, in particular, implies that the connection between $(\epsilon,\delta)$-differential privacy and max-information holds only for inputs drawn from particular distributions, unlike the connection between $(\epsilon,0)$-differential privacy and max-information.
\end{abstract}

\ifnum\focs=0
\newpage
\tableofcontents
\thispagestyle{empty}
\end{titlepage}
\fi
\section{Introduction}

\emph{Adaptive Data Analysis} refers to the reuse of data to perform analyses suggested by the outcomes of previously computed statistics on the same data. It is the common case when \emph{exploratory data analysis} and \emph{confirmatory data analysis} are mixed together, and both conducted on the same dataset. It models both well-defined, self-contained tasks, like selecting a subset of variables using the LASSO and then fitting a model to the selected variables, and also much harder-to-specify sequences of analyses, such as those that occur when the same dataset is shared and reused by multiple researchers.

Recently two lines of work have arisen, in statistics and computer science respectively, aimed at rigorous statistical understanding of adaptive data analysis. By and large, the goal in the statistical literature (often called ``selective'' or ``post-selection'' inference \cite{BBBZZ13}) is to derive valid hypothesis tests and tight confidence intervals around parameter values that arise from very specific analyses, such as LASSO model selection followed by least squares regression (see e.g. \cite{FST14,LSST13}). In contrast, the second line of work has aimed for generality (at the possible expense of giving tight application-specific bounds). This second literature imposes conditions on the algorithms performing each stage of the analysis, and makes no other assumptions on how, or in what sequence, the results are used by the data analyst.  Two algorithmic constraints that have recently been shown to guarantee that future analyses will be statistically valid are differential privacy \cite{DFHPRR15STOC,BNSSSU15} and bounded output description length, which are partially unified by a measure of information called \emph{max-information} \cite{DFHPRR15NIPS}. This paper falls into the second line of research---specifically, we extend the connection made in \cite{DFHPRR15STOC,BNSSSU15} between \emph{differential privacy} and the adaptive estimation of \emph{low-sensitivity queries} to a more general setting that includes  adaptive hypothesis testing with statistically valid $p$-values.

Our main technical contribution is a quantitatively tight connection between differential privacy and a \emph{max-information}. Max-information is a measure of correlation, similar to Shannon's mutual information, which allows bounding the change in the conditional probability of events relative to their a priori probability. Specifically, we extend a bound on the max-information of $(\eps,0)$-differentially private algorithms, due to \cite{DFHPRR15NIPS}, to the much larger class of $(\eps,\delta)$-differentially private algorithms.

\subsection{Post-Selection Hypothesis Testing}

To illustrate an application of our results, we consider  a simple model of one-sided hypothesis tests on real valued test statistics. Let $\cX$ denote a data domain.
A \emph{dataset} $\bbx$ consists of $n$ elements in $\cX$: $\bbx \in \cX^n$. A hypothesis test is defined by a \emph{test statistic} $\phi_i:\cX^n \rightarrow \mathbb{R}$, where we use $i$ to index different test statistics.
Given an output $a = \phi_i(\bbx)$, together with a distribution $\cP$ over the data domain, the $p$-value associated with $a$ and $\cP$ is simply the probability of observing a value of the test statistic that is at least as extreme as $a$, assuming the data was drawn independently from $\cP$: $p^{\cP}_i(a) = \Pr_{\bX \sim \cP^n}[\phi_i(\bX) \geq a]$.   Note that there may be multiple distributions $\cP$ over the data that induce the same distribution over the test statistic. With each test statistic $\phi_i$, we associate a \emph{null hypothesis} $H^{(i)}_0 \subseteq \Delta(\cX)$,\footnote{$\Delta(\cX)$ denotes the set of probability distributions over $\cX$.} which is simply a collection of such distributions. The $p$-values are always computed with respect to a distribution $\cP \in H^{(i)}_0$, and hence from now on, we elide the dependence on $\cP$ and simply write $p_i(a)$ to denote the $p$-value of a test statistic $\phi_i$ evaluated at $a$.

The goal of a hypothesis test is to \emph{reject the null hypothesis} if the data is not likely to have been generated from the proposed model, that is, if the underlying distribution from which the data were drawn was not  in $H^{(i)}_0$.  By definition, if $\bX$ truly is drawn from $\cP^n$ for some $\cP \in H^{(i)}_0$, then $p_i(\phi_i(\bX))$ is uniformly distributed over $[0,1]$.  A standard approach to hypothesis testing is to pick  a \emph{significance level} $\alpha \in [0,1]$ (often $\alpha = 0.05$), compute the value of the test statistic $a = \phi_i(\bX)$, and then \emph{reject} the null hypothesis if $p_i(a) \leq \alpha$. Under this procedure, the probability of incorrectly rejecting the null hypothesis---i.e., of rejecting the null hypothesis when $\bX \sim \cP^n$ for some $\cP \in H^{(i)}_0$---is at most $\alpha$.
An incorrect rejection of the null hypothesis is called a \emph{false discovery}.

The discussion so far presupposes that $\phi_i$, the test statistic in question, was chosen independently of the dataset $\bX$. Let $\cT$ denote a collection of test statistics, and suppose that we select a test statistic using a data-dependent selection procedure $\cA:\cX^n\rightarrow \cT$. If $\phi_i = \cA(\bX)$, then rejecting the null hypothesis when $p_i(\phi_i(\bX)) \leq \alpha$ may result in a false discovery with probability much larger than $\alpha$ (indeed, this kind of naive approach to \emph{post-selection} inference is suspected to be a primary culprit behind the prevalence of false discovery in empirical science \cite{GL14,WL16,SNS11}). This is because even if the null hypothesis is true ($\bX \sim \cP^n$ for some $\cP \in H^{(i)}_0$), the  distribution on $\bX$ \emph{conditioned on $\phi_i = \cA(\bX)$ having been selected} need not be $\cP^n$. Our goal in studying valid post-selection hypothesis testing is to find a \emph{valid} $p$-value correction function $\gamma:[0,1]\rightarrow [0,1]$, which we define as follows:

\begin{defn}[Valid $p$-value Correction Function] \label{defn:p_correct}
A function $\gamma:[0,1]\rightarrow [0,1]$ is a \emph{valid $p$-value correction function} for a selection procedure $\cA:\cX^n\rightarrow \cT$ if for every significance level $\alpha \in [0,1]$, the procedure:
\begin{compactenum}
\item Select a test statistic $\phi_i =\cA(\bX)$ using selection procedure $\cA$.
\item Reject the null hypothesis $H^{(i)}_0$ if $p_i(\phi_i(\bX)) \leq \gamma(\alpha)$.
\end{compactenum}
has probability at most $\alpha$ of resulting in a false discovery.
\end{defn}

Necessarily, to give a nontrivial correction function $\gamma$, we will need to assume that the selection procedure $\cA$ satisfies some useful property. In this paper, we focus on \emph{differential privacy}, which is a measure of algorithmic stability, and more generally, max-information, which is defined in the next subsection. Differential privacy is of particular interest because it is closed under post-processing and satisfies strong composition properties. This means that, if the test statistics in $\cT$ are themselves differentially private,
then the selection procedure $\cA$ can represent the decisions of a worst-case data analyst, who chooses which hypothesis tests to run in arbitrary ways as a function of the outcomes of previously selected tests.

Finally, we note that, despite the fact that previous works \cite{DFHPRR15STOC,BNSSSU15} are explicitly motivated by the problem of false discovery in empirical science, most of the technical results to date have been about estimating the means of adaptively chosen predicates on the data (i.e., answering \emph{statistical queries}) \cite{DFHPRR15STOC}, and more generally, estimating the values of low-sensitivity (i.e., Lipschitz continuous) functions on the dataset \cite{BNSSSU15,RZ15,WLF16}. These kinds of results do not apply to the problem of adaptively performing hypothesis tests while generating statistically valid $p$-values, because $p$-values are by definition not low-sensitivity statistics. See \ifnum\focs=1 the full version \else Appendix \ref{sec:sens_p} \fi for a detailed discussion.

There is one constraint on the selection procedure $\cA$ that does allow us to give nontrivial $p$-value corrections---that $\cA$ should have bounded max-information.  A condition of bounded mutual information has also been considered \cite{RZ15} to give $p$-value corrections - but as we discuss in \ifnum\focs=1 the full version\else Appendix \ref{sect:mutual-max} \fi, it is possible to obtain a strictly stronger guarantee by instead reasoning via max-information.
Max-information is a measure introduced by \cite{DFHPRR15NIPS}, which we discuss next.

\subsection{Max-Information (and $p$-values)}
Given two (arbitrarily correlated) random variables $X$, $Z$, we let
$X\otimes Z$ denote a random variable (in a different probability
space) obtained by drawing independent copies of $X$ and $Z$ from
their respective marginal distributions. We write $\log$ to denote logarithms base 2.

\begin{defn}[Max-Information \cite{DFHPRR15NIPS}]
Let $X$ and $Z$ be jointly distributed random variables over the domain $(\mathcal{X},\mathcal{Z})$. The max-information between $X$ and $Z$, denoted by $I_\infty (X;Z)$, is the minimal value of $k$ such that for every $x$ in the support of $X$ and $z$ in the support of $Z$, we have $\prob{X=x | Z=z} \leq 2^k \prob{X=x}$. Alternatively, $$I_\infty (X;Z) = \log \sup\limits_{(x,z) \in (\mathcal{X},\mathcal{Z})} \dfrac{\prob{(X,Z) = (x,z)} }{\prob{X \otimes Z = (x,z)} }.$$
The $\beta$-approximate max-information between $X$ and $Z$ is defined as $$I_\infty^\beta (X;Z) = \log \sup\limits_{\substack{\mathcal{O} \subseteq (\mathcal{X} \times \mathcal{Z}),\\ \prob{ (X,Z) \in \mathcal{O} } > \beta}} \dfrac{\prob{(X,Z) \in \mathcal{O}} - \beta}{\prob{X\otimes Z \in \mathcal{O}} }.$$
We   say that an algorithm $\cA: \cX^n \to \cY$ has $\beta$-approximate max-information of $k$, denoted as $I^\beta_\infty(\cA,n) \leq k$, if for every distribution $\cS$ over elements of $\cX^n$, we have $I^\beta_\infty(\bX;\cA(\bX)) \leq k$ when $\bX\sim \cS$. We say that an algorithm $\cA: \cX^n \to \cY$ has $\beta$-approximate max-information of $k$ over \emph{product distributions}, written $I^\beta_{\infty,P}(\cA, n) \leq k$, if for every distribution $\cP$ over $\cX$, we have $I^{\beta}_\infty(\bX; \cA(\bX)) \leq k$ when $\bX\sim \cP^n$.
\label{defn:maxinfo}
\end{defn}

It follows immediately from the definition  that if an algorithm has bounded max-information, then we can control the probability of ``bad events'' that arise as a result of the dependence of $\cA(\bX)$ on $\bX$: for every event $\cO$, we have  $\Pr[(\bX, \cA(\bX)) \in \cO] \leq 2^k\Pr[\bX \otimes \cA(\bX) \in \cO]+\beta.$
%
For example,  if $\cA$ is a data-dependent selection procedure for selecting a test statistic, we can derive a valid $p$-value correction function $\gamma$ as a function of a max-information bound on $\cA$:
\begin{theorem}
\label{thm:maxinfo-pvalues}
Let $\cA:\cX^n\rightarrow \cT$ be a data-dependent algorithm for selecting a test statistic such that $I^\beta_{\infty,P}(\cA, n) \leq k$. Then the following function $\gamma$ is a valid $p$-value correction function for $\cA$:
$$\gamma(\alpha) = \max\left(\frac{\alpha - \beta}{2^k},0\right).$$
\end{theorem}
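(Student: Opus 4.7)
The plan is to reduce the claim to a direct application of the max-information bound against a carefully chosen "bad event.'' Fix an arbitrary product distribution $\cP^n$ on $\cX^n$, and suppose we run the procedure in \Cref{defn:p_correct} with $\bX \sim \cP^n$. I would define the bad event
\[
\cO = \bigl\{(\bbx, \phi_i) \in \cX^n \times \cT : \cP \in H_0^{(i)} \text{ and } p_i(\phi_i(\bbx)) \leq \gamma(\alpha)\bigr\},
\]
so that a false discovery occurs precisely when $(\bX, \cA(\bX)) \in \cO$. The goal is then to show $\Pr[(\bX, \cA(\bX)) \in \cO] \leq \alpha$.

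Next I would invoke the max-information hypothesis $I^\beta_{\infty, P}(\cA, n) \leq k$ on the product distribution $\cP^n$. Unwinding the definition (and handling the edge case where $\Pr[(\bX,\cA(\bX)) \in \cO] \leq \beta$ trivially), this yields
\[
\Pr[(\bX, \cA(\bX)) \in \cO] \leq 2^k \cdot \Pr[\bX \otimes \cA(\bX) \in \cO] + \beta,
\]
so it suffices to control the independent-copy probability. For this, I would condition on the marginal of $\cA(\bX)$: for each $i$ with $\cP \in H_0^{(i)}$, an independent fresh sample $\bX' \sim \cP^n$ lies in the null of $\phi_i$, so by the definition of a $p$-value,
\[
\Pr_{\bX' \sim \cP^n}\bigl[p_i(\phi_i(\bX')) \leq \gamma(\alpha)\bigr] \leq \gamma(\alpha).
\]
Summing (or integrating) over the marginal of $\cA(\bX)$ and using that the indicator $\mathbf{1}[\cP \in H_0^{(i)}]$ only restricts the sum gives $\Pr[\bX \otimes \cA(\bX) \in \cO] \leq \gamma(\alpha)$.

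Finally, plugging in $\gamma(\alpha) = \max((\alpha - \beta)/2^k, 0)$ yields $\Pr[(\bX, \cA(\bX)) \in \cO] \leq 2^k \gamma(\alpha) + \beta \leq \alpha$, as desired. I don't foresee a major obstacle; the only subtle point is being careful that the null hypothesis is allowed to depend on the selected index $i$, which is why $\cO$ must incorporate the indicator $\mathbf{1}[\cP \in H_0^{(i)}]$ rather than fix a single null distribution ahead of time. Once that is baked into the event, the proof is essentially an application of the max-information guarantee followed by the standard fact that under the null, $p$-values are (stochastically) dominated by the uniform distribution on $[0,1]$.
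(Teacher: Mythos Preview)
Your proposal is correct and matches the paper's proof essentially line for line: the same bad event $\cO$, the same application of the max-information guarantee to bound $\Pr[(\bX,\cA(\bX))\in\cO]$ by $2^k\Pr[\bX\otimes\cA(\bX)\in\cO]+\beta$, and the same observation that under the null the independent-copy probability is at most $\gamma(\alpha)$. Your write-up is in fact slightly more explicit than the paper's about why $\Pr[\bX\otimes\cA(\bX)\in\cO]\leq\gamma(\alpha)$ (conditioning on the marginal of $\cA(\bX)$ and using stochastic domination of $p$-values), but the argument is the same.
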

\begin{proof}
Fix a distribution $\cP^n$ from which the dataset $\bX$ is drawn. If $\frac{\alpha - \beta}{2^k} \leq 0$, then the theorem is trivial, so assume otherwise.
Define $\cO \subset \cX^n \times \cT$ to be the event that $\cA$ selects a test statistic for which the null hypothesis is true, but its $p$-value is at most $\gamma(\alpha)$:
$$\cO = \{(\bbx, \phi_i) : \cP \in H_0^{(i)} \textrm{ and }p_i(\phi_i(\bbx)) \leq \gamma(\alpha)\}$$
Note that the event $\cO$ represents exactly those outcomes for which using $\gamma$ as a $p$-value correction function results in a false discovery.
Note also that, by definition of the null hypothesis, $\Pr[\bX \otimes \cA(\bX) \in \cO] \leq \gamma(\alpha) = \frac{\alpha-\beta}{2^k}$. Hence, by the guarantee that $I^\beta_{\infty,P}(\cA, n) \leq k$, we have that
$\Pr[(\bX, \cA(\bX) \in \cO)] $ is at most $2^k\cdot\left(\frac{\alpha-\beta}{2^k}\right) + \beta = \alpha$.
\end{proof}

Because of Theorem \ref{thm:maxinfo-pvalues}, we are interested in methods for usefully selecting test statistics using data dependent algorithms $\cA$ for which we can bound their max-information. It was shown in \cite{DFHPRR15NIPS} that algorithms which satisfy \emph{pure} differential privacy also have a guarantee of bounded max-information:
\begin{theorem}[Pure Differential Privacy and Max-Information \cite{DFHPRR15NIPS}] \label{thm:pureprivacyinfo}
Let $\cA:\cX^n\rightarrow \cY$ be an $(\epsilon,0)$-differentially private algorithm. Then for every $\beta \geq 0$:
\ifnum\focs=1
\begin{align*}
I_{\infty}(\cA, n) &\leq \log(e)\cdot \epsilon n \text{, and}\\
\\ I^\beta_{\infty,P}(\cA, n) &\leq \log(e)\cdot\left(\epsilon^2 n/2+\epsilon\sqrt{n\ln(2/\beta)/2}\right)
\end{align*}
\else
\[
I_{\infty}(\cA, n) \leq \log(e)\cdot \epsilon n \ \ \ \ \mathrm{and} \ \ \ \ I^\beta_{\infty,P}(\cA, n) \leq \log(e)\cdot\left(\epsilon^2 n/2+\epsilon\sqrt{n\ln(2/\beta)/2}\right)
\]
\fi

\end{theorem}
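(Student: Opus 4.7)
The plan is to prove the two bounds by different but related means: the worst-case bound follows directly from the group privacy property of pure differential privacy, while the product-distribution bound uses a concentration argument on the privacy loss random variable.

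For the worst-case bound, I would first observe that iterating the $(\epsilon,0)$-DP guarantee $n$ times along any path of single-coordinate changes yields the group-privacy statement: for any $\bbx, \bbx' \in \cX^n$ and any output $y$, $\Pr[\cA(\bbx) = y] \leq e^{\epsilon n} \Pr[\cA(\bbx') = y]$. Fix any distribution $\cS$ on $\cX^n$ and any $(x,y)$ in the support of $(\bX, \cA(\bX))$. Then
\[\frac{\Pr[\bX = x, \cA(\bX) = y]}{\Pr[\bX=x]\,\Pr[\cA(\bX)=y]} \;=\; \frac{\Pr[\cA(x)=y]}{\mathbb{E}_{\bX' \sim \cS}[\Pr[\cA(\bX')=y]]} \;\leq\; e^{\epsilon n},\]
since the denominator is a convex combination of quantities each at least $\Pr[\cA(x)=y]/e^{\epsilon n}$ by group privacy. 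Taking $\log_2$ gives the first bound.

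For the product-distribution bound, I would introduce the \emph{privacy loss} random variable
\[L(y,\bbx) \;=\; \log_2 \frac{\Pr[\cA(\bbx)=y]}{\Pr_{\bX' \sim \cP^n}[\cA(\bX')=y]},\]
and show that for $k = \log(e)\bigl(\epsilon^2 n/2 + \epsilon \sqrt{n \ln(2/\beta)/2}\bigr)$ one has $\Pr[L(\cA(\bX),\bX) > k] \leq \beta$ when $\bX \sim \cP^n$. This implies the approximate max-information bound by the standard splitting argument: for any event $\cO$, conditioning on whether $L \leq k$ gives $\Pr[(\bX,\cA(\bX)) \in \cO] \leq 2^k \Pr[\bX \otimes \cA(\bX) \in \cO] + \beta$. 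To establish the tail bound, I would express $L(\cA(\bX),\bX) = M_n$ as the terminal value of the Doob martingale $M_i = \mathbb{E}[L(\cA(\bX),\bX) \mid \bX_1, \ldots, \bX_i]$ (with $M_0 = \mathbb{E}[L]$), and then apply Azuma--Hoeffding. The martingale differences are bounded by $\epsilon \log(e)$, because $(\epsilon,0)$-DP implies that changing a single coordinate of $\bbx$ shifts $\log_2 \Pr[\cA(\bbx)=y]$ by at most $\epsilon \log(e)$, and this uniform pointwise bound transfers to the one-coordinate-resampled conditional expectations. The centering term $\mathbb{E}[L]$ is controlled by the sharp KL-divergence bound for pure DP, namely $D_{\mathrm{KL}}(\cA(\bbx) \| \cA(\bbx')) \leq \epsilon^2/2$ nats for neighboring $\bbx, \bbx'$, combined with a hybrid/chain-rule decomposition along $\bX$ versus $\bX'$ that accumulates one such contribution per coordinate, giving $\mathbb{E}[L] \leq \log(e) \cdot \epsilon^2 n / 2$.

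The main obstacle is the sharp control of $\mathbb{E}[L]$: obtaining a \emph{quadratic} (rather than merely linear) KL bound between neighboring pure-DP output distributions requires a careful calculation with the privacy-loss moment generating function (ultimately relying on inequalities of the form $e^{\epsilon}+e^{-\epsilon}-2 \leq \epsilon^2$), and propagating this per-coordinate bound to the full product structure requires care in setting up the hybrid between $\bX$ and an independent copy $\bX'$ (or, equivalently, invoking the chain rule for KL divergence on $(\bX,\cA(\bX))$ versus $\bX \otimes \cA(\bX)$). Once these ingredients are in hand, Azuma--Hoeffding yields the stated tail with the $\sqrt{n\ln(2/\beta)/2}$ term, completing the proof.
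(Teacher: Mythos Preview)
This theorem is quoted from \cite{DFHPRR15NIPS} and is not proved in the present paper, so there is no proof here to compare against directly. Your argument for the first bound via group privacy is correct and standard.

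For the product-distribution bound, your overall strategy---concentrate the log-likelihood ratio via Azuma, with increments bounded by $\eps\log(e)$ and centering controlled by a quadratic KL bound---is the right one, and is exactly the template that this paper extends to $(\eps,\delta)$-DP in Section~\ref{sec: max_info}. But your martingale is mis-specified. The Doob martingale $M_i = \Ex{}{L(\cA(\bX),\bX)\mid X_1,\ldots,X_i}$ terminates at $M_n = \Ex{}{L(\cA(\bX),\bX)\mid \bX}$, which still averages over $\cA$'s internal randomness; this is the \emph{conditional expectation} of $L$ given $\bX$ (a KL divergence), not the random variable $L(\cA(\bX),\bX)$ whose tail you must control in order to invoke \Cref{lem:boundmaxinfo}. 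The claimed identity ``$L(\cA(\bX),\bX)=M_n$'' is therefore false, and a tail bound on $M_n$ does not deliver what is needed.

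The fix is to include the output in the filtration. Use the chain-rule decomposition $Z(a,\bbx)=\sum_{i} Z_i(a,\bbx_1^i)$ of \eqref{eqn:ln_sum}--\eqref{eqn:Z_i} and apply Azuma conditioning on $(a,\bbx_1^{i-1})$. For pure DP this is clean: one checks that $Z_i(a,\bbx_1^i)=\log\bigl(\Pr[\cA=a\mid \bbx_1^{i}]/\Pr[\cA=a\mid \bbx_1^{i-1}]\bigr)\in[-\eps\log(e),\eps\log(e)]$ with probability one, and that $\Ex{}{Z_i\mid a,\bbx_1^{i-1}}$ is (up to the $\log(e)$ factor) the KL divergence between two $(\eps,0)$-indistinguishable distributions on $\cX$, hence $O(\eps^2)$. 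With this correction the rest of your plan goes through, and indeed this is precisely the structure of the paper's proof of \Cref{thm:main} specialized to $\delta=0$, where the truncations to the sets $\cF_i$ and $\cG_i(\hat\delta)$ become vacuous.
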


This connection is powerful, because there are a vast collection of data analyses  for which we have  differentially private algorithms--- including a growing literature on differentially private hypothesis tests  \cite{JS13,USF13,YFSU14,KS16,DSZ15,Shef15,WLK15,GLRV16}. However, there is an important gap: Theorem \ref{thm:pureprivacyinfo} holds only for \emph{pure} $(\epsilon,0)$-differential privacy, and not for approximate $(\epsilon,\delta)$-differential privacy. Many statistical analyses can be performed much more accurately subject to approximate differential privacy, and it can be easier to analyze private hypothesis tests that satisfy approximate differential privacy, because the approximate privacy constraint is amenable to perturbations using Gaussian noise (rather than Laplace noise) \cite{GLRV16}. Most importantly, for pure differential privacy, the privacy parameter $\epsilon$ degrades \emph{linearly} with the number of analyses performed, whereas for approximate differential privacy, $\epsilon$ need only degrade with the \emph{square root} of the number of analyses performed \cite{DRV10}. Hence, if the connection between max-information and differential privacy held also for approximate differential privacy, it would be possible to perform quadratically more adaptively chosen statistical tests without requiring a larger $p$-value correction factor.
\subsection{Our Results}
In addition to the framework just described for reasoning about adaptive hypothesis testing, our main technical contribution is to extend the connection between differential privacy and max-information to approximate differential privacy. We show the following (see Section \ref{sec: max_info} for a complete statement):
\newline\newline\textbf{Theorem \ref{thm:main} (Informal).} \textit{Let $\mathcal{A}: \cX^n \to \cY$ be an $(\eps,\delta)$-differentially
private algorithm.
Then,}
$$
I^\beta_{\infty,P}(\cA, n) = O\left(n\eps^2    + n \sqrt{\tfrac{\delta}{\eps}} \right) \quad
\text{ for } \beta =   O\left(n \sqrt{\tfrac{\delta}{\eps}}\right).
$$
\newline
It is worth noting several things. First, this bound nearly matches the bound for max-information over product distributions from Theorem \ref{thm:pureprivacyinfo}, except Theorem \ref{thm:main} extends the connection to the substantially more powerful class of $(\epsilon,\delta)$-differentially private algorithms. The bound is qualitatively tight in the sense that despite its generality, it can be used to nearly recover the tight bound on the generalization properties of differentially private mechanisms for answering low-sensitivity queries that was proven using a specialized analysis in \cite{BNSSSU15} (see \ifnum\focs=1 the full version \else Appendix \ref{app:compare} \fi for a comparison).

We also only prove a bound on the max-information for product distributions on the input, and not for all distributions (that is, we bound $I^\beta_{\infty,P}(\cA, n)$ and not $I^\beta_{\infty}(\cA, n)$). A bound for general distributions would be desirable, since such bounds compose gracefully  \cite{DFHPRR15NIPS}. Unfortunately, a bound for general distributions based solely on $(\eps,\delta)$-differential privacy is impossible: a construction of De \cite{De12} implies the existence of $(\epsilon,\delta)$-differentially private algorithms for which the max-information between input and output on arbitrary distributions is much larger than the bound in Theorem \ref{thm:main}.

One might nevertheless hope that bounds on the max-information under product distributions can be meaningfully composed. Our second main contribution is a negative result, showing that such bounds do not compose when algorithms are selected adaptively.
Specifically, we analyze the adaptive composition of two algorithms, the first of which has a small finite range (and hence, by \cite{DFHPRR15NIPS}, small bounded max-information), and the second of which is $(\epsilon,\delta)$-differentially private. We show that the composition of the two algorithms can be used to exactly recover the input dataset, and hence, the composition does not satisfy any nontrivial max-information bound.


\subsubsection{Further Interpretation}
Although our presentation thus far has been motivated by $p$-values, an algorithm $\cA$ with bounded max-information allows a data analyst to treat \emph{any event} $\cA(\bbx)$ that is a function of the output of the algorithm ``as if'' it is independent of the dataset $\bbx$, up to a correction factor determined by the max-information bound. Our results thus substantially broaden the class of analyses for which approximate differential privacy promises generalization guarantees---this class was previously limited to estimating the values of low-sensitivity numeric valued queries (and more generally, the outcomes of low-sensitivity optimization problems) \cite{BNSSSU15}.

Our result also further develops the extent to which max-information can be viewed as a unifying information theoretic measure controlling the generalization properties of adaptive data analysis. Dwork et al. \cite{DFHPRR15NIPS} previously showed that algorithms that satisfy bounded output description length, and algorithms that satisfy pure differential privacy (two constraints known individually to imply adaptive generalization guarantees), both have bounded max-information. Because bounded max-information satisfies strong composition properties, this connection implies that algorithms with bounded output description length and pure differentially private algorithms can be composed in arbitrary order and the resulting composition will still have strong generalization properties. Our result brings approximate differential privacy partially into this unifying framework.  In particular, \emph{when the data is drawn from a product distribution}, if an analysis that starts with an (arbitrary) approximate differentially private computation is followed by an arbitrary composition of algorithms with bounded max-information, then the resulting composition will satisfy a max-information bound. However, unlike with compositions consisting solely of bounded description length mechanisms and pure differentially private mechanisms, which can be composed in arbitrary order, in this case \emph{it is important that the approximate differentially private computation come first}. This is because, even if the dataset $\bbx$ is initially drawn from a product distribution, the conditional distribution on the data that results after observing the outcome of an initial computation need not be a product distribution any longer. In fact, the lower bound we prove in Section \ref{sec:lowerbound} is an explicit construction in which the composition of a bounded description length algorithm, followed by an approximate differentially private algorithm can be used to exactly reconstruct a dataset drawn from a product distribution (which can in turn be used to arbitrarily overfit that dataset).

Finally, we draw a connection between max-information and mutual information that allows us to improve on several prior results that dealt with mutual information \ifnum\focs=1 \cite{RZ15,MMPRTV11} \else \cite{MMPRTV11,RZ15}\fi.
 \rynote{Modified}
 We present the proof in \ifnum\focs=1 the full version. \else \Cref{sect:mutual-max}. \fi

\begin{lemma}\label{lem:mutual-max}
Let $\bX,\bY$ be a pair of discrete random variables defined on the same probability space, where $\bX$ takes values in a finite set $\Sigma$.
\begin{itemize}
\item If $I(\bX;\bY) \leq m$ then, for every $k >0$, we have $I_\infty^{\beta(k)}(\bX;\bY) \leq k$ where $\beta(k) = \tfrac{m + 0.54}{k}$.
\item If $I_\infty^{\beta}(\bX;\bY) \leq k$ for $k > 0$ and $0\leq \beta \leq \frac{3(1-2^{-k})}{20}$, then\footnote{The bound stated here is looser than that claimed in the conference version~\cite{RogersRST16}; we do not know a proof of the original claim.}
  $I(\bX;\bY) \leq 2k \ln(2)+
\dfrac{2\beta  \log(|\Sigma|/ 2\beta)}{1- 2^{-k}  }$.
\end{itemize}
\end{lemma}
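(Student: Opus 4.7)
The plan is to compare the joint law $p$ of $(\bX,\bY)$ with the product-of-marginals law $q(x,y) = \Pr[\bX=x]\Pr[\bY=y]$ via the pointwise log-density ratio $Z(x,y) = \log(p(x,y)/q(x,y))$, and to convert between $I$ and $I_\infty^\beta$ using tail bounds on $Z$.

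For the first implication, the key observation is that although $\Ex{p}{Z} = I(\bX;\bY) = m$ only, the \emph{negative} part of $Z$ always has expectation bounded by an absolute constant. Writing $Z_- = \max(-Z,0)$ and $A = \{(x,y) : q(x,y) > p(x,y)\}$, Jensen's inequality applied to the concave $\log$ yields
\[
\Ex{p}{Z_-} = \sum_A p(x,y)\log(q(x,y)/p(x,y)) \leq \Pr_p[A]\log(\Pr_q[A]/\Pr_p[A]) \leq \Pr_p[A]\log(1/\Pr_p[A]) \leq \tfrac{\log e}{e} < 0.54,
\]
using that $x\mapsto x\log(1/x)$ is maximized on $(0,1]$ at $x=1/e$. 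Hence $\Ex{p}{Z_+} = m + \Ex{p}{Z_-} \leq m + 0.54$, and Markov's inequality gives $\Pr_p[Z > k] \leq (m+0.54)/k = \beta(k)$. Letting $B = \{Z > k\}$, on $B^c$ we have $p(x,y) \leq 2^k q(x,y)$ pointwise, so for every event $\cO$, $\Pr_p[\cO] \leq \Pr_p[B] + \Pr_p[\cO \cap B^c] \leq \beta(k) + 2^k\Pr_q[\cO]$, which is exactly $I_\infty^{\beta(k)}(\bX;\bY) \leq k$.

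For the reverse implication I would threshold at $2k$: let $T = \{p > 2^{2k}q\}$. Applying the max-info hypothesis to $T$ gives $\Pr_p[T] \leq 2^k\Pr_q[T] + \beta$, while $p > 2^{2k}q$ pointwise on $T$ forces $\Pr_p[T] \geq 2^{2k}\Pr_q[T]$; solving yields $\Pr_p[T] \leq \tau := \beta/(1-2^{-k})$. Splitting $I(\bX;\bY) = \sum_{T^c} p\ln(p/q) + \sum_T p\ln(p/q)$, the first sum is at most $2k\ln 2$ since $\ln(p/q)\leq 2k\ln 2$ on $T^c$. For the tail, exploit the finite alphabet of $\bX$: because $p(x,y)\leq p(y)$, $p/q\leq 1/p(x)$, hence $\ln(p/q)\leq \ln(1/p(x))$. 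Grouping by $x$ and setting $\alpha_x = \Pr_p[T\cap\{\bX=x\}]$ (so $\alpha_x\leq p(x)$ and $\sum_x \alpha_x = \Pr_p[T]$), Jensen applied to $\ln$, combined with $\sum_x \alpha_x/p(x)\leq |\Sigma|$, gives
\[
\sum_T p\ln(p/q) \leq \sum_x \alpha_x \ln(1/p(x)) \leq \Pr_p[T]\,\ln(|\Sigma|/\Pr_p[T]).
\]
The hypothesis $\beta\leq 3(1-2^{-k})/20$ ensures $\tau\leq 3/20$, which lies in the increasing regime of $x\mapsto x\ln(|\Sigma|/x)$ (peak at $x=|\Sigma|/e$), so $\Pr_p[T]\ln(|\Sigma|/\Pr_p[T]) \leq \tau\ln(|\Sigma|/\tau)$; elementary rearrangement and base changes then absorb the slack into a factor of $2$ and the shift $\beta\to 2\beta$, delivering $\tfrac{2\beta\log(|\Sigma|/2\beta)}{1-2^{-k}}$.

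The main obstacle is the tail term in the second implication, since $\ln(p(\bX,\bY)/q(\bX,\bY))$ can be arbitrarily large on a small event. The crucial insight is that the finiteness of $\Sigma$ lets us dominate this ratio pointwise by $\ln(1/p(\bX))$, whose expectation is $H(\bX)\leq \ln|\Sigma|$, so a Jensen-type entropy bound restricted to the small set $T$ suffices. The small-$\beta$ hypothesis is used precisely to keep $\tau$ inside the monotone regime of $x\mapsto x\ln(|\Sigma|/x)$, permitting the clean substitution of $\Pr_p[T]$ by its upper bound $\tau$.
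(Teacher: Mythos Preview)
Your proof is correct. The first implication is essentially the paper's argument: both bound the negative part of $Z$ via Jensen and the universal $x\log(1/x)\leq (\log e)/e<0.54$ bound, then read off $\Pr_p[Z>k]\leq (m+0.54)/k$ (the paper partitions at $\{Z\leq k\}$ rather than $\{Z<0\}$ and phrases the last step via \cite[Lemma~18]{DFHPRR15NIPS}, but the content is identical).

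For the second implication your route is genuinely more elementary than the paper's. The paper first converts the max-information hypothesis into \emph{point-wise} $(2k\ln 2,\beta')$-indistinguishability via an external lemma of Kasiviswanathan--Smith, obtaining $\Pr_p[\cB]\leq \beta'=2\beta/(1-2^{-k})$ for $\cB=\{p>2^{2k}q\}$; you instead apply the max-information inequality directly to $T$ and combine it with the pointwise lower bound $\Pr_p[T]\geq 2^{2k}\Pr_q[T]$ to solve for $\Pr_p[T]\leq \beta/(1-2^{-k})$, which is self-contained and a factor~$2$ sharper. For the tail contribution, the paper passes through the conditional mutual information $I(\bX';\bY')\leq H(\bX')\leq\log|\Sigma|$ and then unwinds the conditioning; your pointwise bound $p(x,y)/q(x,y)\leq 1/p(x)$ together with Jensen on the weights $\alpha_x$ reaches the same $s\ln(|\Sigma|/s)$ estimate in one step. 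Both approaches land at $\Pr_p[T]\,\ln(|\Sigma|/\Pr_p[T])$ and then use monotonicity on $(0,|\Sigma|/e]$ (guaranteed by $\beta\leq 3(1-2^{-k})/20$) to replace $\Pr_p[T]$ by its upper bound. Your final ``absorb the slack'' step is indeed routine, but it would be worth recording the one-line check that $\ln 2\cdot\log(|\Sigma|/\beta)\leq 2\log(|\Sigma|/(2\beta))$ holds under the stated range of $\beta$ (it reduces to $\log(|\Sigma|/(2\beta))\geq \ln 2/(2-\ln 2)\approx 0.53$, which is immediate once $|\Sigma|\geq 2$; the case $|\Sigma|=1$ is trivial).
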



We are able to improve on the $p$-value correction function implicitly given in \cite{RZ15} given a mutual information bound, by first converting mutual information to a max-information bound and applying the $p$-value correction function from this paper. Our main theorem \Cref{thm:main} combined with \Cref{lem:mutual-max} also obtains an improved bound on the mutual information of approximate differentially private mechanisms from Proposition 4.4 in \cite{MMPRTV11}\footnote{Thanks to Salil Vadhan for pointing out to us this implication.}.  The following corollary improves the bound from \cite{MMPRTV11} in its dependence on $|\cX|$ from $|\cX|^2 \cdot \log(1/|\cX|)$ to $\log|\cX|$.
\begin{corollary}
Let $\cA:\cX^n \to \cT$ be $(\eps,\delta)$-differentially private and $\bX \sim \cP^n$.  If\footnote{The parameter ranges stated here are different than those claimed in the conference version~\cite{RogersRST16}; this modification is a result of the change in \Cref{lem:mutual-max}.} $\epsilon \in (0,1/2], \eps = \Omega \left( \frac{1}{\sqrt{n}} \right),$ and $\delta = O(\tfrac{\epsilon}{n^2})$, we then have:
\begin{align*}
\ifnum\focs=1 & I(\bX;\cA(\bX)) \\ & =  O \left(n\epsilon^2 +  n \sqrt{\tfrac{\delta}{\epsilon}} \left( 1 + \ln\left( \tfrac{1}{n} \sqrt{\tfrac{\epsilon}{\delta}}\right) + n\log|\cX| \right) \right).
\else
I(\bX;\cA(\bX)) =  O \left(n\epsilon^2 +  n \sqrt{\tfrac{\delta}{\epsilon}} \left( 1 + \ln\left( \tfrac{1}{n} \sqrt{\tfrac{\epsilon}{\delta}}\right) + n\log|\cX| \right) \right).
\fi
\end{align*}
\end{corollary}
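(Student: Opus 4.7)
The plan is to chain the two main technical ingredients of the paper: first apply \Cref{thm:main} to obtain an approximate max-information bound for $\cA$ under product distributions, and then convert it back to a mutual information bound via the second bullet of \Cref{lem:mutual-max}, applied with $\Sigma=\cX^n$ so that $\log|\Sigma|=n\log|\cX|$.

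Concretely, I would first invoke \Cref{thm:main} with the given $(\eps,\delta)$ and $\bX\sim\cP^n$ to obtain $I^\beta_{\infty,P}(\cA,n)\le k$ for $k=O(n\eps^2 + n\sqrt{\delta/\eps})$ and $\beta=O(n\sqrt{\delta/\eps})$. Since $\bX$ takes values in the finite set $\cX^n$ and $(\bX,\cA(\bX))$ is jointly discrete, the hypotheses of \Cref{lem:mutual-max} apply provided $\beta\le 3(1-2^{-k})/20$. The hypothesis $\eps=\Omega(1/\sqrt{n})$ ensures $k\ge n\eps^2=\Omega(1)$, so $1-2^{-k}=\Omega(1)$; the hypothesis $\delta=O(\eps/n^2)$ ensures $\beta=O(n\sqrt{\delta/\eps})=O(1)$, and by choosing the hidden constant in $\delta=O(\eps/n^2)$ small enough, the precondition of \Cref{lem:mutual-max} is met.

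Plugging these values into the second bullet of \Cref{lem:mutual-max} gives
\[
I(\bX;\cA(\bX))\;\le\;2k\ln 2\;+\;\frac{2\beta\,\log(|\cX|^n/2\beta)}{1-2^{-k}}.
\]
The first term contributes $O(n\eps^2+n\sqrt{\delta/\eps})$. For the second term, the denominator is $\Theta(1)$, and $\log(|\cX|^n/2\beta)=n\log|\cX|+\log(1/(2\beta))=n\log|\cX|+O\!\bigl(\log((1/n)\sqrt{\eps/\delta})\bigr)$, while the numerator's leading factor is $\beta=O(n\sqrt{\delta/\eps})$. Multiplying out and collecting the $n\sqrt{\delta/\eps}$ coming from $k$ into the ``$1$'' inside the parenthesis gives exactly the stated bound
$O\bigl(n\eps^2+n\sqrt{\delta/\eps}\bigl(1+\ln((1/n)\sqrt{\eps/\delta})+n\log|\cX|\bigr)\bigr)$.

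The main obstacle in this argument is not the algebra, which is routine once the two results are lined up, but rather the verification that the parameter regime guarantees both $\beta\le 3(1-2^{-k})/20$ and $1-2^{-k}=\Omega(1)$ simultaneously; this is precisely where the assumptions $\eps=\Omega(1/\sqrt{n})$ (needed to make $k$ bounded away from $0$) and $\delta=O(\eps/n^2)$ (needed to keep $\beta$ bounded by the required constant) enter. A secondary subtlety is that the post-conversion bound is tight enough to give the claimed improvement over \cite{MMPRTV11}: the improvement in the dependence on $|\cX|$ comes from the fact that $\log|\Sigma|=n\log|\cX|$ appears linearly in \Cref{lem:mutual-max}, multiplied by the small factor $\beta=O(n\sqrt{\delta/\eps})$, rather than through a union-bound style argument of the type used in \cite{MMPRTV11}.
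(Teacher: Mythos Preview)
Your approach is exactly the one the paper indicates: combine \Cref{thm:main} with the second bullet of \Cref{lem:mutual-max} applied with $\Sigma=\cX^n$, and the paper gives no argument beyond this. The only slip is that the $\beta$ coming from \Cref{thm:main} also carries an additive $e^{-\eps^2 n}$ term which you omitted; under the hypothesis $\eps=\Omega(1/\sqrt{n})$ this term is a bounded constant and is absorbed by the implicit constants when checking the precondition $\beta\le 3(1-2^{-k})/20$, which is the same level of detail the paper itself provides.
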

\rynote{end modifications}

\subsection{Other Related Work}
Differential Privacy is an algorithmic stability condition introduced by Dwork et al. \cite{DMNS06}. Its connection to adaptive data analysis was made by Dwork et al. \cite{DFHPRR15STOC} and both strengthened and generalized by Bassily et al. \cite{BNSSSU15}. Dwork et al. \cite{DFHPRR15NIPS} showed that algorithms with bounded description length outputs have similar guarantees for adaptive data analysis, and introduced the notion of max-information. Cummings et al. \cite{CLNRW16} give a third method---compression schemes---which can also guarantee validity in adaptive data analysis in the context of learning. Computational and information theoretic lower bounds for adaptively estimating means in this framework were proven by Hardt and Ullman \cite{HU14}, and Steinke and Ullman \cite{SU15}.

Russo and Zou \cite{RZ15} show how to bound the \emph{bias} of sub-gaussian statistics selected in a data-dependent manner, in terms of the mutual information between the selection procedure and the value of the statistics. In particular (using our terminology), they show how to give a valid $p$-value correction function in terms of this mutual information. In
\ifnum\focs=1 the full version \else Appendix \ref{sect:mutual-max}\fi, we demonstrate that if a bound on the mutual information between the dataset and the output of the selection procedure is known, then it is possible to substantially improve on the $p$-value correction function given by \cite{RZ15} by instead using the mutual information bound to prove a max-information bound on the selection procedure. \cite{WLF16} study adaptive data analysis in a similar framework to \cite{RZ15}, and give a minimax analysis in a restricted setting.

McGregor et al. \cite{MMPRTV11}, and De \cite{De12} also study (among other things) information theoretic bounds satisfied by differentially private algorithms. Together, they prove a result that is analogous to ours, for \emph{mutual information}---that while pure differentially private algorithms have bounded mutual information between their inputs and their outputs, a similar bound holds for (approximate) $(\epsilon,\delta)$-differentially private algorithms only if the data is drawn from a product distribution. We improve quantitatively on this bound.

\section{Preliminaries}
We will use the following vector notation throughout:
$\bbx = (x_1, \cdots, x_n)$, $\bbx_a^b = (x_a,x_{a+1}, \cdots,x_b)$, $  (\bbx_{-i},t) = (x_1,\cdots, x_{i-1}, t, x_{i+1}, \cdots, x_n).$
We denote the distribution of a random variable $X$ as $p(X)$.
In our analysis, jointly distributed random variables $(X,Z)$ will typically be of the form $(\bX,\cA(\bX))$ where $\bX \sim \cP^n$ is a dataset of $n$ elements sampled from domain $\cX$, and $\cA: \cX^n \to \cY$ is a (randomized) algorithm that maps a dataset to some range $\cY$.  We denote by $\cA(\bX)$ the random variable that results when $\cA$ is applied to a dataset $\bX \sim \cP^n$ (note that here, the randomness is both over the choice of dataset, and the internal coins of the algorithm).  When the input variable is understood, we will sometimes simply write $\cA$.

It will be useful in our analysis to compare the distributions of two random variables.  In the introduction, we define (approximate-) max-information, and we now give some other measures between distributions.  We first define indistinguishability, and then differential privacy.

\begin{defn}[Indistinguishability \cite{KS14}]
Two random variables $X,Y$ taking values in a set $\cD$ are $(\eps, \delta)$-indistinguishable, denoted $X \edi Y$, if for all $\mathcal{O} \subseteq \cD$,
\begin{align*}
\Prob{}{X \in \mathcal{O}} &\leq e^\eps \cdot\Prob{}{Y \in \mathcal{O}} + \delta \text{ and }\\ \Prob{}{Y \in \mathcal{O}} &\leq e^\eps \cdot\Prob{}{X \in \mathcal{O}} + \delta .
\end{align*}

\end{defn}

\begin{defn}[Point-wise indistinguishibility \cite{KS14}]
Two random variables $X,Z$ taking values in a set $\cD$ are point-wise $(\eps, \delta)$-indistinguishable if with probability at least $1-\delta$ over $ a \sim p(X)$:
$$
e^{-\epsilon} \Prob{}{Z = a} \leq \Prob{}{X = a} \leq e^{\epsilon} \Prob{}{Z=a}.
$$
\end{defn}

Before we define differential privacy, we say that two databases $\bbx,\bbx' \in \cX^n$ are \emph{neighboring} if they differ in at most one entry.  We now define differential privacy in terms of indistinguishability:

\begin{defn}[Differential Privacy \cite{DMNS06,DKMMN06}]
A randomized algorithm $\mathcal{A}: \cX^n \to \cY$ is $(\eps, \delta)$-differentially private if for all neighboring datasets $\bbx, \bbx' \in \cX^n$, we have $\cA(\bbx) \edi \cA(\bbx')$.
\end{defn}

In the appendix, we give several useful connections between these definitions along with other more widely known measures between distributions, e.g., KL-divergence, and total-variation distance.

\section{Max-Information for $(\eps,\delta)$-Differentially Private Algorithms} \label{sec: max_info}
In this section, we prove a bound on approximate max-information for $(\eps,\delta)$-differentially private algorithms over product distributions.

\begin{theorem}\label{thm:main}
Let $\mathcal{A}: \cX^n \to \cY$ be an $(\eps,\delta)$-differentially
private algorithm for $\eps \in (0,1/2]$ and $\delta \in \left (0,\eps \right )$.
For $\beta
=  e^{-\eps^2 n} + O\left(n \sqrt{\tfrac{\delta}{\eps}}\right)$, we have
\begin{align*}
I^\beta_{\infty,P}(\cA,n)
& = O\left(\eps^2 n   + n \sqrt{\tfrac{\delta}{\epsilon}} \right).
\end{align*}
\end{theorem}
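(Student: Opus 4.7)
The plan is to extend the martingale-based Azuma argument used for pure differential privacy (the second bound of Theorem \ref{thm:pureprivacyinfo}) to the approximate setting by introducing a ``good event'' that absorbs the $\delta$ failure probability of approximate DP into the parameter $\beta$. Fix any product distribution $\mathcal{P}^n$, let $\bX\sim\mathcal{P}^n$, and define the log-likelihood ratio
\[
g(\bbx,a) \;=\; \log\frac{\Pr[\cA(\bbx)=a]}{\Pr[\cA(\bX)=a]}.
\]
Bounding $\Pr[g(\bX,\cA(\bX))>k]\le\beta$ immediately yields $I^\beta_{\infty,P}(\cA,n)\le k$. I would telescope $g$ along the coordinates of $\bbx$ by setting
$Z_i(\bbx_1^i,a)=\log\mathbb{E}_{\bX_{i+1}^n}[\Pr[\cA(\bbx_1^i,\bX_{i+1}^n)=a]]$,
so that $g=\sum_{i=1}^n D_i$ with $D_i=Z_i-Z_{i-1}$. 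Jensen's inequality gives $\mathbb{E}[D_i\mid \mathcal{F}_{i-1}]\le 0$ when the $a$-coordinate is drawn from the marginal and the filtration $\mathcal{F}_{i-1}$ is generated by $\bX_1^{i-1}$, so $(D_i)$ is a supermartingale difference sequence — exactly the structure exploited in the pure DP case.

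The key new ingredient is a \emph{pointwise} translation of $(\eps,\delta)$-indistinguishability into a high-probability bound on each $D_i$. By post-processing, for every fixed $\bbx_1^{i-1}$ the map $x_i\mapsto \cA(\bbx_1^{i-1},x_i,\bX_{i+1}^n)$ is $(\eps,\delta)$-DP in $x_i$. A standard argument (essentially the Kasiviswanathan–Smith decomposition) shows that for any neighboring pair of distributions satisfying $(\eps,\delta)$-indistinguishability, the ``bad'' set $\{a:\text{log-ratio}>\eps'\}$ has probability at most $\delta/(1-e^{\eps-\eps'})$. Optimizing $\eps'$ against this failure probability yields, for each $i$, an event $G_i$ on which $|D_i|=O(\eps)$ with $\Pr[G_i^c\mid\mathcal{F}_{i-1}]=O(\sqrt{\delta/\eps})$. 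Setting $G=\bigcap_i G_i$ and union-bounding gives $\Pr[G^c]=O(n\sqrt{\delta/\eps})$, which is exactly the second term of $\beta$ in the theorem.

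Next, I would restrict to $G$ and apply a Bernstein/Azuma concentration inequality to $\tilde D_i = D_i\cdot\mathbf{1}[G_i]$. On $G$ we still have $\sum\tilde D_i = g$, and $|\tilde D_i|=O(\eps)$. The conditional expectations $\mathbb{E}[\tilde D_i\mid\mathcal{F}_{i-1}]$ differ from the (nonpositive) true conditional expectations by at most $O(\eps)\cdot\Pr[G_i^c\mid\mathcal{F}_{i-1}]=O(\eps)\cdot O(\sqrt{\delta/\eps})$, so the cumulative ``bias'' from the conditioning is $O(n\sqrt{\delta/\eps})$ — producing the additive second term in $k$. The conditional variance of each $\tilde D_i$ is $O(\eps^2)$ (using the Jensen/KL-type bound $\mathbb{E}[D_i^2\mid\mathcal{F}_{i-1}]=O(\eps^2)$ from the point-wise bound $|D_i|\le O(\eps)$ on $G_i$), so Bernstein's inequality gives
\[
\Pr\left[\sum_i \tilde D_i > c\left(\eps^2 n + n\sqrt{\tfrac{\delta}{\eps}}\right)\right] \;\le\; e^{-\Omega(\eps^2 n)}.
\]
Combining with the failure probability of $G$ gives $\Pr[g>k]\le e^{-\Omega(\eps^2 n)} + O(n\sqrt{\delta/\eps})$, which is exactly the claim.

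The main obstacle will be Step 2: establishing the right point-wise conversion so that the per-coordinate bound is $|D_i|=O(\eps)$ rather than something larger like $O(\sqrt{\eps/\delta})$, while maintaining failure probability $O(\sqrt{\delta/\eps})$. The tradeoff inherent in the $(\eps,\delta)$ approximate DP definition (one parameter can be traded for the other) must be balanced so that the failure mass enters $\beta$ additively and the on-$G$ bound enters $k$ additively, rather than either term contaminating the other. A secondary delicate point is correctly handling the conditioning on $G$: because restricting a supermartingale to a good event does not preserve the martingale property, one must carefully quantify the resulting bias (Step 4 above), and this bias contributes the $n\sqrt{\delta/\eps}$ summand in $k$ that distinguishes this bound from the pure DP case.
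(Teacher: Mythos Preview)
Your overall structure mirrors the paper's proof closely: the same telescoping decomposition (your $D_i$ equals the paper's $Z_i(a,\bbx_1^i)$ once you apply Bayes' rule to rewrite $\log\frac{\Pr[\cA=a\mid\bbx_1^i]}{\Pr[\cA=a\mid\bbx_1^{i-1}]}$ as $\log\frac{\Pr[X_i=x_i\mid a,\bbx_1^{i-1}]}{\Pr[X_i=x_i]}$), the same Kasiviswanathan--Smith pointwise conversion to define good events with failure probability $O(\sqrt{\delta/\eps})$ (the paper's $\cE_i,\cF_i,\cG_i$), the same truncation-then-Azuma endgame, and the same parameter choice $t=\Theta(\eps\sqrt{n})$.

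There is, however, a real gap in your martingale setup. You claim $\mathbb{E}[D_i\mid\mathcal{F}_{i-1}]\le 0$ by Jensen ``when the $a$-coordinate is drawn from the marginal.'' That Jensen argument is correct only under the \emph{product} measure $\bX\otimes\cA(\bX)$, where $a$ is independent of $\bX$. The probability you must bound, $\Pr[g(\bX,\cA(\bX))>k]$, is under the \emph{joint} measure. Under the joint measure, conditioning on $(a,\bbx_1^{i-1})$ gives
\[
\mathbb{E}\bigl[D_i\,\big|\,a,\bbx_1^{i-1}\bigr] \;=\; \sum_{x_i}\Pr[X_i=x_i\mid a,\bbx_1^{i-1}]\,\log\frac{\Pr[X_i=x_i\mid a,\bbx_1^{i-1}]}{\Pr[X_i=x_i]} \;=\; D_{KL}\bigl(X_i|_{a,\bbx_1^{i-1}}\,\big\|\,X_i\bigr)\;\ge\;0,
\]
which is nonnegative, not nonpositive. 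The paper's Lemma~\ref{lem:exp_Z} is devoted precisely to showing that this KL-type drift (after restricting to $\cF_i$) is $O(\eps^2+\hat\delta/\eps)$; this positive $O(\eps^2)$ drift, summed over $n$ steps, is the true source of the $n\eps^2$ term in $k$. In your sketch the $n\eps^2$ appears only as the Azuma deviation at $t=\eps\sqrt{n}$, which happens to be of the same order --- so the final numbers agree by coincidence of scales, but the argument as written does not establish the bound. To fix it you must either (i) include $a$ in the filtration and prove the $O(\eps^2)$ conditional-expectation bound as the paper does, or (ii) stay under the product measure and explicitly perform a change-of-measure step (e.g.\ bound $\mathbb{E}_{\mathrm{prod}}[2^{g}\,\mathbf{1}\{g>k\}]$ via the subgaussian MGF of $g$), which your proposal does not mention.
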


We will prove \Cref{thm:main} over the course of this section, using a number of lemmas. We first set up some notation.  We will sometimes abbreviate
conditional probabilities of the form $\Prob{}{\bX=\bbx|\cA = a}$ as
$\Prob{}{\bX=\bbx|a}$ when the random variables are clear from
context.  Further, for any $\bbx \in \cX^n$ and $ a \in \cY$, we define

\begin{align*}
Z(a,\bbx) & \stackrel{\text{def}}{=} \log \left( \dfrac{\Prob{}{\cA=a
            , \bX=\bbx}}{\Prob{}{\cA = a} \cdot \Prob{}{\bX=\bbx}}
            \right) \\
 & = \sum\limits_{i=1}^n \log \left( \dfrac{\Prob{}{X_i = x_i | a, \bbx_1^{i-1}}}{\Prob{}{X_i = x_i}} \right) \numberthis \label{eqn:ln_sum}
\end{align*}

If we can bound $Z(a,\bbx)$ with high probability over $(a,\bbx) \sim p(\cA(\bX),\bX)$, then we can bound the approximate max-information by using the following lemma:
\begin{lemma}[{\cite[Lemma 18]{DFHPRR15NIPS}}] If
$\Prob{}{Z(\cA(\bX),\bX) \geq k} \leq \beta$, then $I_\infty^\beta(\cA(\bX);\bX) \leq k$.
\label{lem:boundmaxinfo}
\end{lemma}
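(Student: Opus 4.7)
The plan is to unpack the definition of $\beta$-approximate max-information and split the supremum over events according to whether the pointwise log-ratio $Z(a,\bbx)$ exceeds the threshold $k$. Specifically, I want to show that for every measurable $\cO \subseteq \cX^n \times \cY$,
\[
\Prob{}{(\bX,\cA(\bX)) \in \cO} \leq 2^k \cdot \Prob{}{\bX \otimes \cA(\bX) \in \cO} + \beta,
\]
since this rearranges (for events with probability exceeding $\beta$) into the ratio bound defining $I_\infty^\beta(\cA(\bX);\bX) \leq k$.

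To get the above, I would fix an arbitrary event $\cO$ and partition it as $\cO = \cO_{\text{low}} \sqcup \cO_{\text{high}}$, where
\[
\cO_{\text{low}} = \{(\bbx,a) \in \cO : Z(a,\bbx) < k\}, \qquad \cO_{\text{high}} = \{(\bbx,a) \in \cO : Z(a,\bbx) \geq k\}.
\]
On $\cO_{\text{low}}$, the definition of $Z$ gives pointwise $\Prob{}{\bX = \bbx, \cA = a} = 2^{Z(a,\bbx)} \Prob{}{\bX = \bbx}\Prob{}{\cA = a} \leq 2^k \Prob{}{\bX = \bbx}\Prob{}{\cA = a}$. Summing over $\cO_{\text{low}}$ yields
\[
\Prob{}{(\bX,\cA(\bX)) \in \cO_{\text{low}}} \leq 2^k \cdot \Prob{}{\bX \otimes \cA(\bX) \in \cO_{\text{low}}} \leq 2^k \cdot \Prob{}{\bX \otimes \cA(\bX) \in \cO}.
\]
On $\cO_{\text{high}}$, I would simply observe that the event is contained in $\{Z(\cA(\bX),\bX) \geq k\}$, so by the hypothesis of the lemma, $\Prob{}{(\bX,\cA(\bX)) \in \cO_{\text{high}}} \leq \beta$.

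Adding the two bounds gives the displayed inequality, from which the max-information bound follows immediately by taking the supremum over events $\cO$ with $\Prob{}{(\bX,\cA(\bX)) \in \cO} > \beta$ and applying $\log_2$. I do not anticipate any real obstacle here --- the argument is a direct union-bound style splitting, and the real work in the paper lies elsewhere (namely, establishing that $Z(\cA(\bX),\bX)$ is tightly concentrated for approximate-DP algorithms over product distributions, which is what \Cref{thm:main} requires). The only small subtlety to keep in mind is that the partition must be measurable and that on $\cO_{\text{low}}$ the denominator $\Prob{}{\bX = \bbx}\Prob{}{\cA = a}$ is automatically positive wherever $Z$ is defined, so the factorization used in the pointwise bound is legitimate.
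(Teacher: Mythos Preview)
Your argument is correct and is precisely the standard proof of this lemma. Note that the paper does not actually prove \Cref{lem:boundmaxinfo}; it is quoted verbatim from \cite[Lemma~18]{DFHPRR15NIPS}, and your splitting of $\cO$ according to whether $Z(a,\bbx) \geq k$ is exactly the argument given there.
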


We next define each term in the sum of $Z(a,\bbx)$ as
\begin{equation}
Z_i(a,\bbx_1^{i}) \stackrel{\text{def}}{=} \log \dfrac{\Prob{}{X_i = x_i | a, \bbx_1^{i-1}}}{\Prob{}{X_i = x_i}}. \label{eqn:Z_i}
\end{equation}

The plan of the proof is simple: our goal is to apply Azuma's inequality \ifnum\focs=0 (\Cref{thm:azuma})\fi to the sum of the $Z_i$'s to achieve a bound on $Z$ with high probability. Applying Azuma's inequality requires both understanding the expectation of each term $Z_i(a,\bbx_1^{i})$, and being able to argue that each term is bounded. Unfortunately, in our case, the terms are not always bounded -- however, we will be able to show that they are bounded with high probability. This plan is somewhat complicated by the conditioning in the definition of $Z_i(a,\bbx_1^{i})$.

First, we argue that we can bound each $Z_i$ with high probability. This argument takes place over the course of Claims \ref{claim:bounded1}, \ref{claim:Ei}, \ref{claim:Fi} and \ref{claim:Gi}.
\begin{claim}
\label{claim:bounded1}
If $\cA$ is $(\eps,\delta)$-differentially private and $\bX \sim \cP^n$, then for each $i \in [n]$ and each prefix $\bbx_1^{i-1} \in \cX^{i-1}$, we have:
$$
(\cA,X_i)|_{\bbx_1^{i-1}} \edi \cA|_{\bbx_1^{i-1}} \otimes X_i .
$$
\end{claim}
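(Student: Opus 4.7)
The plan is a standard independent-coupling argument, leveraging both the product structure of the input distribution and differential privacy of $\cA$.

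First, I would unpack what each side of $(\cA,X_i)|_{\bbx_1^{i-1}} \edi \cA|_{\bbx_1^{i-1}} \otimes X_i$ actually looks like. Since $\bX \sim \cP^n$ is a product distribution, conditioning on $\bbx_1^{i-1}$ leaves $X_i, X_{i+1}, \ldots, X_n$ mutually independent copies of $\cP$; in particular the marginal of $X_i$ is unaffected by the conditioning. Hence the joint distribution $(\cA(\bX), X_i)|_{\bbx_1^{i-1}}$ can be realized as $(\cA(\bbx_1^{i-1}, X_i, X_{i+1}, \ldots, X_n),\, X_i)$ with $X_i, \ldots, X_n \sim \cP$ iid. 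The product $\cA|_{\bbx_1^{i-1}} \otimes X_i$ can be realized by drawing an additional independent copy $X_i' \sim \cP$ and outputting $(\cA(\bbx_1^{i-1}, X_i', X_{i+1}, \ldots, X_n),\, X_i)$, with $X_i, X_i', X_{i+1}, \ldots, X_n$ mutually independent.

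Next, I would apply differential privacy pointwise. For any fixed values of $X_i, X_i', X_{i+1}, \ldots, X_n$, the two datasets
$$(\bbx_1^{i-1}, X_i, X_{i+1}, \ldots, X_n) \quad \text{and} \quad (\bbx_1^{i-1}, X_i', X_{i+1}, \ldots, X_n)$$
are neighbors, so by $(\eps,\delta)$-differential privacy of $\cA$, for every event $E \subseteq \cY \times \cX$,
$$\Prob{\cA}{(\cA(\bbx_1^{i-1}, X_i, X_{i+1}, \ldots, X_n), X_i) \in E} \leq e^{\eps}\Prob{\cA}{(\cA(\bbx_1^{i-1}, X_i', X_{i+1}, \ldots, X_n), X_i) \in E} + \delta.$$
The key observation here is that $X_i$ appears only as a fixed second coordinate of the pair (it is not an input to $\cA$), so it acts as harmless auxiliary information that does not interfere with the DP guarantee. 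I would then take expectation over $X_i, X_i', X_{i+1}, \ldots, X_n$ to obtain the desired inequality on the marginals, and repeat the argument with the roles of $X_i$ and $X_i'$ swapped for the other direction.

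The entire argument is routine; the only thing to be careful about is making sure the coupling produces the correct marginals (which is where the product structure of $\cP^n$ is essential) and noting that DP applies pointwise with $X_i$ carried along as auxiliary output. There is no real obstacle once the coupling is set up correctly.
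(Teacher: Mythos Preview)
Your argument is correct and is essentially the same as the paper's proof: the paper writes out the sums over $x_i$ and an auxiliary value $t_i$ explicitly (your $X_i'$), applies differential privacy to compare $\cA|_{\bbx_1^{i-1},x_i}$ with $\cA|_{\bbx_1^{i-1},t_i}$, and then averages, which is exactly your coupling-and-expectation step phrased in different language.
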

\ifnum\focs=0
\begin{proof}
Fix any set $\cO \subseteq \cY \times \cX$ and prefix $\bbx_1^{i-1} \in \cX^{i-1}$. We then define the set $\cO_{x_i} = \{a \in \cY : (a,x_i) \in \cO \}$.  Now, we have that:
\begin{align*}
& \prob{(\cA(\bX),X_i )  \in\cO  | \bbx_1^{i-1}}  = \sum_{x_i \in \cX}\prob{X_i = x_i} \prob{\cA(\bX) \in \cO_{x_i}  | \bbx_1^{i-1}, x_i} \\
& \qquad \leq \sum_{x_i \in \cX}\prob{X_i = x_i}\left( e^{\eps} \prob{\cA(\bX) \in \cO_{x_i} | \bbx_1^{i-1}, t_i} + \delta \right) \qquad \forall t_i \in \cX
\end{align*}
Thus, we can multiply both sides of the inequality by $\prob{X_i = t_i}$ and sum over all $t_i \in \cX$ to get:
\begin{align*}
&\qquad  \prob{(\cA(\bX),X_i )  \in \cO | \bbx_1^{i-1}}   =  \sum_{t_i \in \cX} \prob{X_i = t_i} \prob{(\cA(\bX),X_i) \in \cO | \bbx_1^{i-1}} \\
& \qquad \leq \sum_{x_i \in \cX}\sum_{t_i \in \cX} \prob{X_i=x_i} \prob{X_i = t_i}\left(  e^\eps \prob{\cA(\bX) \in \cO_{x_i} | \bbx_1^{i-1}, t_i} + \delta\right) \\
& \qquad \leq e^\eps  \sum_{x_i \in \cX}\prob{X_i=x_i} \prob{\cA(\bX) \in \cO_{x_i} | \bbx_1^{i-1}}+ \delta =  e^\eps \prob{\cA(\bX) \otimes X_i \in \cO | \bbx_1^{i-1}}+ \delta.
\end{align*}
We follow a similar argument to prove:
$$\prob{\cA(\bX) \otimes X_i   \in \cO  | \bbx_1^{i-1}}  \leq e^\eps  \prob{(\cA(\bX),X_i) \in \cO | \bbx_1^{i-1}}+ \delta.$$
\end{proof}
\fi
We now define the following set of ``good outcomes and prefixes'' for any $\hat \delta >0$:
\begin{equation}
\cE_i(\hat\delta) = \left\{ (a,\bbx_1^{i-1}):X_i \approx_{3\eps,\hat{\delta}} X_i|_{a, \bbx_1^{i-1}}\right\}
\label{eq:Ei}
\end{equation}

We use a technical lemma from \cite{KS14} (stated in the \ifnum\focs=1 full version \else appendix, \Cref{lem:conditioning}\fi), and \Cref{claim:bounded1} to derive the following result:
\begin{claim}
If $\cA$ is $(\eps,\delta)$-differentially private and $\bX \sim \cP^n$, then for each $i \in [n]$ and each prefix $\bbx_1^{i-1}\in \cX^{i-1}$ we have for $\hat\delta>0$ and $\delta' \stackrel{\text{def}}{=}
\tfrac{2\delta}{\hat\delta} + \tfrac{2\delta}{1-e^{-\eps}}$:
$$
\Prob{}{(\cA,\bX_1^{i-1}) \in \cE_i(\hat\delta) | \bbx_1^{i-1}} \geq 1-\delta'.
$$
\label{claim:Ei}
\end{claim}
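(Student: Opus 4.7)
The plan is to reduce this claim to an application of the Kasiviswanathan--Smith conditioning lemma (referenced as \Cref{lem:conditioning}), using \Cref{claim:bounded1} as the input. Fix the prefix $\bbx_1^{i-1}$ and work inside the conditional probability space where $\bX_1^{i-1}=\bbx_1^{i-1}$; because $\bX \sim \cP^n$ is a product distribution, $X_i$ is independent of $\bX_1^{i-1}$, so the marginal of $X_i$ under this conditioning is unchanged. By \Cref{claim:bounded1}, the joint random variable $(\cA,X_i)$ restricted to this prefix is $(\eps,\delta)$-indistinguishable from the product $\cA|_{\bbx_1^{i-1}} \otimes X_i$.

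Next, I would invoke the KS-style conditioning lemma in two conceptual steps. First, standard arguments (which underlie \Cref{lem:conditioning}) convert joint $(\eps,\delta)$-indistinguishability into \emph{point-wise} $(2\eps, \tfrac{2\delta}{1-e^{-\eps}})$-indistinguishability: the multiplicative $e^\eps$ slack is inflated to $e^{2\eps}$ and the additive slack becomes a probability mass bound on the pairs $(a,x_i)$ whose density ratio deviates from the $e^{\pm 2\eps}$ range, using a Markov-type argument over the ``bad'' set. This contributes the $\tfrac{2\delta}{1-e^{-\eps}}$ term in $\delta'$.

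Second, I would convert point-wise closeness of the joint distributions to conditional closeness of $X_i$ given $a$. Concretely, for each outcome $a$ in the ``good'' set, the ratio
\[
\frac{\Prob{}{X_i = x_i \mid a, \bbx_1^{i-1}}}{\Prob{}{X_i = x_i}} = \frac{\Prob{}{(\cA,X_i)=(a,x_i) \mid \bbx_1^{i-1}}}{\Prob{}{\cA=a \mid \bbx_1^{i-1}} \cdot \Prob{}{X_i=x_i}}
\]
lies in $[e^{-2\eps}, e^{2\eps}]$ whenever both numerator and denominator behave nicely. A further Markov argument over the $a$'s where the denominator $\Pr[\cA=a \mid \bbx_1^{i-1}]$ is small allows us to absorb the leftover additive $\delta$-mass into an additional $\tfrac{2\delta}{\hat\delta}$ probability of failure, at the cost of inflating the multiplicative slack from $2\eps$ to $3\eps$ and the additive slack to $\hat\delta$. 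Summing the two failure probabilities yields exactly $\delta' = \tfrac{2\delta}{\hat\delta} + \tfrac{2\delta}{1-e^{-\eps}}$, and the good event is precisely $\cE_i(\hat\delta)$ as defined in \eqref{eq:Ei}.

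The main obstacle is bookkeeping the two Markov-type arguments carefully so that the two sources of failure do not double-count, and matching the exact constants stated in the claim. Because both steps are encapsulated in \Cref{lem:conditioning}, once that lemma is in hand the proof reduces to applying it to the distributions furnished by \Cref{claim:bounded1}, with the identification that conditioning on $\bbx_1^{i-1}$ preserves the marginal of $X_i$ thanks to the product structure of $\cP^n$.
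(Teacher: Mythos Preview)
Your proposal is correct and matches the paper's approach exactly: apply \Cref{lem:conditioning} to the $(\eps,\delta)$-indistinguishability $(\cA,X_i)|_{\bbx_1^{i-1}} \edi \cA|_{\bbx_1^{i-1}} \otimes X_i$ furnished by \Cref{claim:bounded1}, noting that conditioning on $\cA=a$ in the product pair leaves the marginal of $X_i$ unchanged. The paper's proof is a one-line invocation of \Cref{lem:conditioning}; your additional unpacking of that lemma's internals (the point-wise conversion contributing $\tfrac{2\delta}{1-e^{-\eps}}$ and the Markov argument over $a$'s contributing $\tfrac{2\delta}{\hat\delta}$) is accurate commentary but unnecessary for the claim itself, since the lemma is cited as a black box.
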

\ifnum\focs=0
\begin{proof}
This follows directly from \Cref{lem:conditioning}:
\begin{align*}
\Prob{}{(\cA,\bX_1^{i-1}) \in \cE_i(\hat\delta) | \bbx_1^{i-1}} = \Prob{ }{X_i \approx_{3\eps,\hat\delta} X_i |_{\cA,\bbx_1^{i-1}} | \bbx_1^{i-1}} = \Prob{a \sim p\left( \cA|_{\bbx_1^{i-1}}\right) }{X_i \approx_{3\eps,\hat\delta} X_i |_{a,\bbx_1^{i-1}}}  \geq 1- \delta'
\end{align*}
\end{proof}
\fi

We now define the set of outcome/dataset prefix pairs for which the quantities $Z_i$ are not large:
\begin{equation}
\cF_i = \left\{(a,\bbx_1^{i}) : | Z_i(a,\bbx_1^i) | \leq 6 \eps \right\}.
\label{eq:Fi}
\end{equation}

Using another technical lemma from \cite{KS14} (which we state in \ifnum\focs=1 the full version\else \Cref{lem:prelims} in the appendix\fi), we prove:
\begin{claim}
Given $(a,\bbx_1^{i-1}) \in \cE_i(\hat\delta)$ and $\delta'' \stackrel{\text{def}}{=}
\tfrac{2\hat\delta}{1-e^{-3\eps}}$ we have:
$$
\Prob{}{(\cA,\bX_1^i) \in \cF_i | a,\bbx_1^{i-1}} \geq 1- \delta'' .
$$
\label{claim:Fi}
\end{claim}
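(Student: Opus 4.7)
The plan is to convert the $(3\eps, \hat\delta)$-indistinguishability guarantee given by $(a, \bbx_1^{i-1}) \in \cE_i(\hat\delta)$ into a point-wise indistinguishability statement on individual probabilities, and then read the bound on $|Z_i|$ off its definition.

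Specifically, I would invoke the technical lemma from \cite{KS14} (referenced as \Cref{lem:prelims} in the appendix), which converts $(\eps', \delta')$-indistinguishability between two discrete distributions into point-wise indistinguishability with multiplicative slack $2\eps'$ and additive failure probability $\tfrac{2\delta'}{1 - e^{-\eps'}}$. Applied with $\eps' = 3\eps$ and $\delta' = \hat\delta$ to the pair $X_i$ and $X_i \mid a, \bbx_1^{i-1}$, this yields that with probability at least $1 - \tfrac{2\hat\delta}{1-e^{-3\eps}} = 1 - \delta''$ over $x_i \sim p(X_i \mid a, \bbx_1^{i-1})$, the ratio $\tfrac{\Pr[X_i = x_i \mid a, \bbx_1^{i-1}]}{\Pr[X_i = x_i]}$ lies in $[e^{-6\eps}, e^{6\eps}]$. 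Recalling the definition \eqref{eqn:Z_i} of $Z_i$ as the logarithm of precisely this ratio, the bracket condition is equivalent to $|Z_i(a, \bbx_1^i)| \leq 6\eps$ on this high-probability event.

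To match the statement as written, I would observe that conditioning on $(a, \bbx_1^{i-1})$ fixes the first two coordinates of $(\cA, \bX_1^{i-1}, X_i)$, so the conditional law of $(\cA, \bX_1^i)$ given $(a, \bbx_1^{i-1})$ is concentrated on triples $(a, \bbx_1^{i-1}, x_i)$ with $x_i$ drawn from $p(X_i \mid a, \bbx_1^{i-1})$. Consequently $\Prob{}{(\cA,\bX_1^i) \in \cF_i \mid a,\bbx_1^{i-1}}$ is exactly the probability, under this conditional, that $|Z_i(a, \bbx_1^i)| \leq 6\eps$, which the previous paragraph bounds below by $1 - \delta''$. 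I do not foresee any real obstacle: the entire argument is a direct instantiation of the KS14 conversion lemma, with the only care required being the bookkeeping of which distribution the randomness is taken over in the conditional probability.
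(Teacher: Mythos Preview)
Your proposal is correct and matches the paper's proof essentially line-for-line: the paper also invokes membership in $\cE_i(\hat\delta)$ to get $(3\eps,\hat\delta)$-indistinguishability of $X_i$ and $X_i|_{a,\bbx_1^{i-1}}$, applies the KS14 conversion (\Cref{lem:prelims}) to obtain point-wise $(6\eps,\delta'')$-indistinguishability, and then reads off the bound on $Z_i$ from its definition, with the same bookkeeping that the conditional probability reduces to a draw $x_i \sim p(X_i|_{a,\bbx_1^{i-1}})$.
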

\ifnum\focs=0
\begin{proof}
 Since $(a,\bbx_1^{i-1}) \in \cE_i(\hat\delta)$, we know that $X_i$ is $(3 \eps, \hat \delta)$-indistinguishable from $X_i|_{a,\bbx_1^{i-1}}$.  Using \Cref{lem:prelims}, we know that $X_i$ and $X_i|_{a,\bbx_1^{i-1}}$ are point-wise $\left(6 \epsilon, \delta'' \right)$-indistinguishable.  Thus, by definition of $\cF_i$ and $Z_i$, we have:
 \begin{align*}
 & \prob{(\cA,\bX_1^i) \in \cF_i | a, \bbx_1^{i-1}} = \prob{Z_i(\cA, \bX_1^i) \leq 6 \eps | a,\bbx_1^{i-1} } \\
 & \qquad = \Prob{x_i \sim p\left( X_i | _{a,\bbx_1^{i-1}} \right))}{\log\left( \tfrac{\prob{X_i = x_i | a, \bbx_1^{i-1}}}{\prob{X_i = x_i}} \right) \leq 6 \eps } \geq 1- \delta''
 \end{align*}
\end{proof}
\fi
We now define the ``good" tuples of outcomes and databases as
\begin{align}
\cG_i(\hat\delta) = \left\{(a,\bbx_1^i): (a,\bbx_1^{i-1}) \in \cE_i(\hat\delta) \quad \& \quad (a,\bbx_1^i)\in \cF_i \right\},
\label{eq:G}\\
 \cG_{\leq i}(\hat\delta)   = \left\{(a,\bbx_1^i) : (a,x_1) \in \cG_1(\hat\delta), \cdots, (a,\bbx_1^i) \in \cG_i(\hat\delta) \right\} \label{eq:Gvect}
\end{align}

\begin{claim}
If $\cA$ is $(\eps,\delta)$-differentially private and $\bX \sim \cP^n$, then
$$
\prob{(\cA,\bX_1^i) \in \cG_i(\hat\delta)} \geq 1-\delta' -  \delta''
$$
for $\delta'$ and $\delta''$ given in \Cref{claim:Ei} and \Cref{claim:Fi}, respectively.
\label{claim:Gi}
\end{claim}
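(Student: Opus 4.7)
The plan is to use a union bound on the complement of $\cG_i(\hat\delta)$, combining the marginal bound from Claim \ref{claim:Ei} with the conditional bound from Claim \ref{claim:Fi} via the law of total probability.

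First, I would observe that by definition (see \eqref{eq:G}), the bad event $(a, \bbx_1^i) \notin \cG_i(\hat\delta)$ can be split into two cases, giving the inclusion
\[
\{(\cA,\bX_1^i) \notin \cG_i(\hat\delta)\} \subseteq \{(\cA,\bX_1^{i-1}) \notin \cE_i(\hat\delta)\} \cup \{(\cA,\bX_1^{i-1}) \in \cE_i(\hat\delta) \text{ and } (\cA,\bX_1^i) \notin \cF_i\}.
\]
It then suffices, by the union bound, to show that each of these events has probability at most $\delta'$ and $\delta''$ respectively.

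For the first event, I would marginalize over the prefix: since Claim \ref{claim:Ei} provides the bound $\Prob{}{(\cA,\bX_1^{i-1}) \notin \cE_i(\hat\delta) \mid \bbx_1^{i-1}} \leq \delta'$ uniformly in the conditioning prefix $\bbx_1^{i-1}$, averaging over $\bbx_1^{i-1} \sim p(\bX_1^{i-1})$ gives $\Prob{}{(\cA,\bX_1^{i-1}) \notin \cE_i(\hat\delta)} \leq \delta'$. For the second event, I would condition on the pair $(a,\bbx_1^{i-1})$ lying in $\cE_i(\hat\delta)$: Claim \ref{claim:Fi} bounds $\Prob{}{(\cA,\bX_1^i) \notin \cF_i \mid a,\bbx_1^{i-1}} \leq \delta''$ whenever $(a,\bbx_1^{i-1}) \in \cE_i(\hat\delta)$, so again averaging over such pairs yields probability at most $\delta''$.

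Adding these two bounds gives $\Prob{}{(\cA,\bX_1^i) \notin \cG_i(\hat\delta)} \leq \delta' + \delta''$, which is equivalent to the claimed lower bound on $\Prob{}{(\cA,\bX_1^i) \in \cG_i(\hat\delta)}$. There is no real obstacle here: the argument is purely a bookkeeping exercise using the law of total probability, with the substantive privacy content already packaged in Claims \ref{claim:Ei} and \ref{claim:Fi}. The only thing to be slightly careful about is ensuring that the conditional bound in Claim \ref{claim:Fi} is used correctly, since it is stated pointwise for pairs $(a,\bbx_1^{i-1}) \in \cE_i(\hat\delta)$ rather than as a statement about a random event, but this is handled simply by integrating against the joint distribution restricted to $\cE_i(\hat\delta)$.
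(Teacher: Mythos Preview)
Your proposal is correct and takes essentially the same approach as the paper: both arguments bound the complement of $\cG_i(\hat\delta)$ by combining the marginal bound on $\cE_i(\hat\delta)$ from Claim~\ref{claim:Ei} with the conditional bound on $\cF_i$ from Claim~\ref{claim:Fi}. The paper phrases it as computing $1 - \Pr[(\cA,\bX_1^{i-1}) \in \cE_i(\hat\delta) \text{ and } (\cA,\bX_1^i) \in \cF_i]$ directly (obtaining the slightly sharper $\delta' + \delta'' - \delta'\delta''$ along the way), whereas you split the bad event into the disjoint union $\{\text{not in }\cE_i\} \cup \{\text{in }\cE_i \text{ and not in }\cF_i\}$ and apply the union bound---but this is the same bookkeeping.
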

\ifnum\focs=0
\begin{proof}
We have:
\begin{align*}
\prob{(\cA,\bX_1^i) \notin\cG_i(\hat\delta)}& = \prob{(\cA,\bX_1^{i-1}) \notin \cE_i(\hat\delta) \qquad \text{or} \qquad (\cA,\bX_1^i) \notin \cF_i)}\\
& =  1-\prob{(\cA,\bX_1^{i-1}) \in \cE_i(\hat\delta) \qquad \text{and} \qquad (\cA,\bX_1^i) \in \cF_i)}  \\
& = 1 - \sum_{(a,\bbx_1^{i-1}) \in \cE_i(\hat\delta)} \prob{(\cA,\bX_1^{i-1}) = (a,\bbx_1^{i-1})} \prob{(\cA,\bX_1^i) \in \cF_i | a,\bbx_1^{i-1}} \\
& \leq 1 - \sum_{(a,\bbx_1^{i-1}) \in \cE_i(\hat\delta)} \prob{(\cA,\bX_1^{i-1}) = (a,\bbx_1^{i-1})} \cdot (1-\delta'') \\
& = 1 - (1-\delta'') \prob{(\cA,\bX_1^{i-1}) \in \cE_i(\hat\delta)} \leq 1 - (1-\delta'')(1-\delta') = \delta' + \delta'' - \delta'\delta''
\end{align*}
where the last two inequalities follow from \Cref{claim:Fi} and \Cref{claim:Ei}, respectively.
\end{proof}
\fi
Having shown a high probability bound on the terms $Z_i$, our next step is to bound their expectation so that we can continue towards our goal of applying Azuma's inequality. Note a complicating factor -- throughout the argument, we need to condition on the event $(\cA, \bX_1^i) \in \cF_i$ to ensure that $Z_i$ has bounded expectation.

We will use the following shorthand notation for conditional expectation:
\begin{align*}& \Ex{}{Z_i(\cA,\bX_1^i) | a,\bbx_1^{i-1}, \cF_i} \\ & \quad 
\stackrel{def}{=} \Ex{}{Z_i(\cA,\bX_1^i) | \cA=a, \bX_1^{i-1} = \bbx_1^{i-1}, (\cA,\bX_1^i) \in \cF_i}, 
\end{align*}
with similar notation for sets $\cG_i(\hat\delta),\cG_{\leq i}(\hat\delta)$.

\begin{lemma}\label{lem:exp_Z}
Let $\cA$ be $(\eps,\delta)$-differentially private and $\bX
\sim \cP^n$.  Given $(a,\bbx_1^{i-1}) \in \cE_{i}(\hat\delta)$, for all 
$\eps \in (0,1/2]$ and 
$\hat{\delta} \in \left (0,\eps/15 \right ]$,
\begin{equation*}
\Ex{}{Z_i(\cA,\bX_1^i) | a,\bbx_1^{i-1}, \cF_i} = O(\eps^2 + \hat\delta).
\end{equation*}
\ifnum\focs=0
More precisely, $\Ex{}{Z_i(\cA,\bX_1^i) | a,\bbx_1^{i-1}, \cF_i} \leq \nu (\hat{\delta})$, where $\nu (\hat{\delta})$ is defined in \eqref{eqn: nu}.
\fi
\end{lemma}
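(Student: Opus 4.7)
The plan is to bound the conditional expectation $\Ex{}{Z_i(\cA,\bX_1^i) | a,\bbx_1^{i-1}, \cF_i}$ by reinterpreting it as a KL-divergence-like quantity between two distributions that are close in the $(3\eps,\hat\delta)$-indistinguishability sense, truncated to a high-probability ``good set.''  Fix $(a,\bbx_1^{i-1}) \in \cE_i(\hat\delta)$, and write $P(\cdot) = \Prob{}{X_i = \cdot}$ for the marginal and $Q(\cdot) = \Prob{}{X_i = \cdot | a,\bbx_1^{i-1}}$ for the conditional.  The definition of $\cE_i(\hat\delta)$ tells us $P \approx_{3\eps,\hat\delta} Q$, and the definition of $\cF_i$ means we are restricting attention to the set $S = \{x : |\log(Q(x)/P(x))| \leq 6\eps\}$, on which $\Prob{}{S|Q} \geq 1-\delta''$ by Claim \ref{claim:Fi}.

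With this notation, the conditional expectation rewrites as
\[
\Ex{}{Z_i | a,\bbx_1^{i-1},\cF_i} \;=\; \frac{1}{Q(S)}\sum_{x\in S} Q(x)\log\tfrac{Q(x)}{P(x)}.
\]
The first step is to decompose the sum by introducing the restricted-and-renormalized distributions $\tilde Q(x) = Q(x)/Q(S)$ and $\tilde P(x) = P(x)/P(S)$ on $S$, which yields the identity
\[
\sum_{x \in S} Q(x) \log\tfrac{Q(x)}{P(x)} \;=\; Q(S)\log\tfrac{Q(S)}{P(S)} \;+\; Q(S)\cdot D_{\mathrm{KL}}(\tilde Q \,\|\, \tilde P).
\]
This cleanly isolates a ``normalization-error'' term from a ``bounded-ratio KL'' term.

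I would then bound each term separately.  For the normalization term, Claim \ref{claim:Fi} gives $Q(S^c) \leq \delta''$, and applying the $(3\eps,\hat\delta)$-indistinguishability of $P$ and $Q$ on $S^c$ gives $P(S^c) \leq e^{3\eps}\delta'' + \hat\delta$, hence $\log(Q(S)/P(S)) = O(\delta'' + \hat\delta)$ for $\eps \leq 1/2$.  For the KL term, observe that on $S$ the ratio $Q(x)/P(x)$ lies in $[e^{-6\eps}, e^{6\eps}]$, so after the renormalization correction of size $P(S)/Q(S) = 1 + O(\delta'' + \hat\delta)$, the ratio $\tilde Q(x)/\tilde P(x)$ is in $[e^{-c\eps}, e^{c\eps}]$ for some absolute constant $c$ (using $\hat\delta \leq \eps/15$ to ensure the normalization correction is small relative to $e^{\eps}$).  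This makes $\tilde Q$ and $\tilde P$ pointwise $(O(\eps),0)$-close, so the classical bound $D_{\mathrm{KL}}(\tilde Q \,\|\, \tilde P) \leq O(\eps^2)$ (via $\log t \leq t - 1$) applies.  Combining the two terms and dividing by $Q(S) \geq 1-\delta'' \geq 1/2$ yields the desired bound.

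The main obstacle will be carefully tracking constants when translating the $(3\eps,\hat\delta)$-indistinguishability of $P$ and $Q$ into pointwise $(O(\eps),0)$-closeness of the renormalized $\tilde P$ and $\tilde Q$; this is where the hypothesis $\hat\delta \leq \eps/15$ is used to control the ratio $P(S)/Q(S)$ tightly enough that the normalization correction does not swamp the second-order $\eps^2$ savings one gets from the KL bound for bounded-ratio distributions.  A secondary care point is that $\log(t) \leq t - 1$ gives only a one-sided inequality, so the argument naturally yields the desired upper bound on the conditional expectation (rather than a two-sided estimate), which is exactly what is needed to later apply Azuma's inequality to $\sum_i Z_i$.
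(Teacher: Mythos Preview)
Your decomposition $E[Z_i\mid a,\bbx_1^{i-1},\cF_i]=\log(Q(S)/P(S))+D_{\mathrm{KL}}(\tilde Q\|\tilde P)$ is clean and close in spirit to what the paper does: the paper instead subtracts the \emph{unnormalized} term $\sum_{x\in S}P(x)\log(Q(x)/P(x))$, bounds that via Jensen by $\log(Q(S)/P(S))$, and then controls the cross term $\sum_{x\in S}(\tilde Q(x)-P(x))\log(Q(x)/P(x))$ by $6\eps\cdot\sum_{x\in S}|\tilde Q(x)-P(x)|$. So both proofs isolate the same ``normalization error'' plus a $6\eps\times(\text{TV-like})$ term; your version just renormalizes $P$ as well as $Q$.

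There is, however, a real gap in your KL step. You assert that after renormalization $\tilde Q(x)/\tilde P(x)\in[e^{-c\eps},e^{c\eps}]$ for an absolute constant $c$, using $\hat\delta\le\eps/15$ to control $P(S)/Q(S)$. But $|\log(P(S)/Q(S))|$ is only bounded by $O(\delta''+\hat\delta)=O(\hat\delta/\eps)$, and under the hypothesis $\hat\delta\le\eps/15$ this is $O(1)$, not $O(\eps)$. Concretely, for $\hat\delta=\eps/15$ and $\eps\to 0$ one has $\delta''\approx 2/45$, so the normalization correction stays bounded away from zero while the original $6\eps$ bound on $|\log(Q/P)|$ shrinks; no absolute $c$ works. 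Applying the naive bound $D_{\mathrm{KL}}=O(a^2)$ with $a=O(1)$ then gives nothing useful.

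The fix is easy and stays within your framework: use the symmetrized identity $D_{\mathrm{KL}}(\tilde Q\|\tilde P)+D_{\mathrm{KL}}(\tilde P\|\tilde Q)=\sum_{x\in S}(\tilde Q(x)-\tilde P(x))\log(\tilde Q(x)/\tilde P(x))$, and observe that $\log(\tilde Q/\tilde P)=\log(Q/P)-\log(Q(S)/P(S))$; the constant part cancels against $\sum(\tilde Q-\tilde P)=0$, leaving $\sum(\tilde Q-\tilde P)\log(Q/P)\le 6\eps\cdot 2\,TV(\tilde Q,\tilde P)$. Now $TV(\tilde Q,\tilde P)=O(\eps+\hat\delta/\eps)$, giving $D_{\mathrm{KL}}=O(\eps^2+\hat\delta)$, and combined with the normalization term $O(\hat\delta/\eps)$ you recover the paper's explicit bound $\nu(\hat\delta)=O(\eps^2+\hat\delta/\eps)$. (Note that the lemma's displayed $O(\eps^2+\hat\delta)$ is a slight imprecision; the explicit $\nu(\hat\delta)$ in \eqref{eqn: nu} and the subsequent $T_i$ lemma both have $O(\eps^2+\hat\delta/\eps)$.)
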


\ifnum\focs=0
\begin{proof}
Given an outcome and prefix $(a,\bbx_1^{i-1})\in\cE_i(\hat\delta)$, we define the set of data entries
$$
\cXax = \{x_i\in \cX:  (a,\bbx_1^i) \in \cF_i \}. 
$$

We then have:
$$
\Ex{}{Z_i(\cA,\bX_1^i) | a,\bbx_1^{i-1}, \cF_i} = \sum_{x_i \in \cXax} \prob{X_i = x_i|a,\bbx_1^{i-1},\cF_i} \log\left( \tfrac{\prob{X_i = x_i | a,\bbx_1^{i-1}}}{\prob{X_i = x_i}} \right)
$$ Here, our goal is to mimic the proof of the ``advanced composition theorem'' of \cite{DRV10} by adding a term that looks like a KL divergence term (see \Cref{defn: KL}). In our case, however, the sum is not over the entire set $\cX$, and so it is not a KL-divergence, which leads to some additional complications.  Consider the following term:

\begin{align*}
\sum_{x_i \in \cXax} & \prob{X_i = x_i} \log\left(\tfrac{\prob{X_i = x_i | a,\bbx_1^{i-1}}}{\prob{X_i = x_i}} \right) \\
& = \prob{X_i \in \cXax} \sum_{x_i \in \cXax} \tfrac{\prob{X_i = x_i}}{\prob{X_i \in \cXax}} \log\left(\tfrac{\prob{X_i = x_i | a,\bbx_1^{i-1}}}{\prob{X_i = x_i}} \right) \\
& \leq \log\left(\tfrac{\prob{X_i \in \cXax|a,\bbx_1^{i-1}}}{\prob{X_i \in \cXax}} \right) = \log\left(\tfrac{1 - \prob{X_i \notin \cXax|a,\bbx_1^{i-1}}}{1- \prob{X_i \notin \cXax}} \right)
\end{align*}
where the inequality follows from Jensen's inequality.  Note that, because $(a,\bbx_1^{i-1}) \in \cE_i(\hat\delta)$, we have for $\hat\delta>0$:
$$
\prob{X_i \notin \cXax} \leq e^{3\eps} \prob{X_i \notin \cXax | a,\bbx_1^{i-1}} + \hat\delta.
$$
We now focus on the term $\prob{X_i \notin \cXax | a,\bbx_1^{i-1}} $.  Note that $x_i \notin\cXax \Leftrightarrow (a,\bbx_1^{i}) \notin \cF_i$.  Thus,
\begin{align*}
\prob{X_i \notin \cXax | a,\bbx_1^{i-1}} & = \prob{(\cA,\bX_1^i)\notin \cF_i | a,\bbx_1^{i-1}} \stackrel{\text{def}}{=} q
\end{align*}

Note that $q \leq \delta''$ by \Cref{claim:Fi}. Now, we can bound the following:
\begin{align*}
\sum_{x_i \in \cXax} & \prob{X_i = x_i} \log\left(\tfrac{\prob{X_i = x_i | a,\bbx_1^{i-1}}}{\prob{X_i = x_i}} \right)  \leq \log(1-q) - \log(1- (e^{3\eps}q + \hat{\delta})  ) \\
& \leq \log(e) \cdot(- q+ e^{3\eps}q + \hat{\delta} + 2(e^{3\eps}q + \hat{\delta})^2) = \log(e) \cdot ( (e^{3\eps}-1)q + \hat{\delta} + 2(e^{3\eps}q + \hat{\delta})^2 )\\
& \leq \log(e) \cdot \left ( (e^{3\eps}-1)\dfrac{2\hat{\delta}}{1-e^{-3\eps}} + \hat{\delta} + 2\hat{\delta}^2 \cdot\left(\dfrac{2e^{3\eps}}{1-e^{-3\eps}} + 1\right)^2 \right )  \\
& = \hat{\delta}(\log(e) (2e^{3\eps}+1))+  \hat{\delta}^2\left ( 2\log(e)\left( \dfrac{4e^{12\eps}+4e^{9\eps}-3e^{6\eps}-2e^{3\eps}+1}{e^{6\eps}-2e^{3\eps} + 1} \right)\right ) \stackrel{\text{def}}{=} \tau(\hat\delta)
\end{align*}
where the second inequality follows by using the inequality $(-x - 2x^2)\log(e) \leq \log(1-x) \leq -x\log(e)$ for $0< x \leq 1/2$, and as $(e^{3\eps}q + \hat{\delta}) \leq 1/2$ for $\eps$ and $\hat\delta$ bounded as in the lemma statement.

We then use this result to upper bound the expectation we wanted:
\begin{align*}
& \Ex{}{Z_i(\cA,\bX_1^i) | a,\bbx_1^{i-1}, \cF_i} \\
& \qquad \leq \sum_{x_i \in \cXax} \prob{X_i = x_i|a,\bbx_1^{i-1},\cF_i} \log\left( \tfrac{\prob{X_i = x_i | a,\bbx_1^{i-1}}}{\prob{X_i = x_i}} \right)  \\
& \qquad \qquad - \sum_{x_i \in \cXax} \prob{X_i = x_i} \log\left(\tfrac{\prob{X_i = x_i | a,\bbx_1^{i-1}}}{\prob{X_i = x_i}} \right) + \tau(\hat\delta) \\
& \qquad = \sum_{x_i \in \cXax} \left(\prob{X_i = x_i|a,\bbx_1^{i-1},\cF_i} - \prob{X_i = x_i}\right) \log\left( \tfrac{\prob{X_i = x_i | a,\bbx_1^{i-1}}}{\prob{X_i = x_i}} \right)+ \tau(\hat\delta) \\
& \qquad \leq 6\eps \sum_{x_i \in \cXax} |\prob{X_i = x_i|a,\bbx_1^{i-1},\cF_i} - \prob{X_i = x_i}|+ \tau(\hat\delta) \\
& \qquad \leq 6\eps \sum_{x_i \in \cXax} \prob{X_i = x_i} \max\left\{ \dfrac{e^{6\eps} }{\prob{(\cA,\bX_1^i) \in \cF_i|a,\bbx_1^{i-1}}} - 1, 1 - \dfrac{e^{-6\eps} }{\prob{(\cA,\bX_1^i) \in \cF_i|a,\bbx_1^{i-1}}}  \right\} \\
& \qquad \qquad + \tau(\hat\delta) \\ 
& \qquad \leq 6\eps \left(\tfrac{e^{6\eps}}{1- \tfrac{2\hat\delta}{1-e^{-3\eps}}} - 1\right) + \tau(\hat\delta) 
 \leq 6\eps\left(e^{6\epsilon} \left( 1+ \tfrac{4\hat\delta}{1-e^{-3\epsilon} } \right) - 1 \right) + \tau(\hat\delta)\\
 & \qquad \leq 72 \epsilon^2 + \hat\delta\left( \tfrac{24 e^{6\epsilon} }{1-e^{-3\epsilon}}+\log(e) (2e^{3\eps}+1)  \right) +  \hat{\delta}^2\left ( 2\log(e)\left( \dfrac{4e^{12\eps}+4e^{9\eps}-3e^{6\eps}-2e^{3\eps}+1}{e^{6\eps}-2e^{3\eps} + 1} \right)\right ) \\
& \qquad \stackrel{\text{def}}{=} \nu(\hat\delta) \numberthis \label{eqn: nu}
\end{align*}
where the third inequality follows from the definition of $\cF_i$, the fourth inequality follows from \Cref{claim:Fi}
and the last inequality follows by substituting the value of $\tau(\hat\delta)$, and using the inequalities $1 + y \leq e^y$ and $e^{ky} \leq 1 + e^ky$ for $y \in (0,0.5],k>1$.
\end{proof}
\fi

Finally, we need to apply Azuma's inequality \ifnum\focs=0(stated in \Cref{thm:azuma})\fi to a set of variables that are bounded with probability $1$, not just with high probability. Towards this end, we define variables $T_i$ that will match $Z_i$ for ``good events'', and will be zero otherwise---and hence, are always bounded:
\begin{equation}
T_i(a,\bbx_1^i) = \begin{cases}
             Z_i(a,\bbx_1^i)  & \text{if }  (a,\bbx_1^i)\in \cG_{\leq i}(\hat\delta) \\
             0  & \text{otherwise }
       \end{cases}
\label{eq:Ti}
\end{equation}

The next lemma verifies that the variables $T_i$ indeed satisfy the requirements of Azuma's inequality:
\begin{lemma}
Let $\cA$ be $(\eps,\delta)$-differentially private and $\bX \sim \cP^n$.  The variables $T_i$ defined in \eqref{eq:Ti} are bounded by $6 \eps$ with probability $1$, and for any $(a,\bbx_1^{i-1}) \in \cY \times \cX^{i-1}$ and $\hat\delta \in [0,\epsilon/15]$,
\begin{align}
\Ex{}{T_i(\cA,\bX_1^i) | a, \bbx_1^{i-1}}  = O(\eps^2 + \hat\delta/\epsilon), \label{eqn: ex_t_i}
\end{align}
where the bound does not depend on $n$ or $i$. 
\ifnum\focs=0
More precisely, $\Ex{}{T_i(\cA,\bX_1^i) | a, \bbx_1^{i-1}}  \leq \nu (\hat{\delta})$, where $\nu (\hat{\delta})$ is defined in \eqref{eqn: nu}. 
\fi
\end{lemma}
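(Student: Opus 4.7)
The plan is to handle the two claims separately. Boundedness is essentially immediate from the construction of $T_i$: whenever $T_i(a,\bbx_1^i) \neq 0$, the tuple $(a,\bbx_1^i)$ lies in $\cG_{\leq i}(\hat\delta)$, which by \eqref{eq:Gvect} and \eqref{eq:G} is contained in $\cG_i(\hat\delta) \subseteq \cF_i$. By definition of $\cF_i$ in \eqref{eq:Fi}, this forces $|Z_i(a,\bbx_1^i)| \leq 6\eps$, so $|T_i| \leq 6\eps$ pointwise. I would dispose of this in one or two sentences.

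For the expectation bound, the plan is to do a case split on the prefix $(a,\bbx_1^{i-1})$ and reduce to \Cref{lem:exp_Z}. The key structural observation is that $T_i$ is zero on a "large" set of prefixes: if $(a,\bbx_1^{i-1}) \notin \cG_{\leq i-1}(\hat\delta)$, then the monotonicity of the indicator $\1_{\cG_{\leq i}}$ in $i$ forces $T_i(a,\bbx_1^i)=0$ for every choice of $x_i$, so the conditional expectation vanishes. Similarly, if $(a,\bbx_1^{i-1}) \in \cG_{\leq i-1}(\hat\delta)$ but $(a,\bbx_1^{i-1}) \notin \cE_i(\hat\delta)$, then membership in $\cG_i(\hat\delta)$ fails for every $x_i$ (because $\cG_i$ requires the prefix to be in $\cE_i$), hence again $T_i \equiv 0$ along the fiber and the conditional expectation is $0$.

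In the remaining case $(a,\bbx_1^{i-1}) \in \cG_{\leq i-1}(\hat\delta) \cap \cE_i(\hat\delta)$, membership of $(a,\bbx_1^i)$ in $\cG_{\leq i}(\hat\delta)$ reduces exactly to membership in $\cF_i$. Writing the conditional expectation over $X_i$,
\begin{align*}
\Ex{}{T_i(\cA,\bX_1^i) \mid a,\bbx_1^{i-1}}
&= \Prob{}{(\cA,\bX_1^i)\in \cF_i \mid a,\bbx_1^{i-1}} \cdot \Ex{}{Z_i(\cA,\bX_1^i) \mid a,\bbx_1^{i-1},\cF_i}.
\end{align*}
Since $(a,\bbx_1^{i-1}) \in \cE_i(\hat\delta)$, \Cref{lem:exp_Z} applies and upper bounds the conditional expectation on the right by $\nu(\hat\delta) = O(\eps^2 + \hat\delta/\eps)$. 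As $\nu(\hat\delta) \geq 0$ and the probability factor lies in $[0,1]$, the product is at most $\nu(\hat\delta)$ regardless of the sign of the conditional expectation of $Z_i$. Combining the three cases yields the claimed uniform bound, and the bound manifestly depends only on $\eps$ and $\hat\delta$, not on $n$ or $i$.

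The only subtle point I anticipate is the sign issue in the last display: $Z_i$ can be negative, so one might worry that multiplying by $\Prob{}{\cF_i \mid a,\bbx_1^{i-1}} \in [0,1]$ does not preserve the upper bound $\nu(\hat\delta)$ from \Cref{lem:exp_Z}. This is why it matters that $\nu(\hat\delta)$ is non-negative: if the conditional expectation of $Z_i$ is non-negative, shrinking by a probability weight only helps, while if it is negative, the product is $\leq 0 \leq \nu(\hat\delta)$. No new technical machinery beyond \Cref{lem:exp_Z} and the definitions of $\cE_i$, $\cF_i$, $\cG_i$ is needed.
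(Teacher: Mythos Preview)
Your proposal is correct and follows essentially the same approach as the paper: a case split on whether $(a,\bbx_1^{i-1}) \in \cE_i(\hat\delta)\cap\cG_{\leq i-1}(\hat\delta)$, with the nontrivial case reduced to \Cref{lem:exp_Z} via the observation that on such prefixes $\cG_{\leq i}$ collapses to $\cF_i$. Your treatment is in fact slightly more explicit than the paper's in two places: you write out the product $\Prob{}{\cF_i\mid a,\bbx_1^{i-1}}\cdot\Ex{}{Z_i\mid a,\bbx_1^{i-1},\cF_i}$ directly rather than splitting into conditional expectations on $\cG_{\leq i}$ and $\cG_{\leq i}^c$, and you explicitly handle the sign issue (the paper leaves implicit the fact that $\nu(\hat\delta)\geq 0$ is what justifies dropping the probability weight).
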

\ifnum\focs=1
We can then apply Azuma's inequality to the sum of $T_i(a, \bbx_1^i)$, where each term will match $Z_i(a,\bbx_1^i)$ for most $(a,\bbx_1^i)$ coming from $(\cA(\bX), \bX_1^i)$ for each $i \in [n]$.  Note that, from \Cref{lem:boundmaxinfo}, we know that a bound on $\sum_{i=1}^n Z_i(a,\bbx_1^i)$ with high probability will give us a bound on approximate max-information.  See the full version for a formal analysis.   
\else
\begin{proof}
By definition, $T_i(\cA,\bX_1^i)$ takes values only in $[-6\eps,6\eps]$. Thus, 
\begin{align*}
\Prob{}{|T_i(\cA,\bX_1^i)| \leq 6 \eps} = 1.
\end{align*}

Now, given $(a,\bbx_1^{i-1})  \in \cE_i(\hat\delta) \cap \cG_{\leq i-1}(\hat\delta)$, we can see that:
\begin{align*}
\Ex{}{T_i(\cA,\bX_1^i) \Big | a,\bbx_1^{i-1},\cG_{\leq i}^c(\hat\delta)} = 0.
\end{align*}

Further, given $(a,\bbx_1^{i-1})\in \cE_i(\hat\delta) \cap \cG_{\leq i-1}(\hat\delta)$, we have:
\begin{align*}
& \Ex{}{T_i(\cA,\bX_1^i) | a,\bbx_1^{i-1}, \cG_{\leq i}(\hat\delta)} = \sum_{x_i : (a,\bbx_1^i) \in \cF_i} T_i (a,\bbx_1^i) \prob{X_i = x_i | a,\bbx_1^{i-1},\cG_{\leq i}(\hat\delta)}\\
& \qquad = \sum_{x_i : (a,\bbx_1^i) \in \cF_i} Z_i (a,\bbx_1^i) \prob{X_i = x_i | a,\bbx_1^{i-1}, \cG_{\leq i}(\hat\delta)}  = \sum_{x_i : (a,\bbx_1^i) \in \cF_i} Z_i (a,\bbx_1^i) \prob{X_i = x_i | a,\bbx_1^{i-1},\cF_i} \\
& \qquad = \Ex{}{Z_i(\cA,\bX_1^i) | a,\bbx_1^{i-1}, \cF_i}  = O(\eps^2 + \hat\delta/\epsilon)
\end{align*}
where the second equality follows from \eqref{eq:Ti}, 
 and the last equality follows from \Cref{lem:exp_Z}.  For any $(a,\bbx_1^{i-1}) \notin \cE_i(\hat\delta) \cap \cG_{\leq i-1}(\hat\delta)$, we have that the conditional expectation is zero.  This proves the lemma.
\end{proof}
\fi

\ifnum\focs=0
We are now ready to prove our main theorem.
\begin{proof}[Proof of \Cref{thm:main}]
For any constant $\nu$, we have:
\begin{align*}
& \Prob{}{\sum\limits_{i=1}^n Z_i(\cA,\bX_1^i)  > n\nu + 6t\eps\sqrt{n}} \\
& \leq \Prob{}{\sum\limits_{i=1}^n Z_i(\cA,\bX_1^i) > n\nu + 6t\eps\sqrt{n} \cap (\cA,\bX) \in \cG_{\leq n}(\hat\delta)}+ \Prob{}{(\cA,\bX) \notin \cG_{\leq n}(\hat\delta)} \\
& =  \Prob{}{\sum\limits_{i=1}^n T_i(\cA,\bX_1^i) > n\nu + 6t\eps\sqrt{n} \cap (\cA,\bX) \in  \cG_{\leq n}(\hat\delta) } + \Prob{}{(\cA,\bX) \notin  \cG_{\leq n}(\hat\delta)}
\end{align*}
We then substitute $\nu$ by $\nu(\hat\delta)$ as defined in \Cref{eqn: nu}, and apply a union bound on $\prob{(\cA,\bX) \notin  \cG_{\leq n}(\hat\delta)}$ using \Cref{claim:Gi} to get
\begin{align*}
 \Prob{}{\sum\limits_{i=1}^n Z_i(\cA,\bX_1^i)  > n\nu(\hat\delta) + 6t\eps\sqrt{n}} & \leq \Prob{}{\sum\limits_{i=1}^n T_i(\cA,\bX_1^i) > n\nu(\hat\delta) + 6t\eps\sqrt{n}  } + n(\delta' + \delta'') \\
&   \leq e^{-t^2/2} + n(\delta' + \delta'')
\end{align*}
where the two inequalities follow from \Cref{claim:Gi} and \Cref{thm:azuma}, respectively.  Therefore,
\begin{align*}
\Prob{ }{Z(\cA(\bX),\bX) > n\nu(\hat\delta) + 6t\eps\sqrt{n}} \leq e^{-t^2/2} + n(\delta' + \delta'')  \stackrel{def}{=} \beta(t,\hat\delta)
\end{align*}

From \Cref{lem:boundmaxinfo}, we have
$I^{\beta(t,\hat\delta)}_\infty(\bX;\cA(\bX)) \leq n\nu(\hat\delta)
+ 6t\eps\sqrt{n}.$  We set the parameters $t = \eps \sqrt{2n}$ and $\hat\delta = \sqrt{\eps\delta}/15$ to obtain our result. Note that setting $\hat{\delta}= \sqrt{\eps\delta}/15$ does not violate the bounds on it stated in the statement of \Cref{lem:exp_Z}.  

\end{proof}
\fi


\section{A Counterexample to Nontrivial Composition and a Lower Bound for Non-Product Distributions}
\label{sec:lowerbound}
\RestyleAlgo{boxed}

It is known that algorithms with bounded description length have bounded approximate max-information \cite{DFHPRR15NIPS}.  In section \ref{sec: max_info}, we showed that $(\eps, \delta)$-differentially private algorithms have bounded approximate max-information when the dataset is drawn from a product distribution. In this section, we show that although approximate max-information composes adaptively \cite{DFHPRR15NIPS}, one cannot always run a bounded description length algorithm, followed by a differentially private algorithm, and expect the resulting composition to have strong generalization guarantees. In particular, this implies that $(\eps,\delta)$-differentially private algorithms cannot have any nontrivial bounded max-information guarantee over non-product distributions.

Specifically, we give an example of a pair of algorithms $\mathcal{A}$ and $\mathcal{B}$ such that $\mathcal{A}$ has output description length $o(n)$ for inputs of length $n$, and $\mathcal{B}$ is $(\epsilon,\delta)$-differentially private, but the adaptive composition of $\mathcal{A}$ followed by $\mathcal{B}$ can be used to exactly reconstruct  the input database with high probability. In particular, it is easy to overfit to the input $\bX$ given $\mathcal{B}(\bX; \mathcal{A}(\bX))$, and hence, no nontrivial generalization guarantees are possible. Note that this does not contradict our results on the max-information of differentially private algorithms for \emph{product distributions}: even if the database used as input to $\cA$ is drawn from a product distribution, the distribution on the database is no longer a product distribution \emph{once conditioned on the output of $\cA$}. The distribution of $\cB$'s input violates the hypothesis that is used to prove a bound on the max-information of $\cB$.

\begin{theorem}
Let $\cX = \{ 0,1 \}$ and $\cY =  \{\cX^n \cup \{ \bot \}\}$. Let $\bX$ be a uniformly distributed random variable over $\cX^n$. For $n > 64e$, for every $\eps \in \left ( 0,\frac{1}{2} \right ], \delta \in \left ( 0,\frac{1}{4} \right ]$,  there exists an integer $r>0$ and randomized algorithms $\mathcal{A}: \cX^n \rightarrow \{0,1\}^r$, and $\mathcal{B}: \cX^n \times \{0,1\}^r \rightarrow \cY$, such that:
\begin{enumerate}
\item $ r = O \Bigg (\dfrac{\log (1/\delta) \log n}{\eps}\Bigg )$ and $ I^\beta_{\infty}(\bX;\mathcal{A}(\bX)) \leq r + \log (\frac{1}{\beta})$ for all $\beta > 0$;
\item for every $\bba \in \{ 0, 1 \}^r$, $\mathcal{B}(\bX,\bba)$ is $(\eps, \delta)$-differentially private and $ I^\beta_{\infty}(\bX;\mathcal{B}(\bX, \bba)) \leq 1$ for all $\beta \geq 2\delta$;
\item for every $\bbx \in \cX^n$, with probability at least $1 - \delta$, we have that $\mathcal{B}(\bbx; \mathcal{A}(\bbx))) = \bbx$. In particular, $I^\beta_{\infty}(\bX, \mathcal{B}(\bX; \mathcal{A}(\bX)))\geq n - 1$ for all $0 < \beta \leq \frac{1}{2} - \delta$.
\end{enumerate}
\label{thm: naive}
\end{theorem}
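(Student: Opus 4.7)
The proof is a construction: I will define $\mathcal{A}$ and $\mathcal{B}$ so that (i) $\mathcal{A}$ is a short hash of $\bbx$, (ii) $\mathcal{B}(\cdot,\bba)$ is $(\eps,\delta)$-differentially private and, under uniform $\bX$, outputs a distinguished symbol $\bot$ with very high probability, and (iii) when $\bba = \mathcal{A}(\bbx)$, the mechanism $\mathcal{B}(\bbx,\bba)$ reveals $\bbx$ with probability at least $1-\delta$. Concretely, I will take $\mathcal{A}(\bbx) = h(\bbx) \in \{0,1\}^r$ for a randomized hash family with output length $r = O\!\left(\tfrac{\log(1/\delta)\log n}{\eps}\right)$ chosen large enough that $2^{-r} \leq \delta$. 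The first bullet is then immediate from the description-length bound on approximate max-information from \cite{DFHPRR15NIPS}.

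For $\mathcal{B}(\bbx,\bba)$, I will design a $(\eps,\delta)$-DP ``consistency test'' for the predicate $h(\bbx) = \bba$: if the test accepts, output $\bbx$; otherwise output $\bot$. Granting this for a moment, the max-information bound follows from a direct calculation. Under uniform $\bX$ and any fixed $\bba$ we have $\Pr[\mathcal{B}(\bX,\bba) \neq \bot] \leq 2^{-r} + \delta \leq 2\delta$, where the $2^{-r}$ comes from the fraction of $\bbx$'s actually consistent with $\bba$ and the $\delta$ absorbs test ``false positives.'' For any event $O \subseteq \cX^n \times \cY$, split $O$ into its $\bot$-slice and non-$\bot$-slice: the non-$\bot$ slice contributes at most $2\delta$, while for the $\bot$-slice one has $\Pr[(\bX,\bY) \in O, \bY = \bot] \leq \Pr[\bX \in O_\bot]$ and $\Pr[\bX \otimes \bY \in O] \geq (1-2\delta)\Pr[\bX \in O_\bot]$, so the ratio is at most $1/(1-2\delta) \leq 2$ for $\delta \leq 1/4$. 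Together these give $\Pr[(\bX,\bY)\in O] \leq 2\Pr[\bX\otimes\bY\in O] + 2\delta$, i.e.\ $I_{\infty}^{\beta}(\bX;\mathcal{B}(\bX,\bba)) \leq 1$ for $\beta \geq 2\delta$. Reconstruction follows from the design of the test: when $\bba = \mathcal{A}(\bbx)$ the predicate $h(\bbx) = \bba$ holds, so the test accepts except with probability $\delta$. The composition lower bound is then immediate by plugging the diagonal event $O = \{(\bbx,\bbx) : \bbx \in \{0,1\}^n\}$ into the definition of $I_{\infty}^{\beta}$: the joint probability is at least $1-\delta$, while the product probability is at most $\max_\bbx \Pr[\bX = \bbx] = 2^{-n}$, giving $I_{\infty}^{\beta} \geq n - 1$ for all $\beta \leq 1/2 - \delta$.

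The main obstacle will be designing the consistency test so that it is genuinely $(\eps,\delta)$-DP for \emph{every} fixed $\bba$ while also accepting with probability $\geq 1-\delta$ when $h(\bbx) = \bba$. The naive indicator $\mathbf{1}[h(\bbx)=\bba]$ has unit sensitivity and satisfies $(0,\delta')$-DP only for $\delta' \geq 1/2$, which is far too weak: the ``bad neighbors'' across the level set of $h$ witness the DP failure. I expect to handle this by using a low-sensitivity surrogate for the predicate, built from $\Theta(\log(1/\delta)/\eps)$ independent one-bit consistency subtests (one per bit of the hash), each carried out with fresh Laplace noise as in the sparse vector framework, followed by a noisy threshold. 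Each subtest is $O(\eps/\log(1/\delta))$-DP, the overall test is $(\eps,\delta)$-DP by advanced composition, and the number of hash bits consumed works out to $r = (\text{subtests}) \times (\text{bits per subtest}) = O(\log(1/\delta)/\eps) \cdot O(\log n)$, matching the claimed bound. The slack $\delta$ in the DP definition is precisely what absorbs the otherwise non-private transition between $h^{-1}(\bba)$ and its complement.
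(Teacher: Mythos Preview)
Your high-level plan and the max-information calculations for all three parts are essentially the same as the paper's, and they are correct \emph{given} a mechanism $\mathcal{B}$ with the properties you describe. The gap is in the construction of $\mathcal{B}$ itself.

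The fatal problem is the line ``if the test accepts, output $\bbx$.'' No matter how you design the accept/reject test, outputting the raw input on acceptance cannot be $(\eps,\delta)$-differentially private. Take any two neighbors $\bbx,\bbx'$ that both lie in the accept region (for a hash with $r\ll n$ this is unavoidable: $h^{-1}(\bba)$ has $2^{n-r}$ elements, so it contains many adjacent pairs). Both are accepted with probability close to $1$, yet $\mathcal{B}(\bbx,\bba)$ and $\mathcal{B}(\bbx',\bba)$ are supported on disjoint non-$\bot$ outcomes $\{\bbx\}$ and $\{\bbx'\}$. This violates $(\eps,\delta)$-DP for any $\delta<1/2$, regardless of how private the test is. Your sparse-vector/advanced-composition sketch addresses only the privacy of the \emph{bit} ``accept vs.\ reject,'' not the privacy of the final output, and the latter is where the construction breaks.

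The paper's remedy is exactly to avoid outputting $\bbx$: it takes $\mathcal{A}(\bbx)=H\bbx$ to be the syndrome of a linear code $C$ with minimum distance $t=\Theta(\log(1/\delta)/\eps)$, so that $C_\bba=\{c:Hc=\bba\}$ is an affine code with the same minimum distance. Then $\mathcal{B}(\bbx,\bba)$ computes the $1$-sensitive statistic $d_\bbx=\min_{c\in C_\bba} d_H(\bbx,c)$, adds $\mathrm{Lap}(1/\eps)$ noise, thresholds at $(t-1)/4-\log(1/\delta)/\eps$, and on acceptance outputs the \emph{nearest codeword} $f(\bbx)$ rather than $\bbx$. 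The code structure buys two things simultaneously: (i) neighbors with the same nearest codeword differ only through the $(\eps,0)$-DP noisy distance; (ii) neighbors with \emph{different} nearest codewords are both at distance $>(t-1)/2$, so both output $\bot$ except with probability $\delta$. When $\bba=H\bbx$ one has $\bbx\in C_\bba$, hence $f(\bbx)=\bbx$ and $d_\bbx=0$, recovering the reconstruction guarantee. The rest of your argument (description-length bound, the $\bot$-slice max-information calculation, and the diagonal-event lower bound) then goes through exactly as you wrote.
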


De \cite{De12} showed that the \emph{mutual information} of $(\eps,\delta)$-differentially private protocols can be large: if $\frac{1}{\eps}\log \left ( \frac{1}{\delta} \right ) = O(n)$, then there exists an $(\eps,\delta)$-differentially private algorithm $\cB$ and a distribution $\cS$ such that for $\bX \sim \cS$, $I(\bX;\cB(\bX)) = \Omega(n)$, where $I$ denotes mutual information. De's construction also has large approximate max-information.

By the composition theorem for approximate max-information (given in the \ifnum\focs=1 full version\else appendix in Lemma \ref{lem:maxinfocomp}\fi), our construction implies a similar bound:
\begin{corollary}
\label{cor:nonproduct}
There exists an $(\epsilon,\delta)$-differentially private mechanism $\cC: \cX^n \to \cY$ such that $I_\infty^{\beta_2}(\cC,n) \geq n - 1 - r - \log(1/\beta_1)$ for all $\beta_1 \in (0, 1/2 - \delta)$ and $\beta_2 \in (0, 1/2 - \delta - \beta_1)$, where $r = O \left( \frac{\log(1/\delta)\log(n) }{\epsilon} \right)$.
\end{corollary}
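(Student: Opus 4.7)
The plan is to obtain $\cC$ by fixing the auxiliary input of $\cB$ to a carefully chosen value, and to derive the max-information lower bound from the adaptive composition theorem for approximate max-information (\Cref{lem:maxinfocomp}) applied in its contrapositive form to the pair $(\cA,\cB)$ constructed in \Cref{thm: naive}.

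First I would consider the joint mechanism $\cC'(\bbx) \stackrel{\text{def}}{=} (\cA(\bbx),\cB(\bbx;\cA(\bbx)))$ evaluated on the uniform distribution $\bX$ on $\{0,1\}^n$. Since $\cB(\bX;\cA(\bX))$ is a deterministic post-processing of $\cC'(\bX)$, and post-processing cannot increase approximate max-information, part~3 of \Cref{thm: naive} gives
\[
I_\infty^{\beta}\bigl(\bX;\cC'(\bX)\bigr) \;\geq\; I_\infty^{\beta}\bigl(\bX;\cB(\bX;\cA(\bX))\bigr) \;\geq\; n-1
\]
for every $\beta \in (0,\tfrac{1}{2}-\delta)$. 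In particular, taking the worst-case distribution inside the definition of $I_\infty^{\beta}(\cC',n)$, we also have $I_\infty^{\beta}(\cC',n)\geq n-1$ in this range.

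Next I would apply the composition theorem for approximate max-information. In its standard form, if $\cA$ satisfies $I_\infty^{\beta_1}(\cA,n)\leq k_1$ and, for every fixed $\bba$, the mechanism $\cB(\cdot,\bba)$ satisfies $I_\infty^{\beta_2}(\cB(\cdot,\bba),n)\leq k_2$, then the adaptive composition satisfies
\[
I_\infty^{\beta_1+\beta_2}(\cC',n) \;\leq\; k_1 + k_2.
\]
Substituting the bound $I_\infty^{\beta_1}(\cA,n)\leq r+\log(1/\beta_1)$ from part~1 of \Cref{thm: naive} and the lower bound on $I_\infty^{\beta_1+\beta_2}(\cC',n)$ from the previous paragraph (valid whenever $\beta_1+\beta_2<\tfrac{1}{2}-\delta$), the contrapositive yields the existence of some $\bba^* \in \{0,1\}^r$ such that
\[
I_\infty^{\beta_2}\bigl(\cB(\cdot,\bba^*),\,n\bigr) \;\geq\; n-1 - r - \log(1/\beta_1).
\]

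Finally I would define $\cC(\bbx) \stackrel{\text{def}}{=} \cB(\bbx,\bba^*)$. Part~2 of \Cref{thm: naive} asserts that $\cB(\cdot,\bba)$ is $(\epsilon,\delta)$-differentially private for every $\bba$, so $\cC$ is $(\epsilon,\delta)$-DP, and the max-information bound just derived is precisely the claim of the corollary for the stated ranges of $\beta_1,\beta_2$. I do not expect a serious obstacle: the substantive content was already packed into \Cref{thm: naive}, and the only delicate bookkeeping here is to track that the two failure parameters add under composition (hence the restriction $\beta_1+\beta_2<\tfrac{1}{2}-\delta$). As a sanity check, the distribution that realizes the large max-information of $\cC$ is the conditional distribution of $\bX$ given $\cA(\bX)=\bba^*$, which is manifestly non-product—consistent with our main theorem, which ruled out such examples only under product input distributions.
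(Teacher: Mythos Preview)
Your proposal is correct and follows essentially the same route as the paper's own proof: use the contrapositive of the max-information composition theorem (\Cref{lem:maxinfocomp}) together with parts~1 and~3 of \Cref{thm: naive} to force the existence of an $\bba^*$ for which $\cB(\cdot,\bba^*)$ has large max-information, then set $\cC=\cB(\cdot,\bba^*)$ and invoke part~2 for differential privacy. The detour through the joint mechanism $\cC'=(\cA,\cB\circ\cA)$ is harmless but unnecessary---the paper applies the composition theorem directly to $\cB\circ\cA$, whose max-information lower bound is already given by part~3.
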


We adapt ideas from De's construction in order to prove Theorem \ref{thm: naive}. In De's construction, the input is not drawn from a product distribution---instead, the support of the input distribution is an error-correcting code, meaning that all points in the support are far from each other in Hamming distance. For such a distribution, De showed that adding the level of noise required for differential privacy does not add enough distortion to prevent decoding of the dataset.


Our construction adapts De's idea. Given as input a \emph{uniformly random} dataset $\bbx$, we show a mechanism $\cA$ which outputs a short description of a code that contains $\bbx$. Because this description is short, $\cA$ has small max-information. The mechanism $\cB$ is then parameterized by this short description of a code. Given the description of a code and the dataset $\bbx$, $\cB$ approximates (privately) the distance from $\bbx$ to the nearest \emph{codeword}, and outputs that codeword when the distance is small. When $\cB$ is composed with $\cA$, we show that it outputs the dataset $\bbx$ with high probability.

\ifnum\focs=1
We define the mechanisms $\cA$ and $\cB$ from the theorem statement in \Cref{alg:A} and \Cref{alg:naive}, respectively.

\textit{Brief description of $\cA$:} For any input $\bbx \in \cX^n$, mechanism $\cA$ returns a vector $\bba_\bbx \in \{ 0, 1 \}^r$ such that $\bbx \in C_{\bba_\bbx}$, where $C_{\bba_\bbx} = \{ \bbc \in \cX^n : H\bbc = \bba_\bbx \}$ is an affine code with minimum distance $t$.  We give further details in the full version of the paper.

\begin{algorithm}
\caption{$\mathcal{A}$\label{alg:A}}
\KwIn{$\bbx \in \{ 0, 1 \}^n$}
\KwOut{$\bba_\bbx \in \{ 0, 1 \}^r $}

Return $H\bbx$ (multiplication in $\mathbb{F}_2$).
\end{algorithm}


\textit{Brief description of $\mathcal{B}_{\eps,\delta}$:} For any input $\bbx \in \cX^n$ and $\bba \in \{ 0, 1 \}^r$, mechanism $\mathcal{B}_{\eps,\delta}$ first computes $d_\bbx$, which is the distance of $\bbx$ from $f(\bbx)$, i.e.,  the nearest codeword to $\bbx$ in code $C_\bba$. Next, it sets $\hat{d}_\bbx$ to be $d_\bbx$ perturbed with Laplace noise $L \sim \text{Lap}(1/\eps)$. It returns $f(\bbx)$ if $\hat{d}_\bbx$ is below a threshold $w \stackrel{def}{=} \left ( \dfrac{t-1}{4} - \dfrac{\log (1/\delta)}{\eps}\right )$, and $\bot$ otherwise.

\begin{algorithm}
\caption{$\mathcal{B}_{\eps,\delta}$\label{alg:naive}}
\KwIn{$\bbx \in \{ 0, 1 \}^n$ (private) and $\bba \in \{ 0, 1 \}^r$(public)}
\KwOut{$\bbb \in \cY $}
Compute the distance of $\bbx$ to the nearest codeword in code $C_\bba$. Let $d_\bbx =\min\limits_{\bbc \in C_\bba}( dist_{Hamm}(\bbx,\bbc))$ and $f(\bbx) =  \arg\min\limits_{\bbc \in C_\bba}( dist_{Hamm}(\bbx,\bbc))$ (breaking ties arbitrarily).

Let $\hat{d}_\bbx = d_\bbx + L$, where $L \sim \text{Lap}(1/\eps)$, and $\text{Lap}(c)$ denotes a random variable having Laplace(0,$c$) distribution.

\eIf{$\hat{d}_\bbx < \left ( \dfrac{t-1}{4} - \dfrac{\log (1/\delta)}{\eps}\right )$}{Return $f(\bbx)$.}{Return $\bot$.}
\end{algorithm}

We prove \Cref{thm: naive} in the full version.
\else
Before getting into the details of our result, we present some preliminaries for linear codes.


\subsection{Preliminaries for Linear Codes} \label{sec: prel_codes}

For the current and the next subsections, we limit our scope to $\mathbb{F}_2$, i.e., the finite field with 2 elements. First, we define linear codes: 

\begin{defn}[Linear Code]
A code $C \subseteq \{ 0,1 \}^n$ of length $n$ and rank $k$ is called linear  iff it is a $k$ dimensional linear subspace of the vector space $ \mathbb{F}^n_2$. The vectors in $C$ are called codewords.
\end{defn}
The minimum distance $t$ of a linear code $C$ is $t = \min\limits_{c_1, c_2 \in C} dist_{Hamm}(c_1,c_2),$ where, $dist_{Hamm}(p,q)$ denotes the Hamming distance between binary vectors $p$ and $q$.

We now define parity check matrices, which can be used to construct linear codes. Every linear code has a parity-check matrix corresponding to it. Thus, given a parity-check matrix, one can reconstruct the corresponding linear code.
\begin{defn}[Parity-check matrix]
For a linear code $C \subseteq \{ 0,1 \}^n$ of length $n$ and rank $k$, $H\in \{ 0,1 \}^{(n-k)\times n} $ is a parity-check matrix of $C$ iff $H$ is a matrix whose null space is $C$, i.e., $c \in C$ iff $Hc = \textbf{0}$, where $\textbf{0}$ represents the zero vector.
\end{defn}

Now, we state a theorem which shows the existence of high-rank linear codes when the minimum distance is less than half the code length:

\begin{theorem}[From Theorem 5.1.8 in \cite{L99}]
For every $t \in (0,\frac{n}{2})$, there exists a linear code of rank $k$ such that $k \geq n - 3t\log (n)$.
\label{thm: existence}
\end{theorem}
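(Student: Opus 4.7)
The plan is to prove this via a standard Gilbert--Varshamov style greedy construction of a parity-check matrix. The key observation (which the paper has already set up via its definition of parity-check matrix) is that a binary linear code with parity-check matrix $H \in \{0,1\}^{r \times n}$ has minimum distance at least $t$ if and only if every set of $t-1$ columns of $H$ is linearly independent over $\mathbb{F}_2$. So to establish the existence of a linear code of length $n$, minimum distance $t$, and rank $k = n - r$, it suffices to exhibit an $r \times n$ matrix $H$ over $\mathbb{F}_2$ whose columns are ``in general position'' in the sense that no subset of $t-1$ of them sums to zero.

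I would construct $H$ column by column, greedily. Having chosen the first $i-1$ columns $h_1, \dots, h_{i-1} \in \mathbb{F}_2^{r}$, I pick $h_i$ to be any nonzero vector that is \emph{not} expressible as an $\mathbb{F}_2$-linear combination of at most $t-2$ of the previously chosen columns (so that together with $h_i$, no $t-1$ of the columns can sum to zero). The number of ``forbidden'' vectors for $h_i$ is at most
\[
\sum_{j=0}^{t-2} \binom{i-1}{j} \;\leq\; \sum_{j=0}^{t-2} \binom{n}{j},
\]
so the greedy step succeeds as long as this bound is strictly less than $2^r$, the size of the ambient space.

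The remaining task is to verify that $r = \lceil 3t \log n\rceil$ already satisfies this inequality for every $t \in (0, n/2)$. Using the crude estimate $\binom{n}{j} \leq n^j$ and the fact that for $t-2 < n/2$ the terms in the sum are increasing, the sum is at most $t \cdot n^{t-2} \leq n^{t-1}$ (for $n$ at least a small constant), whereas $2^r \geq n^{3t}$. Since $n^{3t} > n^{t-1}$ for all $n \geq 2$ and $t \geq 1$, the greedy construction goes through for every $i \leq n$, producing an $r \times n$ parity-check matrix of a code with minimum distance at least $t$ and rank $k = n - r \geq n - 3t \log n$.

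The proof is entirely routine; the only mild subtlety is confirming that the slack $3t\log n$ comfortably beats the bound $(t-1)\log n + O(\log t)$ that the counting argument actually demands, so no careful optimization of constants is needed. No step looks like a genuine obstacle.
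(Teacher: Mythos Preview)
Your argument is the standard Gilbert--Varshamov greedy construction and is correct. Note, however, that the paper does not supply its own proof of this statement: it is simply quoted as Theorem~5.1.8 from \cite{L99} and invoked as a black box in setting up the code $C$ in Section~\ref{sec:lowerbound}. There is therefore no ``paper's proof'' to compare against; you have filled in a proof the authors chose to cite rather than reproduce.

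One small slip worth fixing: you take $r = \lceil 3t\log n\rceil$ and then conclude $k = n - r \geq n - 3t\log n$, but the ceiling points the wrong way for that inequality. Take instead $r = \lfloor 3t\log n\rfloor$. Your counting bound still holds with room to spare (you only need $2^r > n^{t-1}$, and $2^{\lfloor 3t\log n\rfloor} \geq n^{3t}/2$), and now $k \geq n - r \geq n - 3t\log n$ genuinely follows. This is cosmetic and does not affect the substance of your argument.
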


Next, we will define an affine code, which is a translation of a linear code by a fixed vector in the vector space of the linear code:

\begin{defn}[Affine Code]
Let $C \subseteq \{ 0,1 \}^n$ be a linear code of length $n$, rank $k$ and minimum distance $t$. For any vector $b \in  \{0,1\}^n$, the code defined by $C_a = \{ c + b : c \in C \}$, where $a = Hb$, is called an affine code.
\end{defn}

\begin{lemma}
If $C$ is a linear code with parity check matrix $H$ and minimum distance $t$, then the affine code $C_a$ also has minimum distance $t$.  Further, for all $c' \in C_a$, we have $H c' = a$.
\label{lem:codes}
\end{lemma}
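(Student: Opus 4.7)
The plan is to use the bijection between $C$ and $C_a$ that comes from translation by $b$, and then to exploit linearity of the parity-check map $H$.

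For the first claim, I would fix a vector $b \in \{0,1\}^n$ with $Hb = a$, so that $C_a = \{c + b : c \in C\}$. The map $\phi : C \to C_a$ defined by $\phi(c) = c + b$ is a bijection, so every pair of distinct elements of $C_a$ can be uniquely written as $\phi(c_1), \phi(c_2)$ with $c_1 \neq c_2$ in $C$. Working in $\mathbb{F}_2$, the Hamming distance depends only on the difference of two vectors: $\mathrm{dist}_{Hamm}(\phi(c_1), \phi(c_2)) = \mathrm{wt}(c_1 + b - c_2 - b) = \mathrm{wt}(c_1 - c_2) = \mathrm{dist}_{Hamm}(c_1, c_2)$. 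Taking the minimum over $c_1 \neq c_2$ in $C$ on both sides shows that the minimum distance of $C_a$ equals that of $C$, which is $t$.

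For the second claim, the key point is linearity of the map $v \mapsto Hv$ over $\mathbb{F}_2$. Any $c' \in C_a$ can be written as $c' = c + b$ for some $c \in C$, so
\[
Hc' \;=\; H(c + b) \;=\; Hc + Hb \;=\; \mathbf{0} + a \;=\; a,
\]
using that $c \in C$ implies $Hc = \mathbf{0}$ (by definition of $C$ as the null space of $H$) and that $a = Hb$ by construction of $C_a$.

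I do not anticipate any real obstacle; the argument is a short translation-invariance calculation. The only subtlety worth flagging is that the statement of $C_a$ in the definition is phrased in terms of the vector $b$, whereas the lemma is phrased in terms of the syndrome $a$, so the proof should begin by fixing any preimage $b$ of $a$ under $H$ and observing that the resulting affine code does not depend on the choice of preimage (different choices of $b$ differ by an element of $C$, which just permutes the codewords of $C_a$).
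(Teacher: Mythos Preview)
Your proposal is correct. For the second claim, your argument is identical to the paper's: write $c' = c + b$ and use linearity of $H$ together with $Hc = \mathbf{0}$ and $Hb = a$.

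For the first claim, you actually do more than the paper: the paper's proof addresses only the second claim and simply omits any argument for the minimum-distance assertion. Your translation-invariance argument (that $\mathrm{dist}_{Hamm}(c_1+b,\,c_2+b)=\mathrm{dist}_{Hamm}(c_1,c_2)$, so the bijection $c\mapsto c+b$ preserves all pairwise distances) is the standard and correct way to fill this gap. Your closing remark about well-definedness of $C_a$ with respect to the choice of preimage $b$ is also a nice touch that the paper does not make explicit.
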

\begin{proof}
Let $c' \in C_a$.  We know that there exists a $c \in C$ such that
$$ Hc' = H(c+b) = \textbf{0} + Hb = a.$$
\end{proof}

Lastly,  we define the concept of a Hamming ball around a point, which is helpful in understanding the point's neighborhood -- i.e., the points close to it with respect to Hamming distance.

\begin{defn}[Hamming ball]
A Hamming ball of radius $r$ around a point $p \in \{0,1\}^n$, denoted by $B_r(p)$, is the set of strings $x \in \{0,1\}^n$ such that $dist_{Hamm}(x,p) \leq r$.
\end{defn}

The volume of a Hamming ball, denoted by $Vol(B_r)$,  is independent of the point around which the ball is centered, i.e., for any point $p \in \{0,1\}^n$:
\begin{equation}
Vol(B_r) = \big |B_r(p) \big | = \sum\limits_{i=0}^{r} \big | \{ x \in \{ 0,1 \}^n : dist_{Hamm}(x,p) = i \} \big | =  \sum\limits_{i=0}^{r} {n \choose i}.
\label{eq:vol}
\end{equation}

\subsection{Proof of \Cref{thm: naive}} \label{sec: proof}
In this section, we  define the mechanisms $\cA$ and $\cB$ from the theorem statement, and then prove our result in three parts: First, we show that the first bullet in the theorem statement directly follows from setting the parameters appropriately and from \cite{DFHPRR15NIPS}. Next, we show the proof of the second bullet in two pieces. We start by showing that the algorithm $\cB$ that we define is differentially private, and then, we show that the approximate max-information of $\cB$ is small when its inputs are chosen independently. Lastly, we prove the third bullet by first showing that the adaptive composition of $\cA$ followed by $\cB$ results in the reconstruction of the input with high probability. Subsequently, we show that such a composition has large approximate max-information.

Before we define the mechanisms $\cA$ and $\cB$, we must set up some notation.  We fix $t$ such that $t =  \dfrac{8 \log (1/\delta)}{\eps} + 1$. We know that $t \geq 33$ because $\epsilon \in (0,1/2]$ and $\delta \in (0,1/4]$. Now, fix an ($(n-k) \times n)$ parity-check matrix $H$ for a linear code $C \subseteq \{ 0, 1 \}^n$ of rank $k$ over $\mathbb{F}_2$ where $t$ is the minimum distance of $C$ and $k = n - 3t \log n$, and let $r = n - k = 3t \log n$.  We can ensure the existence of $C$ from Theorem \ref{thm: existence}.

We define the mechanisms $\cA$ and $\cB$ from the theorem statement in \Cref{alg:A} and \Cref{alg:naive}, respectively.

\textit{Brief description of $\cA$:} For any input $\bbx \in \cX^n$, mechanism $\cA$ returns a vector $\bba_\bbx \in \{ 0, 1 \}^r$ such that $\bbx \in C_{\bba_\bbx}$, where $C_{\bba_\bbx}$ is an affine code with minimum distance $t$. This follows as $\bba_\bbx = \mathcal{A}(\bbx) = H\bbx$, and from \Cref{lem:codes}, as $C_{\bba_\bbx} = \{ \bbc \in \cX^n : H\bbc = \bba_\bbx \}$.

\begin{algorithm}
\caption{$\mathcal{A}$\label{alg:A}}
\KwIn{$\bbx \in \{ 0, 1 \}^n$}
\KwOut{$\bba_\bbx \in \{ 0, 1 \}^r $}

Return $H\bbx$ (multiplication in $\mathbb{F}_2$).
\end{algorithm}


\textit{Brief description of $\mathcal{B}_{\eps,\delta}$:} For any input $\bbx \in \cX^n$ and $\bba \in \{ 0, 1 \}^r$, mechanism $\mathcal{B}_{\eps,\delta}$ first computes $d_\bbx$, which is the distance of $\bbx$ from $f(\bbx)$, i.e.,  the nearest codeword to $\bbx$ in code $C_\bba$. Next, it sets $\hat{d}_\bbx$ to be $d_\bbx$ perturbed with Laplace noise $L \sim \text{Lap}(1/\eps)$. It returns $f(\bbx)$ if $\hat{d}_\bbx$ is below a threshold $w \stackrel{def}{=} \left ( \dfrac{t-1}{4} - \dfrac{\log (1/\delta)}{\eps}\right )$, and $\bot$ otherwise.

\begin{algorithm}
\caption{$\mathcal{B}_{\eps,\delta}$\label{alg:naive}}
\KwIn{$\bbx \in \{ 0, 1 \}^n$ (private) and $\bba \in \{ 0, 1 \}^r$(public)}
\KwOut{$\bbb \in \cY $}
Compute the distance of $\bbx$ to the nearest codeword in code $C_\bba$. Let $d_\bbx =\min\limits_{\bbc \in C_\bba}( dist_{Hamm}(\bbx,\bbc))$ and $f(\bbx) =  \arg\min\limits_{\bbc \in C_\bba}( dist_{Hamm}(\bbx,\bbc))$ (breaking ties arbitrarily).

Let $\hat{d}_\bbx = d_\bbx + L$, where $L \sim \text{Lap}(1/\eps)$, and $\text{Lap}(c)$ denotes a random variable having Laplace(0,$c$) distribution.

\eIf{$\hat{d}_\bbx < \left ( \dfrac{t-1}{4} - \dfrac{\log (1/\delta)}{\eps}\right )$}{Return $f(\bbx)$.}{Return $\bot$.}
\end{algorithm}

Now, we present the proof of our theorem.

\begin{proof}[Proof of \Cref{thm: naive}, part 1.]
Observe that $r = O \Bigg (\dfrac{\log (1/\delta) \log n}{\eps}\Bigg )$ from the value assigned to $t$. We know that the second statement holds by the max-information bound for mechanisms with bounded description length from \cite{DFHPRR15NIPS}.
\end{proof}

\begin{proof}[Proof of \Cref{thm: naive}, part 2.]
First, we show that $\mathcal{B}_{\eps,\delta}$ is indeed differentially private.

\begin{lemma}
$\mathcal{B}_{\eps,\delta}(\cdot, \bba)$ is $(\eps, \delta)$-differentially private for every $\bba \in \{0,1 \}^r$.
\end{lemma}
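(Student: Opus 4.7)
The plan is to combine the $(\eps,0)$-privacy of the Laplace mechanism applied to the sensitivity-$1$ statistic $d_\bbx$ with the fact that decoding in the affine code $C_\bba$ is stable under single-coordinate changes when the data is close to the code. I would fix neighboring databases $\bbx, \bbx' \in \cX^n$ and first observe that flipping a single coordinate changes the Hamming distance to any fixed codeword by at most one, so $|d_\bbx - d_{\bbx'}| \leq 1$. Consequently the noisy distance $\hat d_\bbx = d_\bbx + L$ is $(\eps,0)$-indistinguishable from $\hat d_{\bbx'}$, and by post-processing the indicator $\mathbbm{1}[\hat d_\bbx < w]$ is $(\eps,0)$-indistinguishable from $\mathbbm{1}[\hat d_{\bbx'} < w]$. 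The only remaining concern is that the codeword $f(\bbx)$ released on the ``below threshold'' branch could in principle differ from $f(\bbx')$.

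To deal with this I would split on $\min(d_\bbx, d_{\bbx'})$. In \emph{Case A} ($\min(d_\bbx, d_{\bbx'}) \leq (t-3)/2$), WLOG $d_\bbx \leq (t-3)/2$, and then $d_{\bbx'} \leq (t-1)/2$, so both lie strictly inside the unique-decoding radius. The triangle inequality gives
\[
dist_{Hamm}(f(\bbx), f(\bbx')) \;\leq\; d_\bbx + 1 + d_{\bbx'} \;\leq\; \tfrac{t-3}{2} + 1 + \tfrac{t-1}{2} \;=\; t-1 \;<\; t,
\]
which forces $f(\bbx) = f(\bbx')$ by the minimum-distance property of $C_\bba$ (\Cref{lem:codes}). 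Hence $\mathcal{B}_{\eps,\delta}(\cdot,\bba)$ reduces to a post-processing of $\hat d_\bbx$ that uses the \emph{same} fallback codeword on both neighbors, and $(\eps,0)$-DP (hence $(\eps,\delta)$-DP) follows. In \emph{Case B} ($\min(d_\bbx, d_{\bbx'}) \geq (t-1)/2$), one has $d_\bbx, d_{\bbx'} \geq (t-1)/2$, and therefore $w - d_\bbx \leq -(t-1)/4 - \log(1/\delta)/\eps$, so the Laplace tail bound yields $\prob{\hat d_\bbx < w} \leq \tfrac{1}{2}\delta$ (and the same for $\bbx'$). For any measurable $E$, the non-$\bot$ contribution to $\prob{\mathcal{B}_{\eps,\delta}(\bbx,\bba) \in E}$ is therefore at most $\delta/2$, while the $\bot$ contribution is $(\eps,0)$-indistinguishable from its analog for $\bbx'$; combining yields $\prob{\mathcal{B}_{\eps,\delta}(\bbx,\bba) \in E} \leq e^\eps \prob{\mathcal{B}_{\eps,\delta}(\bbx',\bba) \in E} + \delta$.

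The two cases are exhaustive because $d_\bbx$ and $d_{\bbx'}$ are integers differing by at most one, ruling out any ``mixed'' scenario in which one is $\leq (t-3)/2$ and the other is $\geq (t+1)/2$. I do not anticipate a serious obstacle: the one delicate point is recognizing that forcing $f(\bbx) = f(\bbx')$ really requires the stronger hypothesis $\min(d_\bbx, d_{\bbx'}) \leq (t-3)/2$ rather than the naive $\max \leq (t-1)/2$, and then verifying that the threshold $w = (t-1)/4 - \log(1/\delta)/\eps$ paired with $t = 8\log(1/\delta)/\eps + 1$ makes the Laplace tail in Case B comfortably smaller than $\delta$.
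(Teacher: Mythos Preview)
Your argument is correct and follows essentially the same route as the paper. The paper splits directly on whether $f(\bbx)=f(\bbx')$: when they agree, the output is a post-processing of the $(\eps,0)$-private noisy distance; when they disagree, the paper notes that this forces $d_\bbx,d_{\bbx'}\geq (t-1)/2$, so the probability of outputting anything other than $\bot$ is at most $\delta$ on each side, and bounds the total-variation distance by $\max\{p,p'\}\leq\delta$. Your distance-based split ($\min(d_\bbx,d_{\bbx'})\leq(t-3)/2$ versus $\geq(t-1)/2$) is the same dichotomy in contrapositive form: your Case~A proves explicitly via the triangle inequality that $f(\bbx)=f(\bbx')$, and your Case~B is exactly the paper's ``$f(\bbx)\neq f(\bbx')$'' case. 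The only cosmetic difference is that in the far case you keep the $(\eps,0)$-DP bound on the $\bot$ event and add the $\delta/2$ tail, whereas the paper simply bounds total variation by $\delta$; both are fine.
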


\begin{proof}
We will prove this lemma by following the proof of Proposition 3 in \cite{ST13}.  Fix any $\bba \in \{0,1 \}^r$. Firstly, observe that for every $\bbx \in \{0,1\}^n$, there are only 2 possible outputs for $\mathcal{B}_{\eps,\delta}(\bbx, \bba)$: $\bot$ or $f(\bbx) = \arg\min\limits_{\bbc \in C_\bba}( dist_{Hamm}(\bbx,\bbc))$. Also, $\mathcal{B}_{\eps,\delta}(\bbx, \bba) = f(\bbx)$ iff $\hat{d}_\bbx = d_\bbx + L < w$ in \Cref{alg:naive}, where $d_\bbx = \min\limits_{\bbc \in C_\bba}( dist_{Hamm}(\bbx,\bbc))$ and $L \sim \text{Lap}(1/\eps)$.

Now, for any pair of points $\bbx$ and $\bbx'$ such that $dist_{Hamm}(\bbx,\bbx') = 1$, there are two possible cases:
\begin{enumerate}
\item $f(\bbx)  \neq f(\bbx')$:

In this case,
\begin{align*}
p & \stackrel{\text{def}}{=} \Prob{}{\cB_{\eps,\delta}(\bbx, \bba) = f(\bbx)} = \Prob{}{\hat{d}_\bbx < w} =  \Prob{}{d_\bbx + L < \dfrac{t-1}{4} - \dfrac{\log (1/\delta)}{\eps}}\\
& \leq \Prob{}{\dfrac{t-1}{2} + L < \dfrac{t-1}{4} - \dfrac{\log (1/\delta)}{\eps}}  \leq \Prob{}{L < - \dfrac{\log (1/\delta)}{\eps}} \leq \delta
\end{align*}
where the first inequality follows as $f(\bbx)  \neq f(\bbx')$ implies $d_\bbx > \dfrac{t-1}{2}$, and the last inequality follows from the tail property of the Laplace distribution.  Therefore, $\Prob{}{\cB_{\eps,\delta}(\bbx, \bba) = \bot} = 1-p$.

Similarly, $p' \stackrel{\text{def}}{=}\Prob{}{\cB_{\eps,\delta}(\bbx', \bba) = f(\bbx')} \leq \delta$, and consequently, $\Prob{}{\cB_{\eps,\delta}(\bbx', \bba) = \bot} = 1-p'$.

Thus, for any set $\cO \subseteq \cY$, we can bound the following difference in terms of the total variation distance $TV(\cB_{\eps,\delta}(\bbx,\bba),\cB_{\eps,\delta}(\bbx',\bba))$ (defined in the appendix)
\begin{align*}
\big |\Prob{}{\cB_{\eps,\delta}(\bbx, \bba) \in \cO} - \Prob{}{\cB_{\eps,\delta}(\bbx', \bba) \in \cO} \big | & \leq TV(\cB_{\eps,\delta}(\bbx, \bba), \cB_{\eps,\delta}(\bbx', \bba))\\
& = \dfrac{(p-0) + (p'-0) + |(1-p) - (1-p')|}{2}  \\
& = \dfrac{p + p' + |p'-p|}{2} = \max \{ p,p' \} \leq \delta
\end{align*}
\item $f(\bbx)  = f(\bbx')$:

Observe that for every $\bbx'' \in \{0,1\}^n$,  the value of $d_{\bbx''}$ can change by at most 1 if exactly one coordinate is changed in $\bbx''$. Computing $\hat{d}_{\bbx''}$ is then just an instantiation of the Laplace mechanism, given in the appendix (\Cref{thm:lap}).  Therefore, $\hat{d}_{\bbx''}$ satisfies $(\eps,0)$-differential privacy. Notice that determining whether to output $f(\bbx) = f(\bbx')$ or $\bot$ is a post-processing function of the $(\eps,0)$-differentially private $\hat d_\bbx$, and thus, by \Cref{thm: post_proc}, $\cB_{\eps,\delta}(\cdot, \bba)$ is $(\eps,0)$-differentially private for such inputs.

\end{enumerate}
Therefore, from the above two cases, for any set $\cO \subseteq \cY$, we have that:
$$ \Prob{}{\cB_{\eps,\delta}(\bbx, \bba) \in \cO} \leq e^\eps\Prob{}{\cB_{\eps,\delta}(\bbx', \bba) \in \cO} + \delta.$$
Thus, we can conclude that $\mathcal{B}_{\eps,\delta}(\cdot, \bba)$ is $(\eps, \delta)$-differentially private for every $\bba \in \{0,1 \}^r$.
\end{proof}


Next, we look at the outcome of $\mathcal{B}_{\eps,\delta}(\bX, \bba)$ when $\bX$ is drawn uniformly over $ \cX^n$ and $\bba$ is a fixed $r$-bit string. Note that $\mathcal{B}_{\eps,\delta}(\bX, \bba)$ outputs either $\bot$ or a codeword of $C_\bba$.  Thus,



\begin{align*}
\Prob{}{\mathcal{B}_{\eps,\delta}(\bX, \bba) \neq \bot}
  = \prob{\hat{d}_\bX < w }
 = \Prob{}{d_\bX + L < \left ( \dfrac{t-1}{4} - \dfrac{\log (1/\delta)}{\eps}\right )} \numberthis \label{eqn: before_split}
\end{align*}

Now, let us define the set
$\cD = \left \{ \bbx \in \cX^n : d_\bbx < \left ( \dfrac{t-1}{4}\right ) \right \}$. If $\left ( d_\bX + L < \left ( \dfrac{t-1}{4} - \dfrac{\log (1/\delta)}{\eps}\right ) \right )$, then either $\bX \in \cD$, or $ L < - \dfrac{\log (1/\delta)}{\eps}$, or both. Thus,
\begin{equation}
\Prob{}{\mathcal{B}_{\eps,\delta}(\bX, \bba) \neq \bot} \leq \Prob{}{\bX \in \cD} + \Prob{}{ L < - \dfrac{\log (1/\delta)}{\eps}} \label{eqn: split}
\end{equation}

From the tail bound of the Laplace distribution,
\begin{equation}
\Prob{}{ L < - \dfrac{\log (1/\delta)}{\eps} } \leq \delta \label{eqn: D_2}
\end{equation}

Next, we will calculate the probability of $\bX \in \cD$. 
We then assign $s \stackrel{\text{def}}{=} \frac{t-1}{4}$. Notice that as the minimum distance of $C_\bba$ is $t$, the Hamming balls $B_{2s}$ of radius $2s$ around the codewords of $C_\bba$ are disjoint and thus we can bound the volume (defined in \eqref{eq:vol}) of each,
\begin{equation}
|C_\bba| \cdot Vol(B_{2s}) \leq 2^n \label{eqn: ball}
\end{equation}

Therefore,
\begin{align*}
|C_\bba| \cdot Vol(B_s) & \leq \dfrac{2^n\cdot Vol(B_s)}{Vol(B_{2s})}  =  2^n\cdot  \dfrac{\sum\limits_{i=0}^{s} {n \choose i} }{\sum\limits_{j=0}^{2s} {n \choose j}} \leq 2^n \cdot \dfrac{s \cdot {n \choose s}}{{n \choose 2s}}  \leq  2^n\cdot \dfrac{s \cdot \left (\dfrac{ne}{s}\right )^s}{\left (\dfrac{n}{2s}\right )^{2s}} = 2^n s \left (\dfrac{4es}{n} \right )^s \numberthis \label{eqn: vol}
\end{align*}
where the first inequality follows from equation \eqref{eqn: ball}, and the last inequality follows as  $\left ( \frac{n}{k} \right )^k \leq {n \choose k} \leq \left ( \frac{ne}{k} \right )^k$  for $k\geq 1$  (from Appendix C.1 in  \cite{CLRS09}).

Thus,
\begin{align*}
\Prob{}{\bX \in \cD} & =  \dfrac{|C_\bba| \cdot Vol(B_s)}{2^n} \leq s \left (\dfrac{4es}{n} \right )^s \numberthis \label{eqn: D_1}
\end{align*}
where the inequality follows from equation \eqref{eqn: vol}.

Hence,
\begin{align*}
\Prob{}{\mathcal{B}_{\eps,\delta}(\bX, \bba) \neq \bot} & \leq s \left (\dfrac{4es}{n} \right )^s + \delta  < s \cdot 2^{-s} + \delta \numberthis \label{eqn: not bot}
\end{align*}
where the first inequality follows from equations \eqref{eqn: split},\eqref{eqn: D_2} and \eqref{eqn: D_1}, and the last inequality follows from the fact that $n> 8es = 2e(t-1)$.

Bounding the term $s \cdot 2^{-s}$ from above, we have
\begin{align*}
s \cdot 2^{-s} & = \dfrac{t-1}{4} \cdot 2^{(1-t)/4} = \dfrac{2 \log (1/\delta)}{\eps} \cdot 2^{- 2 \log (1/\delta)/\eps} = \dfrac{2 \log (1/\delta)}{\eps} \cdot \delta^{2/\eps} \\
&  = ( \delta \log (1/\delta) ) \left ( \dfrac{2}{\eps} \cdot \delta^{(2/\eps)-2}\right )  \delta  \leq \delta \numberthis \label{eqn: s_bound}
\end{align*}
where the inequality follows as $\delta \log (1/\delta) \leq 1$ for $\delta \in \left (0, \frac{1}{4} \right ]$, and $\dfrac{2}{\eps}\cdot \delta^{(2/\eps)-2} \leq 1$ for $\eps \in \left (0, \frac{1}{2}\right ], \delta \in \left (0, \frac{1}{4}\right ]$.
From equations \eqref{eqn: not bot} and \eqref{eqn: s_bound},
\begin{equation}
\Prob{}{\mathcal{B}_{\eps,\delta}(\bX, \bba) = \bot} > 1 - 2\delta \label{eqn: bot}
\end{equation}

Now, for any $\bbx \in  \cX^n$,
\begin{align*}
\log \left ( \dfrac{\Prob{}{(\bX,\mathcal{B}_{\eps,\delta}(\bX, \bba))= (\bbx, \bot)}}{\Prob{}{\bX \otimes \mathcal{B}_{\eps,\delta}(\bX, \bba) = (\bbx, \bot)}} \right )  &=\log \left ( \dfrac{\Prob{}{\mathcal{B}_{\eps,\delta}(\bX, \bba) = \bot | \bX = \bbx}}{\Prob{}{\mathcal{B}_{\eps,\delta}(\bX, \bba) = \bot} }\right)  < \log \left (\dfrac{1}{1 - 2 \delta} \right ) \\
& \leq \log \left (\dfrac{1}{1 - 0.5} \right )  = 1 \numberthis \label{eqn: ln_bound}
\end{align*}
where the first inequality follows from equation \eqref{eqn: bot}, and the second inequality follows from the fact that $\delta \leq \frac{1}{4}$.

We then apply Lemma \ref{lem:boundmaxinfo} using \eqref{eqn: bot} and \eqref{eqn: ln_bound} to get,
\begin{equation*}
I_{\infty}^{\beta}(\bX;\mathcal{B}_{\eps,\delta}(\bX, \bba)) \leq 1 \text{, for } \beta \geq 2\delta.
\end{equation*}
\end{proof}

\begin{proof}[Proof of \Cref{thm: naive}, part 3.]
Let us look at the outcome of $\mathcal{B}_{\eps,\delta}(\bbx, \mathcal{A}(\bbx))$. First, as $\bbx \in C_{\mathcal{A}(\bbx)}$, $f(\bbx) = \bbx$ and $d_\bbx=0$. Thus, $\mathcal{B}_{\eps,\delta}(\bbx, \mathcal{A}(\bbx))$ will either return $\bbx$ or $\bot$. Furthermore, we can show the probability of outputting $\bbx$ is high:
\begin{align*}
\Prob{\substack{\text{coins}\\\text{of } \mathcal{B}_{\eps,\delta}}}{\mathcal{B}_{\eps,\delta}(\bbx, \mathcal{A}(\bbx)) = \bbx} = \Prob{}{  \hat{d}_\bbx < w } \geq \Prob{}{ \hat{d}_\bbx < \dfrac{\log (1/\delta)}{\eps}} = \Prob{}{  \text{Lap}(1/\eps) < \dfrac{\log (1/\delta)}{\eps} }  \geq 1 - \delta
\end{align*}
where the first inequality follows from the fact that $ \left ( \dfrac{t-1}{4} - \dfrac{\log (1/\delta)}{\eps}\right ) \geq \dfrac{\log (1/\delta)}{\eps}$, the equality after it follows since $d_\bbx = 0$, and the last inequality follows from a tail bound of the Laplace distribution.
Thus, for every $\bbx \in \cX^n$,
\begin{align}
 \Prob{}{\mathcal{B}_{\eps,\delta}(\bbx, \mathcal{A}(\bbx)) = \bbx} & \geq 1 - \delta. \label{eqn: equal to x}
\end{align}
Consider the event $\cD_\cX = \{ (\bbx,\bbx) : \bbx \in \cX^n \} $. From equation \eqref{eqn: equal to x},

\begin{align}
\Prob{}{ (\bX, \mathcal{B}_{\eps,\delta}(\bX, \mathcal{A}(\bX)) \in \cD_\cX} \geq 1 - \delta. \label{eqn: x dep}
\end{align}
Also, for $\bbb \in \cY $, if $\bbb = \bot$, then $ \Prob{}{\bX = \bbb} = 0$, and if $\bbb \in \cX^n$, then $\Prob{}{\bX=\bbb} = 2^{-n}$ as $\bX$ is drawn uniformly over $ \cX^n$. Thus, for all $\bbb\in \cY$,
\begin{align*}
\Prob{}{ (\bX, \bbb) \in \cD_\cX} \leq 2^{-n}.
\end{align*}
Hence,
\begin{align*}
\Prob{}{ (\bX \otimes \mathcal{B}_{\eps,\delta}(\bX, \mathcal{A}(\bX))) \in \cD_\cX }= \sum_{\bbb \in \cY}\prob{(\bX,\bbb) \in \cD_\cX}\prob{\cB_{\eps,\delta}(\bX,\cA(\bX))= \bbb} \leq 2^{-n} \numberthis \label{eqn: x ran}
\end{align*}
Therefore, for $\beta \leq \dfrac{1}{2} - \delta$,
\begin{align*}
I_{\infty}^{\beta}(\bX;\mathcal{B}_{\eps,\delta}(\bX, \mathcal{A}(\bX))) & =  \log \left( \max\limits_{\substack{\mathcal{O} \subseteq (\mathcal{\bX} \times \mathcal{Y}) , \\ \Prob{}{(\bX, \mathcal{B}_{\eps,\delta}(\bX, \mathcal{A}(X))) \in \mathcal{O} } > \beta }} \dfrac{\Prob{}{(\bX, \mathcal{B}_{\eps,\delta}(\bX, \mathcal{A}(\bX))) \in \mathcal{O} } - \beta}{\Prob{}{(\bX  \otimes \mathcal{B}_{\eps,\delta} (\bX, \mathcal{A}(\bX))) \in \mathcal{O}}} \right) \\
 & \geq \log \left( \dfrac{\Prob{}{(\bX, \mathcal{B}_{\eps,\delta}(\bX, \mathcal{A}(\bX))) \in \cD_\cX } - \beta}{\Prob{}{(\bX  \otimes \mathcal{B}_{\eps,\delta} (\bX, \mathcal{A}(\bX))) \in \cD_\cX}} \right ) \geq \log \left(\dfrac{1 - \delta - \beta}{2^{-n}} \right) \\
 & = n + \log(1 - \delta - \beta)  \geq n - 1
\end{align*}
where the first inequality follows from equation \eqref{eqn: x dep} and as $(1-\delta)>\beta$, the second inequality follows from equations \eqref{eqn: x dep} and \eqref{eqn: x ran}, and the last inequality follows from the fact that $\beta \leq \dfrac{1}{2} - \delta$.
\end{proof}
\fi

\section*{Acknowlegments}

\ifnum\focs=1
R.R. acknowledges support in part by a grant from the Sloan foundation, and NSF grant CNS-1253345. A.R. acknowledges support  in part by a grant from the Sloan foundation, a Google Faculty Research Award, and NSF grants CNS-1513694 and CNS-1253345. O.T. and A.S. acknowledge support  in part by a grant from the Sloan foundation, a Google Faculty Research Award, and NSF grant IIS-1447700.  
\fi
We thank Salil Vadhan for pointing out that our max-information bound can be used to bound mutual information, thus improving on a result in \cite{MMPRTV11}.


{\small
\ifnum\focs=1
\bibliographystyle{IEEEtran}
\else
\bibliographystyle{alpha}
\fi
\bibliography{thmrefs}
}

\ifnum\focs=0
\appendix
\section{Other Useful Probabilistic Tools} \label{app_probtools}
We use this section to give an overview of some useful probabilistic tools and their connections with one another.  We start by giving a commonly used differentially private mechanism, called the \emph{Laplace Mechanism}, which releases an answer to a query on the dataset with appropriately scaled Laplace noise.  We use this mechanism in the proof of \Cref{thm: naive}.

\begin{theorem}[Laplace Mechanism \cite{DMNS06}]
Let $\phi: \cX^n \rightarrow \mathbb{R}$ be a function such that for any pair of points $\bbx$ and $\bbx'$ where  $dist_{Hamm}(\bbx,\bbx') = 1$, then we have $|\phi(\bbx) - \phi(\bbx')| \leq 1$.  The mechanism $\cM(\bbx) = \phi(\bbx) + L$ where $L \sim \text{Lap}(1/\eps)$, is $(\eps,0)$-differentially private.
\label{thm:lap}
\end{theorem}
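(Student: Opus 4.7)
The plan is to compare the output densities of $\cM(\bbx)$ and $\cM(\bbx')$ pointwise and then derive the $(\eps,0)$-indistinguishability bound by integration. First I would fix an arbitrary pair of neighboring databases $\bbx, \bbx' \in \cX^n$ with $dist_{Hamm}(\bbx,\bbx') = 1$, so that the sensitivity hypothesis of the theorem gives $|\phi(\bbx) - \phi(\bbx')| \leq 1$. Since $L \sim \mathrm{Lap}(1/\eps)$ has density $\tfrac{\eps}{2} e^{-\eps|z|}$ on $\mathbb{R}$, the translated random variable $\cM(\bbx) = \phi(\bbx) + L$ has density $f_{\bbx}(y) = \tfrac{\eps}{2} e^{-\eps|y - \phi(\bbx)|}$ at each $y \in \mathbb{R}$.

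Next I would form the pointwise ratio of densities and bound it using the reverse triangle inequality:
\begin{equation*}
\frac{f_{\bbx}(y)}{f_{\bbx'}(y)} \;=\; \exp\!\bigl(\eps\,(|y-\phi(\bbx')| - |y-\phi(\bbx)|)\bigr) \;\leq\; \exp\!\bigl(\eps \cdot |\phi(\bbx) - \phi(\bbx')|\bigr) \;\leq\; e^{\eps},
\end{equation*}
and this bound holds uniformly in $y$.

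Finally, I would integrate this uniform pointwise bound over an arbitrary measurable set $S \subseteq \mathbb{R}$ to obtain $\Pr[\cM(\bbx) \in S] \leq e^{\eps}\,\Pr[\cM(\bbx') \in S]$, and observe that the argument is symmetric in $\bbx$ and $\bbx'$, so the matching inequality in the other direction follows by the same computation. Together these two inequalities are precisely $(\eps,0)$-differential privacy (no additive $\delta$ term appears because the ratio is bounded pointwise everywhere, not just on a high-probability event).

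There is essentially no real obstacle here: the entire argument is a one-line application of the reverse triangle inequality to the exponent of a Laplace density. The only subtlety worth stating explicitly is that the sensitivity hypothesis must be invoked at \emph{neighboring} inputs, which is exactly the regime in which differential privacy quantifies, so no additional care about the metric on $\cX^n$ is required.
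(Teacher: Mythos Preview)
Your argument is correct and is exactly the standard proof of the Laplace mechanism's privacy guarantee. Note, however, that the paper does not actually prove this statement: it is quoted as a known result from \cite{DMNS06} and used as a black box (in the proof of \Cref{thm: naive}, part 2), so there is no paper-side proof to compare against.
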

A very useful fact about differentially private mechanisms is that one cannot take the output of a differentially private mechanism and perform any modification to it that does not depend on the input itself and make the output any less private.
\begin{theorem}[Post Processing \cite{DMNS06}]
\label{thm:pp}
Let $\cM: \cX^n \to \cY$ be $(\epsilon,\delta)$-differentially private and $\psi: \cY \to \cY'$ be any function mapping to arbitrary domain $\cY'$.  Then $\psi \circ \cM$ is $(\epsilon,\delta)$-differentially private.
\label{thm: post_proc}
\end{theorem}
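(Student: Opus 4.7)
The plan is to reduce the claim to the case of a deterministic post-processing function, and then exploit the fact that differential privacy is defined by a guarantee that holds for \emph{every} measurable output event. Throughout, fix an arbitrary pair of neighboring datasets $\bbx, \bbx' \in \cX^n$, so that by hypothesis $\cM(\bbx) \edi \cM(\bbx')$ in the sense of the indistinguishability definition given earlier in the paper.

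First I would handle the deterministic case: suppose $\psi : \cY \to \cY'$ is a fixed (measurable) function. For any event $\cO' \subseteq \cY'$, define its preimage $\cO = \psi^{-1}(\cO') = \{ y \in \cY : \psi(y) \in \cO' \} \subseteq \cY$. By definition of $\psi \circ \cM$ we have $\Prob{}{\psi(\cM(\bbx)) \in \cO'} = \Prob{}{\cM(\bbx) \in \cO}$, and likewise for $\bbx'$. Applying the $(\eps,\delta)$-differential privacy guarantee of $\cM$ to the event $\cO$ immediately yields
\[
\Prob{}{\psi(\cM(\bbx)) \in \cO'} = \Prob{}{\cM(\bbx) \in \cO} \leq e^{\eps} \Prob{}{\cM(\bbx') \in \cO} + \delta = e^{\eps} \Prob{}{\psi(\cM(\bbx')) \in \cO'} + \delta,
\]
and symmetrically in the other direction. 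Since $\cO'$ was arbitrary, $\psi \circ \cM$ is $(\eps,\delta)$-differentially private.

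Next I would extend to the randomized case, which is the only mildly subtle part. Any randomized $\psi$ can be written as $\psi(y) = \tilde\psi(y, R)$, where $R$ is auxiliary randomness drawn independently of $\cM$ and $\tilde\psi$ is deterministic. For each fixed outcome $R = r$, the map $y \mapsto \tilde\psi(y, r)$ is deterministic, so the argument above gives that $\tilde\psi(\cM(\cdot), r)$ is $(\eps,\delta)$-differentially private. Averaging over $r$ (using independence of $R$ from $\cM$ and the tower property), for any $\cO' \subseteq \cY'$,
\[
\Prob{}{\psi(\cM(\bbx)) \in \cO'} = \Ex{R}{\Prob{}{\tilde\psi(\cM(\bbx), R) \in \cO' \mid R}} \leq \Ex{R}{e^{\eps} \Prob{}{\tilde\psi(\cM(\bbx'), R) \in \cO' \mid R} + \delta} = e^{\eps} \Prob{}{\psi(\cM(\bbx')) \in \cO'} + \delta,
\]
with the symmetric inequality following identically. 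This completes the proof.

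The only potential obstacle is making sure the reduction from randomized to deterministic post-processing is formally clean (measurability of $\tilde\psi$, independence of $R$ from $\cM$). Since we are free to choose the representation of $\psi$, this is essentially a modeling assumption, so no real difficulty arises; the heart of the result is just that preimages of events are events, so DP on the larger output space automatically implies DP on any function of it.
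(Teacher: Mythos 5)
Your proof is correct, and it is the standard argument: the paper states this result as a citation to \cite{DMNS06} without giving a proof, so there is nothing to diverge from. The preimage argument for deterministic $\psi$ followed by conditioning on independent auxiliary randomness for the randomized case is exactly the canonical proof of post-processing.
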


We have been focusing on approximate max-information throughout this paper, which has the following strong composition guarantee:
\begin{theorem}[Composition \citep{DFHPRR15NIPS}]
Let $\cA_1:\cX^n\to\cY$ and $\cA_2: \cX^n \times \cY \to \cZ$ be such that $I_\infty^{\beta_1}(\cA_1,n) \leq k_1$ and $I_\infty^{\beta_2}(\cA_2(\cdot,y),n) \leq k_2$ for every $y \in \cY$.  Then the composition of $\cA_1$ and $\cA_2$, defined to be $\cA(\bX) = \cA_2(\bX,\cA_1(\bX))$ satisfies:
$$
I_\infty^{\beta_1+\beta_2}(\cA,n) \leq k_1 + k_2.
$$
\label{lem:maxinfocomp}
\end{theorem}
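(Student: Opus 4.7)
}
The plan is to prove the composition bound by a hybrid argument that peels the two layers of the composition apart, combining the max-information guarantees for $\cA_1$ and $\cA_2$ at the level of joint density ratios. Throughout, let $\bX$ be the dataset, $Y = \cA_1(\bX)$, and $Z = \cA_2(\bX, Y)$, and fix an arbitrary event $\cO \subseteq \cX^n \times \cZ$. The goal is to show $\prob{(\bX,Z) \in \cO} \leq 2^{k_1+k_2}\,\prob{\bX \otimes Z \in \cO} + \beta_1 + \beta_2$.

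The first step is to switch from the event-based definition of approximate max-information to an equivalent ``bad set'' reformulation (standard in \cite{DFHPRR15NIPS}): whenever $I_\infty^\beta(X;Y) \leq k$, there exists a set $B \subseteq \mathrm{supp}(X) \times \mathrm{supp}(Y)$ with $\prob{(X,Y) \in B} \leq \beta$ such that, for every $(x,y) \notin B$, the pointwise density ratio is controlled: $\prob{(X,Y) = (x,y)} \leq 2^k\, \prob{X \otimes Y = (x,y)}$. I expect this reformulation to be the main technical obstacle, since a naive hybrid argument that applies each event-based bound in turn produces the looser additive term $2^{k_2}\beta_1 + \beta_2$ rather than $\beta_1 + \beta_2$.

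With the bad-set form in hand, I apply it twice. First, apply it to $(\bX,Y)$ to obtain $B_1 \subseteq \cX^n \times \cY$ with $\prob{(\bX,Y) \in B_1} \leq \beta_1$, outside which $p(\bbx,y) \leq 2^{k_1}\,p(\bbx)\,p(y)$. Second, for each $y$, apply it to $\cA_2(\cdot,y)$ with input distribution $\bX \mid Y = y$ to obtain $B_2^y \subseteq \cX^n \times \cZ$ with $\prob{(\bX,Z) \in B_2^y \mid Y = y} \leq \beta_2$, outside which $p(\bbz \mid \bbx, y) \leq 2^{k_2}\, p(\bbz \mid y)$. Define the joint bad set
\[
B = (B_1 \times \cZ) \;\cup\; \{(\bbx,y,\bbz) : (\bbx,\bbz) \in B_2^y\} \subseteq \cX^n \times \cY \times \cZ,
\]
whose total probability under $(\bX,Y,Z)$ is at most $\beta_1 + \beta_2$ by a union bound (the second piece integrates the conditional bound $\beta_2$ over $y$). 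For any $(\bbx,y,\bbz) \notin B$, the factorization $p(\bbx,y,\bbz) = p(\bbx,y)\,p(\bbz \mid \bbx,y)$ is dominated by $2^{k_1} p(\bbx) p(y) \cdot 2^{k_2} p(\bbz \mid y) = 2^{k_1+k_2} p(\bbx)\, p(y,\bbz)$.

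Finally, I split $\prob{(\bX,Z) \in \cO}$ according to whether $(\bX,Y,Z) \in B$: the ``in $B$'' contribution is at most $\beta_1 + \beta_2$, while the ``outside $B$'' contribution is bounded by $2^{k_1+k_2} \sum_{(\bbx,\bbz) \in \cO} p(\bbx) \sum_y p(y,\bbz) = 2^{k_1+k_2}\,\prob{\bX \otimes Z \in \cO}$, where the inner sum telescopes via the marginal $p(\bbz) = \sum_y p(y,\bbz)$. Summing the two contributions yields the desired event-based inequality, which by definition gives $I_\infty^{\beta_1+\beta_2}(\cA,n) \leq k_1 + k_2$.
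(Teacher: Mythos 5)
The paper does not actually prove this statement---it is imported verbatim from \cite{DFHPRR15NIPS}---so there is no internal proof to compare against; judged on its own terms, your argument has a genuine gap at its very first step. You assert that $I_\infty^\beta(X;Y)\le k$ is \emph{equivalent} to the existence of a ``bad set'' $B$ with $\prob{(X,Y)\in B}\le\beta$ outside of which $\prob{(X,Y)=(x,y)}\le 2^k\prob{X\otimes Y=(x,y)}$ pointwise. Only one direction of this holds with matching parameters, namely ``pointwise $\Rightarrow$ event-based'' (this is \Cref{lem:boundmaxinfo}). The converse fails: the event-based definition only controls the \emph{excess mass} $\sum_{(x,y)\in\cO}\bigl(\prob{(X,Y)=(x,y)}-2^k\prob{X\otimes Y=(x,y)}\bigr)$, not the probability of the set where the ratio is large. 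If the ratio equals $2^k(1+\eta)$ on a set $S$, then $S$ contributes at most $\eta\cdot\prob{(X,Y)\in S}$ to the excess mass, so one can have $I_\infty^{\beta}(X;Y)\le k$ with $\beta=O(\eta)$ arbitrarily small while every admissible bad set must contain $S$, whose probability may be constant (e.g.\ $1/2$). The correct conversion from the event-based to the pointwise form is therefore necessarily lossy in both parameters---see \Cref{lem:prelims}, part 2, by which $(\eps,\delta)$-indistinguishability yields only pointwise $\left(2\eps,\tfrac{2\delta}{1-e^{-\eps}}\right)$-indistinguishability. Routing your argument through that conversion would prove roughly $I_\infty^{\beta'}(\cA,n)\le 2(k_1+k_2)$ with $\beta'\approx\tfrac{2\beta_1}{1-2^{-k_1}}+\tfrac{2\beta_2}{1-2^{-k_2}}$, which is strictly weaker than the stated conclusion and degenerates as $k_i\to 0$.

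Everything downstream of that step is sound: the union bound giving a three-variable bad set of probability at most $\beta_1+\beta_2$, the factorization $p(\bbx,y,\bbz)=p(\bbx,y)\,p(\bbz\mid\bbx,y)\le 2^{k_1+k_2}\,p(\bbx)\,p(y,\bbz)$ off that set, and the marginalization over $y$ are all correct, and you are also right that the naive event-based hybrid only yields the additive term $2^{k_2}\beta_1+\beta_2$. But this means the entire burden of obtaining $\beta_1+\beta_2$ rests on the unproved---and, as an exact equivalence, false---reformulation in your first step, so the proposal as written does not establish the theorem.
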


In our analysis, in addition to max-information, we will use more familiar measures between two random variables, which we give here:
\begin{defn}[KL Divergence] \label{defn: KL}
The KL Divergence between random variables $X$ and $Z$, denoted as $D_{KL}(X|| Z)$ over domain $\cD$ is defined as
$$
D_{KL}(X|| Z)=\sum_{x \in \cD} \Prob{}{X = x} \ln\left( \frac{\Prob{}{X=x}} {\Prob{}{Z = x} } \right)
$$
\end{defn}
\begin{defn}[Total Variation Distance]
The \emph{total variation distance} between two random variables $X$ and $Z$, denoted as $TV(X;Z)$, over domain $\cD$ is defined as
$$
TV(X,Z) = \frac{1}{2} \cdot \sum_{x \in \cD} | \Prob{}{X = x} - \Prob{}{Z = x} |.
$$
\end{defn}

In the following lemma, we state some basic connections between max-information, total variation distance, differential privacy, and indistinguishability:
\begin{lemma} \label{lem:prelims}
 Let $X, Z$ be two random variables over the same domain.  We then have:
\begin{enumerate}
\item \cite{DFHPRR15NIPS} $I_\infty^\beta (X;Z) \leq k \Leftrightarrow (X,Z) \approx_{\left(k\ln2\right),\beta} X \otimes Z$.
\item \citep{KS14} If $X \edi Y$ then $X$ and $Y$ are pointwise $\left(2\eps, \frac{2\delta}{1-e^{-\eps}} \right) $-indistinguishable.
\end{enumerate}
\end{lemma}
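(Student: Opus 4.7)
The plan is to prove the two bullets of \Cref{lem:prelims} separately, each by directly unpacking the relevant definitions.

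For the first bullet, I would first argue that the condition $I^\beta_\infty(X;Z) \leq k$ is equivalent to the statement that for every event $\mathcal{O} \subseteq \mathcal{X} \times \mathcal{Z}$, $\prob{(X,Z) \in \mathcal{O}} \leq 2^k \prob{X \otimes Z \in \mathcal{O}} + \beta$. The supremum definition of max-information rearranges directly to this inequality when $\prob{(X,Z) \in \mathcal{O}} > \beta$, and the inequality is trivial otherwise; the converse implication is immediate from the same algebra. Since $2^k = e^{k \ln 2}$, this matches one direction of $(X,Z) \approx_{k \ln 2, \beta} X \otimes Z$; the symmetric direction (with the roles of $(X,Z)$ and $X \otimes Z$ swapped) follows by applying the same reasoning to the complement event $\mathcal{O}^c$.

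For the second bullet, the approach is a standard ``bad event'' decomposition. I would define $B_+ = \{a : \prob{X = a} > e^{2\eps}\prob{Y = a}\}$ and $B_- = \{a : \prob{X = a} < e^{-2\eps}\prob{Y = a}\}$, so that a violation of point-wise $(2\eps,\delta')$-indistinguishability at some $a$ is precisely $a \in B_+ \cup B_-$. The task then reduces to bounding each of $\prob{X \in B_+}$ and $\prob{X \in B_-}$ by $\delta/(1-e^{-\eps})$ and concluding via a union bound.

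For $B_+$, summing the defining pointwise inequality over $a \in B_+$ yields $\prob{X \in B_+} > e^{2\eps}\prob{Y \in B_+}$. Combining this with the hypothesis $X \edi Y$, which in particular gives $\prob{X \in B_+} \leq e^\eps \prob{Y \in B_+} + \delta$, and eliminating $\prob{Y \in B_+}$ between the two inequalities yields $\prob{X \in B_+} < \delta/(1 - e^{-\eps})$. The bound on $\prob{X \in B_-}$ follows by the symmetric argument, now invoking the other direction of $X \edi Y$, namely $\prob{Y \in B_-} \leq e^\eps \prob{X \in B_-} + \delta$, together with $\prob{Y \in B_-} > e^{2\eps}\prob{X \in B_-}$ (obtained by summing the defining inequality of $B_-$ over $a \in B_-$). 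The main thing to track is which direction of the two-sided indistinguishability is invoked for each bad set; the algebra is elementary and I do not expect a substantive technical obstacle.
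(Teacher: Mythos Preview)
The paper does not give its own proof of this lemma; both items are simply quoted from \cite{DFHPRR15NIPS} and \cite{KS14} without argument, so there is no paper proof to compare against. Your argument for the second item is correct and is the standard one.

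Your argument for the first item has a real gap. You correctly show that $I_\infty^\beta(X;Z)\leq k$ is equivalent to the \emph{one-sided} inequality $\prob{(X,Z)\in\mathcal{O}}\leq 2^k\prob{X\otimes Z\in\mathcal{O}}+\beta$ for all $\mathcal{O}$. But the complement trick does not deliver the reverse inequality required by the two-sided relation $\approx_{k\ln 2,\beta}$. Applying the forward bound to $\mathcal{O}^c$ gives
\[
1-\prob{(X,Z)\in\mathcal{O}}\;\leq\;2^k\bigl(1-\prob{X\otimes Z\in\mathcal{O}}\bigr)+\beta,
\]
which rearranges to $\prob{X\otimes Z\in\mathcal{O}}\leq 1-2^{-k}+2^{-k}\prob{(X,Z)\in\mathcal{O}}+2^{-k}\beta$, not to $\prob{X\otimes Z\in\mathcal{O}}\leq 2^k\prob{(X,Z)\in\mathcal{O}}+\beta$. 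In fact the reverse inequality can fail outright: take $X$ uniform on $\{0,1\}$ and $Z=X$, so $I_\infty^0(X;Z)=1$; for $\mathcal{O}=\{(0,1)\}$ we have $\prob{X\otimes Z\in\mathcal{O}}=1/4$ while $2^1\cdot\prob{(X,Z)\in\mathcal{O}}+0=0$. The equivalence in the lemma should therefore be read with the one-sided form of indistinguishability (this is what \cite{DFHPRR15NIPS} actually establishes, and it is all the paper uses downstream); drop the complement step rather than try to repair it.
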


Another useful result is from \cite{KS14}, which we use in the proof of our main result in \Cref{thm:main}:
\begin{lemma}[Conditioning Lemma]\label{lem:conditioning}
Suppose that $(X,Z) \edi (X',Z')$.  Then for every $\hat \delta>0$, the following holds:
$$
\Prob{t \sim p(Z) }{ X|_{Z = t} \approx_{3\epsilon, \hat\delta} X'|_{Z'=t} } \geq 1-\frac{2\delta}{\hat\delta} - \frac{2\delta}{1-e^{-\eps}}.
$$
\end{lemma}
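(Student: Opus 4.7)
The plan is to derive the result in three stages: first extract marginal indistinguishability, then use pointwise indistinguishability on the marginals to identify a ``good'' set of $t$ values on which the ratio $\Pr[Z=t]/\Pr[Z'=t]$ is controlled, and finally use a Markov-style aggregation argument against the joint indistinguishability to bound the set of $t$ on which the conditionals fail to be $(3\epsilon,\hat\delta)$-indistinguishable.

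First I would observe that $(X,Z) \edi (X',Z')$ implies $Z \edi Z'$ (by applying the defining inequalities to product events of the form $\mathcal{X} \times \mathcal{O}$). Then I would apply the second item of \Cref{lem:prelims} to conclude that $Z$ and $Z'$ are pointwise $\left(2\epsilon, \frac{2\delta}{1-e^{-\epsilon}}\right)$-indistinguishable. This gives a set $G$ of ``marginally good'' values with $\Pr_{t \sim p(Z)}[t \in G] \geq 1 - \frac{2\delta}{1-e^{-\epsilon}}$, such that for every $t \in G$, $e^{-2\epsilon}\Pr[Z'=t] \le \Pr[Z=t] \le e^{2\epsilon}\Pr[Z'=t]$. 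This handles the additive $\frac{2\delta}{1-e^{-\epsilon}}$ term in the conclusion.

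Next I would define two ``conditionally bad'' sets,
\[
B_1 = \Big\{t : \exists \mathcal{O} \text{ with } \Pr[X \in \mathcal{O} \mid Z=t] > e^{3\epsilon}\Pr[X' \in \mathcal{O} \mid Z'=t] + \hat\delta\Big\},
\]
and the symmetric $B_2$ with the roles of $(X,Z)$ and $(X',Z')$ swapped. For each $t \in B_1 \cap G$ pick a witness event $\mathcal{O}_t \subseteq \mathcal{X}$ and form the joint event $\mathcal{O} = \bigcup_{t \in B_1 \cap G} \mathcal{O}_t \times \{t\}$. Multiplying the witness inequality $\Pr[X \in \mathcal{O}_t \mid Z=t] > e^{3\epsilon}\Pr[X' \in \mathcal{O}_t \mid Z'=t] + \hat\delta$ through by $\Pr[Z=t]$ and using the pointwise bound $\Pr[Z=t] \ge e^{-2\epsilon}\Pr[Z'=t]$ on $G$, the factor $e^{3\epsilon}$ on the right absorbs $e^{-2\epsilon}$ to leave exactly $e^{\epsilon}\Pr[X' \in \mathcal{O}_t, Z'=t]$. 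Summing over $t \in B_1 \cap G$ and comparing with the joint indistinguishability applied to $\mathcal{O}$ (which provides the upper bound $e^{\epsilon}\sum \Pr[X' \in \mathcal{O}_t, Z'=t] + \delta$) cancels the $e^\epsilon$ terms and leaves $\hat\delta \cdot \sum_{t \in B_1 \cap G}\Pr[Z=t] \le \delta$, i.e. $\Pr_{t \sim p(Z)}[t \in B_1 \cap G] \le \delta/\hat\delta$. A parallel argument with the symmetric witness events and the other direction of joint indistinguishability bounds $\Pr_{t \sim p(Z)}[t \in B_2 \cap G]$ by $\delta/\hat\delta$ (here the pointwise bound is used in the other direction to translate between the $p(Z)$- and $p(Z')$-mass of the bad set). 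A final union bound over $B_1 \cap G$, $B_2 \cap G$, and $G^c$ yields the claimed probability $\frac{2\delta}{\hat\delta} + \frac{2\delta}{1-e^{-\epsilon}}$.

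The main obstacle is the asymmetry between the two directions of bad events: the aggregation trick natively gives a clean bound only against $p(Z)$ for $B_1$ and against $p(Z')$ for $B_2$. The slack of $e^\epsilon$ (the gap between the conditional parameter $3\epsilon$ and the sum $2\epsilon + \epsilon$ coming from pointwise-on-$G$ plus joint indistinguishability) is precisely what lets the pointwise bounds on $G$ convert both directions into bounds on the $p(Z)$-mass of the bad set without losing constants beyond $2\delta/\hat\delta$. Handling support carefully (values with $\Pr[Z=t] = 0$ are immaterial under $t \sim p(Z)$, and values in $G$ automatically have $\Pr[Z'=t] > 0$) keeps the conditional probabilities well-defined throughout.
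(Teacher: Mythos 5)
A framing note first: the paper does not prove this lemma itself---it imports it from \cite{KS14}---so your argument has to stand on its own. Your architecture (pass to the marginals to get a pointwise-good set $G$ via the second item of \Cref{lem:prelims}, then run a union-of-witnesses/Markov argument against the joint indistinguishability guarantee for each direction of the conditional bound) is the natural one and is surely the same as the source's. The first direction is correct: for $t \in B_1 \cap G$, multiplying the witness inequality by $\Pr[Z=t]$ and applying $\Pr[Z=t] \ge e^{-2\epsilon}\Pr[Z'=t]$ \emph{only to the multiplicative term} gives $\Pr[X \in \mathcal{O}_t, Z=t] > e^{\epsilon}\Pr[X'\in\mathcal{O}_t, Z'=t] + \hat\delta \Pr[Z=t]$, and summing against the joint guarantee yields $\Pr_{t\sim p(Z)}[B_1\cap G] \le \delta/\hat\delta$, exactly as you say.

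The gap is in the second direction. There is no ``slack of $e^\epsilon$'': the budget $3\epsilon$ is exactly $2\epsilon$ (pointwise on $G$) plus $\epsilon$ (joint), with nothing left over to pay for the change of measure. Concretely, for $t \in B_2 \cap G$ the witness inequality multiplied by $\Pr[Z'=t]$ reads $\Pr[X'\in\mathcal{O}_t,Z'=t] > e^{3\epsilon}\Pr[X\in\mathcal{O}_t\mid Z=t]\,\Pr[Z'=t] + \hat\delta\,\Pr[Z'=t]$; converting the first term on the right already consumes the full $e^{-2\epsilon}$ (leaving $e^{\epsilon}$, which the joint guarantee exactly cancels), so the additive term can only be lower-bounded by $\hat\delta e^{-2\epsilon}\Pr[Z=t]$, and the aggregation gives $\Pr_{t\sim p(Z)}[B_2\cap G] \le e^{2\epsilon}\delta/\hat\delta$, not $\delta/\hat\delta$. (Equivalently: keeping the additive term as $\hat\delta\Pr[Z'=t]$ bounds the $p(Z')$-mass of $B_2\cap G$ by $\delta/\hat\delta$, and translating that to $p(Z)$-mass via the pointwise bound again costs $e^{2\epsilon}$.) As written, your argument therefore establishes the lemma with failure probability $(1+e^{2\epsilon})\delta/\hat\delta + 2\delta/(1-e^{-\epsilon})$ rather than $2\delta/\hat\delta + 2\delta/(1-e^{-\epsilon})$ (or, alternatively, with additive parameter $e^{2\epsilon}\hat\delta$ in one direction). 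This weaker constant would still suffice for every use of the lemma in the paper, where $\epsilon \le 1/2$ and only the order of magnitude enters \Cref{thm:main}, but it does not prove the statement as quoted; closing the gap requires either a genuinely different treatment of the second direction or accepting the degraded constant.
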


The proof of our main result in \Cref{thm:main} also makes use of the following standard concentration inequality:
\begin{theorem}[Azuma's Inequality]
Let $C_1, \cdots, C_n$ be a sequence of random variables such that for every $i \in [n]$, we have
$$
\Prob{}{|C_i| \leq \alpha} = 1
$$
and for every fixed prefix $\bC_1^{i-1} = \bbc_1^{i-1}$, we have
$$
\Ex{}{C_i|\bbc_1^{i-1}} \leq \gamma,
$$
then for all $t\geq 0$, we have
$$
\Prob{}{\sum_{i=1}^nC_i > n \gamma + t \sqrt{n} \alpha} \leq e^{-t^2/2}.
$$
\label{thm:azuma}
\end{theorem}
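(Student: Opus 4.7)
The plan is to follow the classical Chernoff-plus-Hoeffding recipe for martingale concentration. First, I decompose each $C_i$ into its conditional mean and a centered ``difference'' piece: set $P_i = \Ex{}{C_i | \bC_1^{i-1}}$ and $D_i = C_i - P_i$. By hypothesis, $\Ex{}{C_i | \bbc_1^{i-1}} \leq \gamma$ for every realization $\bbc_1^{i-1}$, so $P_i \leq \gamma$ almost surely, and hence $\sum_{i=1}^n P_i \leq n\gamma$ almost surely. Therefore
\[
\Prob{}{\textstyle\sum_{i=1}^n C_i > n\gamma + t\sqrt{n}\,\alpha} \;\leq\; \Prob{}{\textstyle\sum_{i=1}^n D_i > t\sqrt{n}\,\alpha},
\]
and the remaining task is to concentrate the centered, zero-mean martingale difference sequence $\{D_i\}$.

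Next, I apply the exponential Markov inequality: for any $\lambda > 0$,
\[
\Prob{}{\textstyle\sum_{i=1}^n D_i > s} \;\leq\; e^{-\lambda s}\, \Ex{}{e^{\lambda \sum_{i=1}^n D_i}}.
\]
The joint MGF is handled by peeling off $D_n, D_{n-1}, \ldots, D_1$ one at a time using the tower property. The key input is a bound on the conditional MGF $\Ex{}{e^{\lambda D_i} | \bC_1^{i-1}}$. Since $|C_i| \leq \alpha$ almost surely, $D_i$ takes values in the interval $[-\alpha - P_i,\, \alpha - P_i]$ of length $2\alpha$, and by construction $\Ex{}{D_i | \bC_1^{i-1}} = 0$. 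Hoeffding's lemma therefore yields
\[
\Ex{}{e^{\lambda D_i} | \bC_1^{i-1}} \;\leq\; e^{\lambda^2 (2\alpha)^2 / 8} \;=\; e^{\lambda^2 \alpha^2 / 2},
\]
pointwise on the prefix, and iterating gives $\Ex{}{e^{\lambda \sum_i D_i}} \leq e^{n\lambda^2 \alpha^2 / 2}$.

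Combining these two steps, $\Prob{}{\sum_i D_i > s} \leq e^{-\lambda s + n\lambda^2 \alpha^2 / 2}$, and optimizing by taking $\lambda = s/(n\alpha^2)$ yields $e^{-s^2/(2n\alpha^2)}$. Substituting $s = t\sqrt{n}\,\alpha$ gives the advertised tail bound $e^{-t^2/2}$. The only mildly subtle point is that the hypothesis provides only an upper bound $\Ex{}{C_i | \bbc_1^{i-1}} \leq \gamma$ rather than equality; this is precisely why the decomposition peels off the predictable part $\sum P_i$ and absorbs it into the deterministic offset $n\gamma$, rather than trying to fold it into the MGF analysis. Beyond that bookkeeping step, the proof is the classical Azuma--Hoeffding argument and I do not anticipate any genuine obstacle.
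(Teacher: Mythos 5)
Your proof is correct. The paper states this result without proof, as a standard concentration tool in the appendix, so there is no in-paper argument to compare against; your write-up is the classical Chernoff-plus-Hoeffding's-lemma derivation, and in particular you correctly handle the slightly nonstandard hypothesis here (a one-sided bound $\Ex{}{C_i \mid \bbc_1^{i-1}} \leq \gamma$ rather than a genuine martingale-difference condition) by peeling off the predictable part $P_i$ and absorbing $\sum_i P_i \leq n\gamma$ into the deterministic offset before applying Hoeffding's lemma to the centered differences $D_i$, which conditionally lie in an interval of length $2\alpha$. The constants check out: $(2\alpha)^2/8 = \alpha^2/2$ per term, $n\lambda^2\alpha^2/2$ overall, and the optimal $\lambda = s/(n\alpha^2)$ with $s = t\sqrt{n}\,\alpha$ gives exactly $e^{-t^2/2}$.
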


We say that a real-valued function $f: \cX^n \to \R$ has sensitivity $\Delta$ if for all $i \in [n]$ and $\bbx \in \cX^n$ and $x_i' \in \cX, \left| f(\bbx_{-i}, x_i) - f(\bbx_{-i}, x_i') \right| \leq \Delta$. One important use of max-information bounds is that they imply strong generalization bounds for low sensitivity functions when paired with McDiarmid's inequality.
\begin{theorem}[McDiarmid's Inequality]
Let $X_1,\cdots, X_n$ be independent random variables with domain $\cX$.  Further, let $f: \cX^n \to \R$ be a function of sensitivity $\Delta>0$.  Then for every $\tau>0$ and $\mu = \Ex{}{f(X_1,\cdots, X_n)}$ we have
$$
\Prob{}{f(X_1,\cdots, X_n) - \mu \geq \tau} \leq \exp\left( \frac{-2\tau^2}{n \Delta^2}  \right).
$$
\label{thm:McD}
\end{theorem}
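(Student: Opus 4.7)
My plan is to prove McDiarmid's inequality via the Doob martingale of $f$ together with Hoeffding's MGF lemma applied to the bounded \emph{conditional range} of each martingale difference. Note that directly invoking the paper's \Cref{thm:azuma} with $\alpha = \Delta$ would only yield the weaker constant $1/(2n\Delta^2)$ in the exponent, a factor of $4$ too small; recovering the stated constant $2/(n\Delta^2)$ requires exploiting that each martingale difference lies in an \emph{interval} of width $\Delta$, not merely that its magnitude is at most $\Delta$.

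\emph{Step 1 (Doob martingale).} For $i = 0,1,\ldots,n$ define $M_i \stackrel{\text{def}}{=} \ex{f(X_1,\ldots,X_n) \mid X_1,\ldots,X_i}$, so that $M_0 = \mu$, $M_n = f(X_1,\ldots,X_n)$, and the differences $D_i \stackrel{\text{def}}{=} M_i - M_{i-1}$ form a martingale difference sequence satisfying $f(X_1,\ldots,X_n) - \mu = \sum_{i=1}^n D_i$ and $\ex{D_i \mid X_1^{i-1}} = 0$ almost surely.

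\emph{Step 2 (Conditional range bound).} Fix a prefix $\bbx_1^{i-1}$ and define the function $g(x) \stackrel{\text{def}}{=} \ex{f(\bbx_1^{i-1}, x, X_{i+1},\ldots,X_n)}$, where the expectation is over the independent tail $X_{i+1},\ldots,X_n$. Conditional on $X_1^{i-1} = \bbx_1^{i-1}$ we have $D_i = g(X_i) - \ex{g(X_i)}$. By independence and the sensitivity condition $|f(\bbx_{-i}, x_i) - f(\bbx_{-i}, x_i')| \le \Delta$, it follows that $|g(x) - g(x')| \le \Delta$ for all $x, x' \in \cX$, so conditional on the prefix $D_i$ lies in an interval of width at most $\Delta$.

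\emph{Step 3 (Hoeffding MGF and Chernoff bound).} Hoeffding's MGF lemma states that any zero-mean random variable supported in an interval of length $w$ has moment generating function bounded by $\exp(s^2 w^2 / 8)$. Applying this conditionally to $D_i$ gives $\ex{e^{s D_i} \mid X_1^{i-1}} \le \exp(s^2 \Delta^2 / 8)$ almost surely for every $s \in \R$. Iterating the tower property over $i = 1,\ldots,n$ yields $\ex{\exp\bigl(s \sum_{i=1}^n D_i\bigr)} \le \exp(n s^2 \Delta^2 / 8)$. Markov's inequality then gives $\prob{f - \mu \ge \tau} \le \exp(-s\tau + n s^2 \Delta^2 / 8)$ for every $s > 0$; optimizing at $s = 4\tau/(n\Delta^2)$ produces the stated bound $\exp\bigl(-2\tau^2/(n\Delta^2)\bigr)$.

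The main obstacle is Hoeffding's MGF lemma itself, which is a classical calculation (convexity of $e^{sy}$ on the bounding interval plus a one-dimensional optimization) that I would cite rather than reprove. A minor technical point is measurability of $g$ and of the conditional range, which is immediate for discrete $\cX$ and handled by standard measure-theoretic arguments in general.
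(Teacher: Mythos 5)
Your proof is correct and is the standard argument for McDiarmid's bounded-differences inequality. Note that the paper states \Cref{thm:McD} without proof, as a classical concentration bound imported from the literature, so there is no in-paper argument to compare against. Your preliminary observation is accurate: invoking the paper's own \Cref{thm:azuma} with $\alpha = \Delta$ and $\gamma = 0$ would only yield the exponent $-\tau^2/(2n\Delta^2)$, and the missing factor of $4$ is exactly what Hoeffding's MGF lemma recovers by using that each Doob-martingale increment lies in a prefix-dependent interval of width at most $\Delta$, rather than merely satisfying $|D_i| \leq \Delta$. Steps 1--3 are all sound: the conditional-range bound via $g(x) = \ex{f(\bbx_1^{i-1}, x, X_{i+1},\ldots,X_n)}$ uses independence of the $X_i$ precisely where it is needed (both to identify $M_{i-1}$ with $\ex{g(X_i)}$ and to transfer the sensitivity bound from $f$ to $g$), and the optimization at $s = 4\tau/(n\Delta^2)$ gives $-s\tau + ns^2\Delta^2/8 = -2\tau^2/(n\Delta^2)$ as claimed.
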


\section{Sensitivity of $p$-values}\label{sec:sens_p}
In this section, we demonstrate that the theorem from \cite{BNSSSU15}, which shows that differentially private algorithms which select low-sensitivity queries cannot overfit, does not give nontrivial generalization bounds for $p$-values.  We first show that $p$-values cannot have sensitivity smaller than $.37/\sqrt{n}$.
\begin{lemma}
\label{lem:p_lb}
Let $\phi:\cX^n \to \R$ be a test statistic with null hypothesis $H_0$, and $p: \R \to [0,1]$, where $p(a) = \Prob{\bbx \sim \cP^n}{\phi(\bbx) \geq a}$, and $\cP \in H_0$.  The sensitivity of $p \circ \phi$ must be larger than $0.37/\sqrt{n}$.
\end{lemma}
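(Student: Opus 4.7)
The plan is to derive the lower bound by combining the fundamental property of $p$-values (uniformity on $[0,1]$ under the null) with McDiarmid's inequality in its contrapositive form: a function with small sensitivity must concentrate, but the uniform distribution does not concentrate, so the sensitivity cannot be too small.

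First, I would record the standard observation: if $\bX \sim \cP^n$ for $\cP \in H_0$, then the random variable $U := p(\phi(\bX))$ is uniformly distributed on $[0,1]$, so $\Expectation[U] = 1/2$ and for every $t \in (0, 1/2)$,
\[
\Prob{}{\lvert U - \tfrac{1}{2}\rvert \geq t} \;=\; 1 - 2t.
\]
Next, suppose for contradiction that the sensitivity $\Delta$ of $p \circ \phi$ satisfies $\Delta \leq 0.37/\sqrt{n}$. Applying \Cref{thm:McD} to both $p \circ \phi$ and $1 - p \circ \phi$ and taking a union bound yields the two-sided concentration
\[
\Prob{}{\lvert U - \tfrac{1}{2}\rvert \geq t} \;\leq\; 2\exp\!\left(-\tfrac{2 t^2}{n \Delta^2}\right)
\quad \text{for every } t > 0.
\]

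Combining the two bounds forces the inequality
\[
1 - 2t \;\leq\; 2 \exp\!\left(-\tfrac{2 t^2}{n \Delta^2}\right),
\qquad \text{i.e.,}\qquad
\Delta^2 \;\geq\; \frac{2 t^2}{n \, \ln\!\bigl(2/(1-2t)\bigr)},
\]
to hold for \emph{every} $t \in (0, 1/2)$. Since we are free to choose $t$, the final step is to maximize the right-hand side over $t$. Differentiating leads to the stationarity condition $\ln(2/(1-2t)) = t/(1-2t)$; a short numerical solve gives $t^\star \approx 0.415$, at which value the right-hand side is slightly larger than $0.1395/n$, yielding $\Delta \geq 0.374/\sqrt{n} > 0.37/\sqrt{n}$, contradicting the assumption.

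The only nontrivial step is the numerical optimization pinning down the constant $0.37$, which is essentially a one-variable calculus exercise; any fixed reasonable choice like $t = 1/4$ already gives a lower bound of order $\Omega(1/\sqrt{n})$, and the paper's constant $0.37$ is obtained by taking $t$ near the optimum $t^\star \approx 0.415$. The main conceptual point---that $p$-values inherit their anti-concentration from the uniform distribution under the null and hence cannot be Lipschitz in the dataset---is immediate from McDiarmid, so the proof is quite short once the setup is in place.
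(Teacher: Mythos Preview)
Your proposal is correct and essentially the same as the paper's proof. The paper applies the one-sided McDiarmid bound and parametrizes by the tail probability $\delta$ (choosing $\delta = 0.08$, which corresponds to your $t = 1/2 - \delta = 0.42$), whereas you use the two-sided version and optimize to $t^\star \approx 0.415$; by symmetry of the uniform distribution about $1/2$ the one-sided and two-sided arguments yield identical inequalities, so the two presentations coincide and both produce the constant $\approx 0.374$.
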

\begin{proof}
Note that if $\bX \sim \cP^n$, then $p \circ \phi(\bX)$ is uniform on $[0,1]$, and thus, has mean $1/2$.  From \Cref{thm:McD}, we know that if $p \circ \phi$ has sensitivity $\Delta$, then for any $0<\delta < 1/2$, we have:
$$
\Prob{}{p \circ \phi (\bX) \geq 1/2+ \Delta\sqrt{\frac{n}{2} \ln(1/\delta)}} \leq \delta.
$$
However, we also know that $p\circ\phi(\bX)$ is uniform, so that
$$
\Prob{}{p\circ\phi(\bX) \geq 1-\delta} = \delta.
$$
Hence, if $\Delta < \frac{1/2 - \delta}{\sqrt{\frac{n}{2}\ln(1/\delta)}}$, we obtain a contradiction:
$$
\delta \geq \Prob{}{p \circ \phi (\bX) \geq 1/2+ \Delta\sqrt{\frac{n}{2} \ln(1/\delta)}} > \Prob{}{p\circ\phi(\bX) \geq 1-\delta} = \delta.
$$
We then set $\delta = 0.08$ to get our stated bound on sensitivity.
\end{proof}

Thus, the sensitivity $\Delta$ for the $p$-value for any test statistic and any null hypothesis must be at least $0.37/\sqrt{n}$.  This is too large for the following theorem, proven in \cite{BNSSSU15}, to give a nontrivial guarantee:
\begin{theorem}[\cite{BNSSSU15}]
Let $\epsilon \in (0,1/3)$, $\delta \in (0,\epsilon/4)$, and $n \geq \frac{\ln(4\epsilon/\delta)}{\epsilon^2}$.  Let $\cY$ denote the class of $\Delta$-sensitive functions $f:\cX^n\rightarrow \R$, and let $\cA: \cX^n \to \cY$ be an algorithm  that is $(\epsilon,\delta)$-differentially private.  Let $\bX \sim \cP^n$ for some distribution $\cP$ over $\cX$, and let $q = \cA(\bX)$.  Then:
$$
\Prob{\bX,\cA}{|q(\cP^n) - q(\bX)| \geq 18 \epsilon \Delta n} < \frac{\delta}{\epsilon}.
$$
\label{thm:bns}
\end{theorem}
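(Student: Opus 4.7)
The plan is to prove this transfer theorem via the \emph{monitor} technique of \cite{BNSSSU15}, which combines the stability of $(\eps,\delta)$-differential privacy with McDiarmid-style concentration for low-sensitivity functions on i.i.d.\ data. I will establish the one-sided bound $\Pr[q(\bX) - q(\cP^n) \geq 18\eps\Delta n] < \delta/(2\eps)$; the matching lower tail follows by applying the same argument to $-q$. Define the bad event
\[
E = \{(\bbx,f) \in \cX^n \times \cY : f(\bbx) - f(\cP^n) \geq \tau\}
\]
with $\tau = 18 \eps \Delta n$, and let $p^* = \Pr[(\bX, \cA(\bX)) \in E]$.

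The first (easy) step is a \emph{baseline} concentration bound that says $E$ is rare under the \emph{product} distribution $\bX \otimes \cA(\bX)$. For a fresh sample $\bY \sim \cP^n$ drawn independently of $\bX$ and any fixed $f \in \cY$, McDiarmid's inequality (\Cref{thm:McD}) gives $\Pr[f(\bY) - f(\cP^n) \geq \tau] \leq \exp(-2\tau^2/(n\Delta^2))$; averaging over an independent draw $f = \cA(\bY')$ yields $\Pr[(\bY \otimes \cA(\bY')) \in E] \leq \exp(-2\tau^2/(n\Delta^2))$. With $\tau = 18\eps\Delta n$ the exponent becomes $-648\eps^2 n$, and the hypothesis $n \geq \ln(4\eps/\delta)/\eps^2$ makes this tail much smaller than $\delta/\eps$.

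The second (and main) step is to transfer this baseline to the \emph{coupled} pair $(\bX, \cA(\bX))$, using privacy. Assume for contradiction that $p^* \geq \delta/\eps$. Draw $T = \lceil 2\eps/\delta \rceil$ independent copies $\bX^{(1)}, \ldots, \bX^{(T)} \sim \cP^n$, run $\cA$ on each to obtain $f_t = \cA(\bX^{(t)})$, and define a monitor $M$ that selects an index $t^\ast \in [T]$ with $(\bX^{(t^\ast)}, f_{t^\ast}) \in E$ if any exists (defaulting to $\bot$ otherwise). With $T$ chosen above, $\Pr[M \neq \bot]$ is at least a constant. The crux is the following coupling claim: the joint distribution of $(\bX^{(t^\ast)}, f_{t^\ast})$ is $(\eps,\delta)$-indistinguishable from the decoupled distribution where $\bX^{(t^\ast)}$ is replaced by an independent fresh sample $\bY \sim \cP^n$ before being paired with $f_{t^\ast}$. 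The proof of the claim views $M$ as post-processing of the tuple $\{(\bX^{(t)}, f_t)\}_{t=1}^T$; since $\cA$ is $(\eps,\delta)$-DP and the other $T{-}1$ copies are drawn and processed independently, swapping one entry of $\bX^{(t^\ast)}$ with a corresponding entry of $\bY$ incurs only a single $(\eps,\delta)$ cost, and by post-processing this bound is inherited by $(\bX^{(t^\ast)}, f_{t^\ast})$.

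Combining the two steps, $(\eps,\delta)$-indistinguishability yields
\[
p^* \;\leq\; e^{\eps}\, \Pr[(\bY \otimes \cA(\bY')) \in E] + \delta \;\leq\; e^{\eps} e^{-648 \eps^2 n} + \delta,
\]
and the lower bound on $n$ makes the first summand smaller than $\delta/\eps$, contradicting $p^* \geq \delta/\eps$. The main obstacle is carrying out the monitor-argument coupling rigorously: one must verify that the single-coordinate swap invoked against $\cA$ accumulates only one $(\eps,\delta)$-loss rather than $O(n)$ losses (the latter would render the bound vacuous). The standard resolution is to avoid ever replacing all $n$ coordinates of $\bX^{(t^\ast)}$; instead, one proves a ``one-step'' stability lemma showing that swapping a \emph{single} coordinate of the selected dataset already suffices, because the McDiarmid baseline already accounts for the gap between $f(\bY)$ and $f(\cP^n)$ for a fully fresh $\bY$. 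Getting the constant $18$ (rather than a larger one) then amounts to tuning $T$, $\tau$, and the threshold in the baseline step against one another.
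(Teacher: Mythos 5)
First, a point of reference: the paper does not prove this statement at all --- it is quoted verbatim from \cite{BNSSSU15} in Appendix~\ref{sec:sens_p} and used as a black box. So the comparison here is against the known proof in the cited work, which is indeed the ``monitor'' argument you invoke.

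Your sketch has the right scaffolding (monitor over $T \approx \eps/\delta$ copies, McDiarmid as the baseline for a fresh sample, privacy to transfer), but the step you label as ``the crux'' --- that $(\bX^{(t^\ast)}, f_{t^\ast})$ is $(\eps,\delta)$-indistinguishable from the pair in which $\bX^{(t^\ast)}$ is replaced by a wholly fresh $\bY$ --- is false, and the gap is not repairable in the way you suggest. Decoupling the selected dataset from the selected function requires resampling all $n$ coordinates, and group privacy gives only $(n\eps, \delta n e^{n\eps})$ for that, which is vacuous. If the one-shot $(\eps,\delta)$ version of your coupling claim held, it would (via Lemma~\ref{lem:prelims}, item 1) yield $I_\infty^{\delta}(\bX;\cA(\bX)) \le \eps\log e$ \emph{independently of $n$} --- far stronger than the paper's main theorem (\Cref{thm:main}), which only achieves $O(\eps^2 n + n\sqrt{\delta/\eps})$, and stronger even than what holds for pure DP. Your proposed fix (``a one-step stability lemma showing that swapping a single coordinate of the selected dataset already suffices'') is precisely the content that is missing: you never say \emph{why} a single swap suffices, and the McDiarmid baseline does not supply this, because McDiarmid controls $f(\bY) - f(\cP^n)$ for a fully fresh $\bY$, not for a dataset differing from $\bX^{(t^\ast)}$ in one coordinate.

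The actual resolution in \cite{BNSSSU15} avoids full decoupling entirely. One first proves an \emph{expectation} transfer lemma: for an $(\eps,\delta)$-DP selector of $\Delta$-sensitive queries, $\bigl|\Ex{}{q(\bX)} - \Ex{}{q(\bY)}\bigr| = O\!\left((e^\eps - 1)\Delta n + \delta \Delta n\right)$ for fresh $\bY$, established by a telescoping hybrid over the $n$ coordinates, where each hybrid swaps exactly one coordinate (one $(\eps,\delta)$ cost) and the $\Delta$-sensitivity caps the resulting change in $q$ by $\Delta$ per step. The monitor is then the device that converts this expectation bound into a tail bound: one applies the expectation lemma to the composite $(\eps,\delta)$-DP mechanism that outputs $(t^\ast, q_{t^\ast})$, lower-bounds $\Ex{}{q_{t^\ast}(\bX^{(t^\ast)}) - q_{t^\ast}(\cP^n)}$ by $(1-1/e)\tau$ under the assumption that the tail event has probability $\ge \delta/\eps$, and derives the contradiction there --- not, as in your final display, by applying an indistinguishability inequality directly to $p^\ast$ (if such an inequality held for the original pair, the monitor would be superfluous). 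So the missing idea is the coordinate-wise hybrid expectation lemma; without it, the argument does not close.
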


When we attempt to apply this theorem to a $p$-value, we see that the error it guarantees, by \Cref{lem:p_lb}, is at least: $18 \epsilon \Delta n \geq 18 \epsilon (.37) \sqrt{n}$. However, the theorem is only valid for $n \geq \frac{1}{\eps^2} \ln\left(\frac{4\eps}{\delta} \right).$ Plugging this in, we see that $18 \epsilon (.37) \sqrt{n} \geq 1$, which is a trivial error guarantee for $p$-values (which take values in $[0,1]$).

\section{General Conversion between Max-Information and Mutual Information, and its Consequences}\label{sect:mutual-max}
\rynote{Modified Whole Section}
We give here a proof of \Cref{lem:mutual-max}.
\begin{proof}[Proof of \Cref{lem:mutual-max}]
We first prove the first direction of the lemma.
We define a set $G(k) = \{(\bbx, \bby) : Z(\bbx, \bby) \leq k \}$ where $Z(\bbx, \bby) = \log\left( \frac{\prob{(\bX, \bY) = (\bbx, \bby)}}{\prob{\bX \otimes \bY = (\bbx, \bby)} } \right)$, and $\beta(k)$ to be the quantity such that $\prob{(\bX,\bY) \in G(k)} = 1-\beta(k)$.  We then have:
\begin{align*}
m & \geq \sum_{(\bbx, \bby) \in G(k)} \prob{(\bX,\bY) = (\bbx, \bby)} Z(\bbx, \bby)  + \sum_{(\bbx, \bby) \notin G(k)} \prob{(\bX,\bY) = (\bbx, \bby)} Z(\bbx, \bby) \\
& \geq \sum_{(\bbx, \bby) \in G(k)} \prob{(\bX,\bY) = (\bbx, \bby)} Z(\bbx, \bby) + k \beta(k) \numberthis \label{eqn: m_int_bound}
\end{align*}
Also, we have:
\begin{align*}
- \sum_{(\bbx, \bby) \in G(k)} & \frac{\prob{(\bX,\bY) = (\bbx, \bby)} }{\prob{(\bX,\bY) \in G(k)}} Z(\bbx, \bby) \leq \log\left( \frac{\prob{\bX\otimes\bY \in G(k)} }{\prob{(\bX,\bY) \in G(k)} } \right) \leq \log\left( \frac{1}{1-\beta(k)} \right)
\end{align*}
where the first inequality follows from applying Jensen's inequality.

By rearranging terms, we obtain:
$$ \sum_{(\bbx, \bby) \in G(k)} \prob{(\bX, \bY) = (\bbx, \bby)} \cdot Z(\bbx, \bby) \geq -(1-\beta(k))\cdot \log\left( \frac{1}{1-\beta(k)} \right).$$
Plugging the above in \Cref{eqn: m_int_bound}, we obtain:
\begin{align*}
m \geq -(1-\beta(k) ) \log\left( \frac{1}{1-\beta(k)}\right) + k \beta(k).
\end{align*}
Thus,
\begin{align*}
 k \leq \frac{m + (1-\beta(k) ) \log\left(\frac{1}{1-\beta(k)} \right) }{\beta(k)} \leq \frac{m + 0.54}{\beta(k)}
\end{align*}
where the last inequality follows from the fact that the function $(1-w) \log(1/(1-w))$ is maximized at $w = (e-1)/e$ (and takes value $< 0.54$). Solving for $\beta(k)$ gives the claimed bound.

We now prove the second direction of the lemma.  Note that we can use \Cref{lem:prelims} to say that $(\bX,\bY)$ and $(\bX \otimes \bY)$ are point-wise $\left(2k \ln(2) ,\beta'\right)$ indistinguishable, for $\beta' =  \frac{2\beta}{1- 2^{-k} }$. Note that $\beta' \leq 0.3$ as $\beta \in \left[0,\frac{3(1-2^{-k})}{20}\right]$.  We now define the following ``bad" set $\cB$:
$$
\cB = \left\{(\bbx, \bby) \in \Sigma \times \Sigma : \ln\left(\frac{(\bX,\bY) = (\bbx, \bby)}{\bX\otimes \bY = (\bbx, \bby)} \right) > 2k\ln(2) \right\}.
$$
From the definition of point-wise indistinguishability, we have:
\begin{align}
I(\bX;\bY) \leq 2k \ln(2) + \sum_{(\bbx, \bby) \in \cB}\prob{(\bX, \bY) = (\bbx, \bby)}  \ln\left(\frac{\prob{(\bX, \bY) = (\bbx, \bby)}}{\prob{\bX = \bbx}\prob{\bY = \bby} }\right).
\label{eq:maxmut1}
\end{align}
Now, if set $\cB$ is empty, we get from \eqref{eq:maxmut1} that:
\begin{equation*}
I(\bX;\bY) \leq 2k \ln(2)
\end{equation*}
which trivially gives us the claimed bound.

However, if set $\cB$ is non-empty, we then consider the mutual information conditioned on the event $(\bX, \bY) \in \cB$.  We will write $\bX'$ for the random variable $\bX$ conditioned on $(\bX, \bY) \in \cB$, and $\bY'$ for the random variable $\bY$ conditioned on the same event.  We can then obtain the following bound, where we write $H(W)$ to denote the entropy of random variable $W$:
$$
I(\bX';\bY') \leq H(\bX') \leq \log |\Sigma|.
$$
We can then make a relation between the mutual information $I(\bX',\bY')$ and the sum of terms over $\cB$ in \eqref{eq:maxmut1}.  Note that, for $(\bbx, \bby) \in \cB$, we have from Bayes' rule:
$$\prob{(\bX,\bY) \in \cB} \prob{(\bX,\bY) = (\bbx, \bby) | \cB} =\prob{(\bX, \bY) = (\bbx, \bby)}.$$
This then gives us the following bound:
\begin{align}
\sum_{(\bbx, \bby) \in \cB} \frac{\prob{(\bX, \bY) = (\bbx,\bby)} }{\prob{(\bX,\bY) \in \cB}} \ln\left(\frac{\prob{(\bX, \bY) = (\bbx,\bby)}}{ \prob{(\bX,\bY) \in \cB}\prob{\bX = \bbx | \cB}\prob{\bY = \bby|\cB} }\right) \leq \log |\Sigma| .
\label{eq:maxmut2}
\end{align}
Note that $\prob{\bX = \bbx | \cB} \leq \frac{\prob{\bX = \bbx}}{\prob{(\bX,\bY)\in \cB}}$, and similarly, $\prob{\bY = \bby | \cB} \leq \frac{\prob{\bY = \bby}}{\prob{(\bX,\bY)\in \cB}}$.  From \eqref{eq:maxmut2}, we have:
\begin{align*}
\sum_{(\bbx, \bby) \in \cB}& \prob{(\bX, \bY) = (\bbx,\bby)}  \ln\left(\frac{\prob{(\bX, \bY) = (\bbx,\bby)}}{\prob{\bX = \bbx}\prob{\bY = \bby} }\right) \\
& \qquad \leq \prob{(\bX,\bY) \in \cB} \log |\Sigma| + \prob{(\bX, \bY) \in \cB} \ln\left(1/\prob{(\bX,\bY) \in \cB}\right) \\
& \qquad \leq \beta' (\log |\Sigma| + \ln(1/\beta')) \\
& \qquad = \frac{2\beta }{1- 2^{-k} } \left( \log |\Sigma|+  \ln\left(\frac{1- 2^{-k} }{2\beta}\right)\right) \\
& \qquad \leq \frac{2\beta \log (|\Sigma|/2 \beta)  }{1- 2^{-k} }
\end{align*}
where the last inequality follows from the fact that $\prob{(\bX,\bY) \in \cB} \leq \beta'$ and $w \ln(1/w) \leq \beta' \ln(1/\beta')$ when $0\leq w \leq \beta' \leq 0.3$. \omnote{Added later.} Substituting the sum of terms over $\cB$ in \eqref{eq:maxmut1} by the above gives us the claimed bound.

\end{proof}

We have already shown how \Cref{lem:mutual-max} along with \Cref{thm:main} can be used to improve a bound from \cite{MMPRTV11}.  We now state another corollary which improves on a result from \cite{RZ15}.

In the introduction, we proved a simple theorem about how to correct $p$-values (using a \emph{valid $p$-value correction function} -- \Cref{defn:p_correct}) given a bound on the \emph{max-information} between the input dataset and the test-statistic selection procedure $\cA$ (Theorem \ref{thm:maxinfo-pvalues}).  We note that we can easily extend the definition of null hypotheses given in the introduction (and hence $p$-values and correction functions), to allow for distributions $\cS$ over $\cX^n$ that need not be product distributions.  In fact, we can restate \Cref{thm:maxinfo-pvalues} in terms of non-product distributions:

\begin{theorem}
\label{thm:maxinfo-pvalues_nonproduct}
Let $\cA:\cX^n\rightarrow \cT$ be a data-dependent algorithm for selecting a test statistic such that $I^\beta_{\infty}(\cA, n) \leq k$. Then the following function $\gamma$ is a valid $p$-value correction function for $\cA$:
$$\gamma(\alpha) = \max\left(\frac{\alpha - \beta}{2^k},0\right).$$
\end{theorem}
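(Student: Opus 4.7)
The plan is to mirror the proof of Theorem~\ref{thm:maxinfo-pvalues} essentially verbatim, the only real change being that the dataset distribution is allowed to be an arbitrary $\cS$ over $\cX^n$ rather than a product distribution $\cP^n$. The point is that the hypothesis $I^\beta_{\infty}(\cA, n) \leq k$ (without the subscript $P$) gives a max-information bound for \emph{every} distribution on $\bX$, so the same ``change of measure'' argument from $\bX \otimes \cA(\bX)$ to $(\bX, \cA(\bX))$ goes through in this broader setting. The $p$-value map and the notion of null hypothesis now have to be interpreted as already indicated in the excerpt: $H_0^{(i)}$ is a set of distributions over $\cX^n$, and for each $\cS \in H_0^{(i)}$ we write $p_i^\cS(a) = \Pr_{\bX \sim \cS}[\phi_i(\bX) \geq a]$, so that $p_i^\cS(\phi_i(\bX))$ is uniform on $[0,1]$ when $\bX \sim \cS$.

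Concretely, first fix any distribution $\cS$ over $\cX^n$ from which $\bX$ is drawn; if $(\alpha - \beta)/2^k \leq 0$ the claim is trivial, so assume otherwise. Next, define the ``false-discovery'' event
\[
\cO \;=\; \{(\bbx, \phi_i) \in \cX^n \times \cT \;:\; \cS \in H_0^{(i)} \text{ and } p_i^\cS(\phi_i(\bbx)) \leq \gamma(\alpha)\},
\]
which captures exactly those outcomes for which the corrected testing procedure rejects a true null. Under the independent product $\bX \otimes \cA(\bX)$, the selection $\cA(\bX)$ is independent of $\bX$, so conditioning on any $\phi_i$ with $\cS \in H_0^{(i)}$ the quantity $p_i^\cS(\phi_i(\bX))$ is uniform on $[0,1]$; hence $\Pr[\bX \otimes \cA(\bX) \in \cO] \leq \gamma(\alpha)$.

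Finally, apply the max-information hypothesis: since $I^\beta_\infty(\bX; \cA(\bX)) \leq k$ for the distribution $\cS$, the definition of $\beta$-approximate max-information yields
\[
\Pr[(\bX, \cA(\bX)) \in \cO] \;\leq\; 2^k \cdot \Pr[\bX \otimes \cA(\bX) \in \cO] + \beta \;\leq\; 2^k \cdot \tfrac{\alpha - \beta}{2^k} + \beta \;=\; \alpha.
\]
Because $\cS$ was arbitrary, the probability of false discovery is at most $\alpha$ for every null distribution, verifying that $\gamma$ is a valid $p$-value correction function in the sense of Definition~\ref{defn:p_correct} (extended to allow non-product nulls).

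I do not expect any genuine obstacle here: the only care required is a bookkeeping one, namely making sure that the ``$\bX \otimes \cA(\bX)$'' bound on the probability of $\cO$ still goes through when $\cS$ is not a product distribution. Since the $p$-value on the $\cO$-side is evaluated against the true sampling distribution $\cS$ (not against a product), uniformity of $p_i^\cS(\phi_i(\bX))$ under $\bX \sim \cS$ continues to hold, which is all that was used in the original proof.
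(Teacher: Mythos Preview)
Your proposal is correct and follows exactly the approach the paper takes: the paper's own proof simply states that it is identical to the proof of Theorem~\ref{thm:maxinfo-pvalues} except that one fixes an arbitrary (possibly non-product) distribution $\cS$ from which $\bX$ is drawn. You have written out precisely that argument in full detail, including the correct observation that the unsubscripted hypothesis $I^\beta_\infty(\cA,n)\leq k$ is what makes the change-of-measure step go through for arbitrary $\cS$.
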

\begin{proof}
The proof is exactly the same as the proof of \Cref{thm:maxinfo-pvalues}, except we fix an arbitrary (perhaps non-product) distribution $\cS$ from which the dataset $\bX$ is drawn.
\end{proof}


Previously, Russo and Zou \cite{RZ15} have given a method to correct $p$-values given a bound on the \emph{mutual information} between the input data and the test-statistic selection procedure.\footnote{Actually, \cite{RZ15} do not explicitly model the dataset, and instead give a bound in terms of the mutual information between the test-statistics themselves and the test-statistic selection procedure. We could also prove bounds with this dependence, by viewing our input data to be the value of the given test-statistics, however for consistency, we will discuss all bounds in terms of the mutual information between the data and the selection procedure.} In \Cref{lem:mutual-max}, we observe that if we had a bound on the mutual information between the input data and the test-statistic procedure, this would imply a bound on the \emph{max-information} that would be sufficiently strong so that our Theorem \ref{thm:maxinfo-pvalues_nonproduct} would give us the following corollary:
\begin{corollary}
Let $\cA: \cX^n \to \cT$ be a test-statistic selection procedure such that $I(\bX; \cA(\bX)) \leq m$. Then $\gamma(\alpha)$ is a valid $p$-value correction function, where:
$$
\gamma(\alpha) = \frac{\alpha}{2} \cdot 2^{\frac{-2}{\alpha}(m+0.54)}.
$$
\end{corollary}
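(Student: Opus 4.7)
\medskip
\noindent\textbf{Proof plan.} The plan is to combine the first direction of Lemma~\ref{lem:mutual-max} with Theorem~\ref{thm:maxinfo-pvalues_nonproduct}, and then optimize the parameter $k$ in the resulting bound to match the stated form of $\gamma(\alpha)$.

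\medskip
\noindent\textbf{Step 1: Convert mutual information to max-information.} The first bullet of Lemma~\ref{lem:mutual-max}, applied with $(\bX,\bY)=(\bX,\cA(\bX))$, tells us that for every $k>0$ the selection procedure $\cA$ satisfies $I_\infty^{\beta(k)}(\bX;\cA(\bX)) \le k$ with $\beta(k) = \frac{m+0.54}{k}$. Since this bound holds for an arbitrary distribution of $\bX$ (the lemma makes no product-distribution assumption), it gives a valid bound $I_\infty^{\beta(k)}(\cA,n) \le k$ in the sense required by Theorem~\ref{thm:maxinfo-pvalues_nonproduct}.

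\medskip
\noindent\textbf{Step 2: Apply the max-information $p$-value correction.} By Theorem~\ref{thm:maxinfo-pvalues_nonproduct}, for every $k>0$ the function
\[
\gamma_k(\alpha) \;=\; \max\!\left(\frac{\alpha - \beta(k)}{2^k},\,0\right)
\]
is a valid $p$-value correction function for $\cA$. Since any lower bound on a valid correction function is itself a valid correction function, we are free to pick $k$ as a function of $\alpha$ to make $\gamma_k(\alpha)$ as large as possible.

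\medskip
\noindent\textbf{Step 3: Optimize $k$.} Setting $k = \frac{2(m+0.54)}{\alpha}$ yields $\beta(k) = \frac{m+0.54}{k} = \frac{\alpha}{2}$, so $\alpha - \beta(k) = \frac{\alpha}{2} > 0$ and
\[
\gamma_k(\alpha) \;=\; \frac{\alpha/2}{2^{\,2(m+0.54)/\alpha}} \;=\; \frac{\alpha}{2}\cdot 2^{-\frac{2}{\alpha}(m+0.54)},
\]
which is exactly the function $\gamma(\alpha)$ in the statement. Hence $\gamma$ is a valid $p$-value correction function for $\cA$.

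\medskip
\noindent\textbf{Obstacles.} The work has essentially already been done in Lemma~\ref{lem:mutual-max} and Theorem~\ref{thm:maxinfo-pvalues_nonproduct}; the only judgement call is the choice of $k$. The guess $k=2(m+0.54)/\alpha$ is natural because it balances the ``shift'' term $\beta(k)$ against $\alpha$ (making $\alpha-\beta(k) = \alpha/2$), which is the standard way to optimize expressions of the form $(\alpha-c/k)/2^k$ up to constants. One could verify by calculus that this choice is within a constant factor of the true optimum, but no such fine-tuning is needed since the stated $\gamma$ already has the clean closed form produced by this substitution.
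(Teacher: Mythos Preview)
Your proposal is correct and follows essentially the same approach as the paper: apply the first direction of Lemma~\ref{lem:mutual-max} to get $I_\infty^{\beta(k)}(\cA,n)\le k$ with $\beta(k)=\frac{m+0.54}{k}$, plug into Theorem~\ref{thm:maxinfo-pvalues_nonproduct}, and then set $k=\frac{2(m+0.54)}{\alpha}$. The only cosmetic remark is that your sentence about ``any lower bound on a valid correction function'' is not quite the justification you want; the point is simply that for each fixed $\alpha$ you may choose $k=k(\alpha)$ because $\gamma_{k(\alpha)}$ is itself valid at that $\alpha$.
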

\begin{proof}
From \Cref{lem:mutual-max}, we know that for any $k>0$, $I_{\infty}^{\beta(k)}(\cA,n) \leq k$, where $\beta(k) \leq \frac{m + 0.54}{k}$.  Hence, from Theorem \ref{thm:maxinfo-pvalues_nonproduct}, we know that for any choice of $k>0$, $\gamma(\alpha)$ is a valid $p$-value correction function where:
$$
\gamma(\alpha) = \frac{\alpha - \frac{m+0.54}{k}}{2^k}.
$$
 Choosing $k = \frac{2(m+0.54)}{\alpha}$ gives our claimed bound.
\end{proof}

We now show that this gives us a strictly improved $p$-value correction function than the bound given by \cite{RZ15}, which we state here using our terminology.

\begin{theorem}[\cite{RZ15} Proposition 7]
Let $\cA:\cX^n\rightarrow \cT$ be a test-statistic selection procedure such that $I(\bX;\cA(\bX)) \leq m$ (where $I$ denotes mutual information). If we define $\phi_i = \cA(\bX)$, then for every $\gamma \in [0,1]$:
$$\Prob{}{p_i(\phi_i(\bX)) \leq \gamma} \leq \gamma + \sqrt{\frac{m}{\ln(1/2\gamma)}}.$$
\end{theorem}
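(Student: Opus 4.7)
The plan is to reduce the theorem to a one-dimensional comparison of Bernoulli distributions controlled by mutual information, and then apply the Donsker--Varadhan variational formula together with a sharp sub-Gaussian bound on Bernoullis to obtain the $\log(1/(2\gamma))$ improvement. Write $P$ for the joint distribution of $(\bX, \cA(\bX))$ and $Q$ for the product distribution $\bX \otimes \cA(\bX)$, so that $D_{\mathrm{KL}}(P\|Q) = I(\bX;\cA(\bX)) \leq m$. Under $Q$ the input $\bX$ and the selected statistic $\phi = \cA(\bX)$ are independent, so $p_\phi(\phi(\bX))$ is (super-)uniform on $[0,1]$ by definition of a $p$-value. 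Hence $\Prob{Q}{E} = \gamma$ for the event $E = \{p_\phi(\phi(\bX)) \leq \gamma\}$, and we are left to control $\Prob{P}{E}$.

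First I would apply the Donsker--Varadhan variational representation of KL divergence to the indicator of $E$: for every $\lambda > 0$, $\lambda \Prob{P}{E} - \log \Ex{Q}{e^{\lambda \mathbf{1}_E}} \leq D_{\mathrm{KL}}(P\|Q) \leq m$. Since $\Ex{Q}{e^{\lambda \mathbf{1}_E}} = 1 + \gamma(e^\lambda - 1)$, subtracting $\lambda\gamma$ from both sides yields $\lambda(\Prob{P}{E} - \gamma) \leq m + \psi_\gamma(\lambda)$, where $\psi_\gamma(\lambda) := \log(1 + \gamma(e^\lambda - 1)) - \lambda\gamma$ is the log-MGF of a centered $\mathrm{Bern}(\gamma)$ random variable.

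Next I would invoke the Kearns--Saul inequality, which gives the sharp sub-Gaussian bound $\psi_\gamma(\lambda) \leq \sigma_\gamma^2 \lambda^2 / 2$ with variance proxy $\sigma_\gamma^2 = (1-2\gamma)/(2\log((1-\gamma)/\gamma))$. For $\gamma \in (0,1/2)$ one has $(1-\gamma)/\gamma \geq 1/(2\gamma)$ and $1 - 2\gamma \leq 1$, so $\sigma_\gamma^2 \leq 1/(2\log(1/(2\gamma)))$. Optimizing $(m + \lambda^2 \sigma_\gamma^2/2)/\lambda$ over $\lambda > 0$ at $\lambda^\star = \sqrt{2m/\sigma_\gamma^2}$ gives $\Prob{P}{E} - \gamma \leq \sqrt{2m\sigma_\gamma^2} \leq \sqrt{m/\log(1/(2\gamma))}$, the claimed bound. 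For $\gamma \geq 1/2$ the stated inequality is vacuous, and the theorem holds trivially.

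The main obstacle is the Kearns--Saul MGF bound, which captures the fact that $\mathrm{Bern}(\gamma)$ concentrates more tightly near $0$ than the naive Hoeffding variance proxy $1/4$ would suggest, with the sub-Gaussian parameter decaying like $1/\log(1/\gamma)$ as $\gamma \to 0$. Replacing Kearns--Saul by Hoeffding would only yield the weaker Pinsker-type inequality $\Prob{P}{E} \leq \gamma + \sqrt{m/2}$; the $\gamma$-dependent sub-Gaussian constant is precisely what produces the $\log(1/(2\gamma))$ improvement. An equivalent route is via the data-processing inequality: $d_{\mathrm{KL}}(\mathrm{Bern}(\Prob{P}{E}) \,\|\, \mathrm{Bern}(\gamma)) \leq m$ combined with the Bernoulli KL lower bound dual to Kearns--Saul, $d_{\mathrm{KL}}(\mathrm{Bern}(p)\|\mathrm{Bern}(\gamma)) \geq (p - \gamma)^2 \log(1/(2\gamma))$ for $p \geq \gamma$, which gives the same conclusion by direct rearrangement.
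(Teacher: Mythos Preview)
The paper does not give a proof of this statement at all: it is quoted verbatim as Proposition~7 of Russo and Zou~\cite{RZ15} and used only as a point of comparison for the authors' own max-information based $p$-value correction. So there is no ``paper's own proof'' to benchmark your argument against.

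That said, your proof is correct and is essentially the standard route to this kind of bound. The reduction via $D_{\mathrm{KL}}(P\Vert Q)=I(\bX;\cA(\bX))$ and the data-processing inequality to the binary divergence $d_{\mathrm{KL}}(\mathrm{Bern}(\Prob{P}{E})\Vert\mathrm{Bern}(\gamma))\le m$ is exactly what one expects here; your Donsker--Varadhan + Kearns--Saul presentation is just the dual (MGF) formulation of the same inequality, and the optimization over $\lambda$ recovers the sharp sub-Gaussian constant $\sigma_\gamma^2=(1-2\gamma)/(2\ln((1-\gamma)/\gamma))$ that gives the $\ln(1/(2\gamma))$ dependence rather than the cruder Pinsker/Hoeffding constant. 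One small point: under $Q$ you only have $\Prob{Q}{E}\le\gamma$ (super-uniformity of valid $p$-values), not equality, but since $\psi_q(\lambda)$ is increasing in $q$ for $\lambda>0$ your chain of inequalities still goes through with $\gamma$ as an upper bound. The case $\gamma\ge 1/2$ is indeed vacuous as you note, since the right-hand side exceeds $1$.
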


If we want to set parameters so that the probability of a false discovery is at most $\alpha$, then in particular, we must pick $\gamma$ such that $\sqrt{\frac{m}{\ln(1/2\gamma)}} \leq \alpha$. Equivalently, solving for $\alpha$, the best valid $p$-value correction function implied by the bound of \cite{RZ15} must satisfy:
$$\gamma_{RZ}(\alpha) \leq \min\left\{\frac{\alpha}{2}, \frac{1}{2}\cdot 2^{-\log(e)m/\alpha^2} \right\}.$$

We can obtain a better bound by instead arguing via max-information combining \Cref{lem:mutual-max} with Theorem \ref{thm:maxinfo-pvalues_nonproduct}, we can derive a valid $p$-value correction function given a bound on the mutual information between the data and the selection procedure:

Comparing the $p$-value correction function $\gamma(\alpha)$ derived above, with the function $\gamma_{RZ}(\alpha)$ that arises from \cite{RZ15}, we see that the version above has an exponentially improved dependence on $1/\alpha$.  Moreover, it almost always gives a better correction factor in practice: for any value of $\alpha \leq 0.05$, the function $\gamma(\alpha)$ derived above improves over $\gamma_{RZ}(\alpha)$ whenever the mutual information bound $m \geq 0.05$ (whereas, we would naturally expect the mutual information to be $m \gg 1$, and to scale with $n$).

\section{Rederiving Generalization for Low Sensitivity Queries, and a Comparison with the Bounds of \cite{BNSSSU15}}
\label{app:compare}

In this section, we use the bound from our main theorem (\Cref{thm:main}) to rederive known results for the generalization properties of differentially private algorithms which select \emph{low sensitivity queries}. Our bounds do not exactly -- but nearly -- match the tight bounds for this problem, given in \cite{BNSSSU15}. This implies a limit on the extent to which our main theorem can be quantitatively improved, despite its generality.

The goal of generalization bounds for low sensitivity queries is to argue that with high probability, if a low sensitivity function $f:\cX^n\rightarrow \cR = \cA(\bbx)$ is chosen in a data-dependent way when $\bbx$ is sampled from a product distribution $\cP^n$, then the value of the function on the realized data $f(\bbx)$ is close to its expectation $f(\cP^n) \stackrel{\text{def}}{=} \Ex{\bX \sim \cP^n}{f(\bX)}$. If $f$ were selected in a data-independent manner, this would follow from McDiarmid's inequality. No bound like this is true for arbitrary selection procedures $\cA$, but a tight bound by \cite{BNSSSU15} is known when $\cA$ is $(\epsilon,\delta)$-differentially private -- we have already quoted it, as \Cref{thm:bns}.

Using our main theorem (\Cref{thm:main}), together with McDiarmid's inequality (\Cref{thm:McD}), we can derive a comparable statement to \Cref{thm:bns}:
\begin{theorem}
Let $\epsilon \in (0,1)$, $\delta = O(\epsilon^5)$, and $n = \Omega\left( \frac{\log(\epsilon/\delta)}{\epsilon^2}\right)$. Let $\cY$ denote the class of $\Delta$-sensitive functions $f:\cX^n\rightarrow \R$, and let $\cA: \cX^n \to \cY$ be an algorithm  that is $(\epsilon,\delta)$-differentially private.  Let $\bX \sim \cP^n$ for some distribution $\cP$ over $\cX$, and let $q = \cA(\bX)$.  Then there exists a constant $C$ such that:
$$
\Prob{\bX,\cA}{|q(\cP^n) - q(\bX)| \geq C \epsilon \Delta n} < n\sqrt{\frac{\delta}{\epsilon}}
$$
\label{thm:bns2}
\end{theorem}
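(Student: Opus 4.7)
The plan is to invoke the max-information bound from \Cref{thm:main} together with McDiarmid's inequality (\Cref{thm:McD}), following the general template connecting max-information to generalization explained in \cite{DFHPRR15NIPS}. Intuitively, a max-information bound of $k$ on $\cA$ ensures that any ``bad event'' involving both $\bX$ and $\cA(\bX)$ happens with probability at most $2^k$ times its probability under the corresponding product measure, plus an additive slack $\beta$; and under the product measure the selected function $f = \cA(\bX)$ is independent of $\bX$, so McDiarmid delivers tight concentration of $f(\bX)$ around $f(\cP^n)$.

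Concretely, I would first apply \Cref{thm:main} to obtain $I^\beta_{\infty,P}(\cA, n) \le k$ with $k = O\bigl(\epsilon^2 n + n\sqrt{\delta/\epsilon}\bigr)$ and $\beta = e^{-\epsilon^2 n} + O\bigl(n\sqrt{\delta/\epsilon}\bigr)$. The hypothesis $\delta = O(\epsilon^5)$ gives $n\sqrt{\delta/\epsilon} = O(n\epsilon^2)$, so in fact $k = O(\epsilon^2 n)$. Then define the bad event
\[
\cO = \{(\bbx, f) \in \cX^n \times \cY : |f(\bbx) - f(\cP^n)| \ge C\epsilon\Delta n\},
\]
for a constant $C$ to be fixed. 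Unpacking \Cref{defn:maxinfo} yields
\[
\prob{(\bX, \cA(\bX)) \in \cO} \le 2^k \cdot \prob{\bX \otimes \cA(\bX) \in \cO} + \beta.
\]
Under the product measure, $f = \cA(\bX)$ is independent of $\bX$, so conditioning on each fixed $f \in \cY$ and invoking McDiarmid on the $\Delta$-sensitive function $f$ yields $\Prob{\bX \sim \cP^n}{|f(\bX) - f(\cP^n)| \ge C\epsilon\Delta n} \le 2\exp(-2C^2\epsilon^2 n)$. Since this bound is uniform in $f$, averaging over the marginal of $\cA(\bX)$ upper-bounds $\prob{\bX \otimes \cA(\bX) \in \cO}$ by the same quantity.

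Combining the two ingredients gives
\[
\prob{(\bX, \cA(\bX)) \in \cO} \le 2^{k+1}\exp\bigl(-2C^2\epsilon^2 n\bigr) + \beta.
\]
The main technical point is a parameter balance: since both $k\ln 2$ and the McDiarmid exponent scale as $\epsilon^2 n$, the factor $2^{k+1}$ could \emph{a priori} swamp the Gaussian-type decay. However, because the two contributions share the same $\epsilon^2 n$ scaling, it suffices to choose $C$ larger than an absolute constant determined by the hidden constant in the bound on $k$; then $2^{k+1}\exp(-2C^2\epsilon^2 n) \le \beta$, say. Finally, the assumption $n = \Omega(\log(\epsilon/\delta)/\epsilon^2)$ forces $e^{-\epsilon^2 n} \le \delta/\epsilon \le n\sqrt{\delta/\epsilon}$, so $\beta = O\bigl(n\sqrt{\delta/\epsilon}\bigr)$, matching the target failure probability up to the constant absorbed into $C$. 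The only real obstacle is this constant book-keeping; no new conceptual ingredient beyond \Cref{thm:main} and McDiarmid's inequality is required.
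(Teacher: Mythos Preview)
Your proposal is correct and follows essentially the same approach as the paper: apply \Cref{thm:main} to obtain a max-information bound with $k=O(\epsilon^2 n)$ and $\beta=O(n\sqrt{\delta/\epsilon})$, then combine the defining inequality of approximate max-information with McDiarmid's inequality on the bad event, and finally choose $C$ large enough to absorb the $2^k$ factor into the exponential decay. You spell out the parameter balancing (why $\delta=O(\epsilon^5)$ forces $k=O(\epsilon^2 n)$ and why $n=\Omega(\log(\epsilon/\delta)/\epsilon^2)$ kills the $e^{-\epsilon^2 n}$ term) more explicitly than the paper does, but the argument is the same.
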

\begin{proof}

If $\cA$ satisfied $I_\infty^{\beta}(\bX; \cA(\bX)) \leq k$, then McDiarmid's inequality (\Cref{thm:McD}) paired with \Cref{defn:maxinfo} would imply:
$$
\Prob{\bX,\cA}{|q(\cP^n) - q(\bX)| \geq C \epsilon \Delta n} < 2^{k} \exp\left( -2 C^2 \epsilon^2 n  \right) + \beta
$$
Because $\cA$ is $(\epsilon,\delta)$-differentially private, \Cref{thm:main} implies that indeed $I_\infty^{\beta}(\bX; \cA(\bX)) \leq k$ for
 $k = O\left(n \eps^2  + n \sqrt{\frac{\delta}{\eps}} \right) = O\left( \epsilon^2 n \right)$ and $\beta = O\left( n \sqrt{\frac{\delta}{\epsilon}}\right) + e^{-\epsilon^2 n} = O\left( n \sqrt{\frac{\delta}{\epsilon}}\right)$, and the claimed bound follows.
\end{proof}

Because \Cref{thm:bns} is asymptotically tight, this implies a limit on the extent to which \Cref{thm:main} can be quantitatively improved.

For comparison, note that the generalization bound for $(\epsilon,\delta)$-differentially private mechanisms given in \Cref{thm:bns} differs by only constants from the generalization bound proven via the max-information approach for $(\epsilon,\delta')$-differentially private mechanisms, where:
$$\delta' = \frac{\delta^2}{\epsilon n^2}$$

Note that in most applications (including the best known mechanism for answering large numbers of low sensitivity queries privately--- the median mechanism \cite{RR10} as analyzed in \cite{DR14} Theorem 5.10), the accuracy of a differentially private algorithm scales with $\sqrt{\frac{\log(1/\delta)}{n}}$ (ignoring other relevant parameters). In such cases, using the bound derived from the max-information approach yields an accuracy that is worse than the bound from \Cref{thm:bns} by an additive term that is $O\left(\sqrt{\frac{\log(\epsilon n)}{n}}\right)$.

\section{Omitted Proofs}
\begin{proof}[Proof of \Cref{cor:nonproduct}]
We use the same algorithms $\cA$ and $\cB$ from \Cref{thm: naive}.  Suppose that for all $\bba \in \{0,1 \}^r$ and $0<\beta_2 < 1/2 - \delta - \beta_1$ for any $\beta_1\in (0,1/2-\delta)$, we have:
$$
I_\infty^{\beta_2}(\cB(\cdot, \bba), n) < n - 1 - r - \log(1/\beta_1) . 
$$
Note that because  $\cA$ has bounded description length $r$, we can bound $I_\infty^{\beta_1}(\cA,n) \leq r + \log(1/\beta_1)$ for any $\beta_1>0$ \cite{DFHPRR15NIPS}.  We then apply the composition theorem for max-information mechanisms, \Cref{lem:maxinfocomp}, to obtain:
$$
I_\infty^{\beta_2+\beta_1}(\cB \circ \cA,n) < n - 1.
$$
However, this contradicts \Cref{thm: naive}, because for any $\beta< 1/2 - \delta$,
$$
I_\infty^{\beta}(\cB \circ \cA,n) \geq I_{\infty,P}^\beta(\cB\circ\cA,n) \geq n-1.
$$
Thus, we know that there exists some $\bba^* \in \{ 0,1\}^r$ and (non-product) distribution $\bX \sim \cS$ such that:
$$
I_\infty^{\beta_2}(\bX;\cB(\bX, \bba^*) \geq n - 1 - r - \log(1/\beta_1).
$$
We then define $\cC: \cX^n \to \cY$ to be $\cC(\bbx) = \cB(\bbx,\bba^*)$.  Hence,
$$I_\infty^{\beta_2}(\cC,n) \geq I_\infty^{\beta_2}(\bX;\cC(\bX)) \geq n - 1 - r - \log(1/\beta_1)$$
which completes the proof.
\end{proof}

\fi
\end{document}